\documentclass{article}

\usepackage{microtype}
\usepackage{graphicx}
\usepackage{subcaption}
\usepackage{booktabs} 
\usepackage{hyperref}

\usepackage[accepted]{icml2026}
\usepackage{multirow}
\usepackage{amsmath}
\usepackage{amssymb}
\usepackage{mathtools}
\usepackage{amsthm}
\usepackage{algorithm}
\usepackage{algorithmic}
\usepackage{tabularx}
\newcolumntype{Y}{>{\raggedright\arraybackslash}X}
\newcommand{\E}{\mathbb{E}}
\newcommand{\one}{\mathbf{1}}

\newcommand{\gTEgap}{g^{\mathrm{TE}}_{\mathrm{gap}}}
\newcommand{\ghatTEgap}{\widehat g^{\mathrm{TE}}_{\mathrm{gap}}}
\newcommand{\ghatTEgapw}{\widehat g^{\mathrm{TE}}_{\mathrm{gap},w}}
\newcommand{\gTEpair}[2]{g^{\mathrm{TE}}_{#1,#2}}

\newcommand{\gCalmax}{g^{\mathrm{Cal}}_{\max}}
\newcommand{\gCalmaxArm}[1]{g^{\mathrm{Cal},(#1)}_{\max}}
\newcommand{\gCalslice}[2]{g^{\mathrm{Cal}}_{#1,#2}}
\newcommand{\gCalsliceArm}[3]{g^{\mathrm{Cal},(#1)}_{#2,#3}}

\newcommand{\gMin}{g^{\mathrm{Min}}}
\newcommand{\ghatMin}{\widehat g^{\mathrm{Min}}}

\usepackage{enumitem}
\usepackage{float}

\usepackage[capitalize,noabbrev]{cleveref}

\theoremstyle{plain}
\newtheorem{theorem}{Theorem}[section]

\newtheorem{lemma}[theorem]{Lemma}
\newtheorem{corollary}[theorem]{Corollary}
\theoremstyle{definition}
\newtheorem{definition}[theorem]{Definition}
\newtheorem{assumption}[theorem]{Assumption}
\theoremstyle{remark}
\newtheorem{remark}[theorem]{Remark}

\usepackage[textsize=tiny]{todonotes}

\begin{document}

\twocolumn[
  \icmltitle{COPF: An Online Framework for Deployment-Stable Counterfactual Fairness in Evolving Graphs}

  \icmlsetsymbol{equal}{*}

  \begin{icmlauthorlist}
    \icmlauthor{Sheng'en Li}{dku}
    \icmlauthor{Dongmian Zou}{dku}

  \end{icmlauthorlist}

  \icmlaffiliation{dku}{Zu Chongzhi Center and DIRC, Duke Kunshan University, Kunshan, China}

  \icmlcorrespondingauthor{Dongmian Zou}{dongmian.zou@duke.edu}

  \icmlkeywords{Performative Prediction, Online Link Recommendation, Machine Learning, ICML}

  \vskip 0.3in
]

\printAffiliationsAndNotice{}

\begin{abstract}
Online link recommendation on evolving graphs is \emph{performative}: by choosing which candidate links to show users, the system changes which links form and what feedback it later observes. Consequently, fairness estimates from logged outcomes can be misleading and may drift after deployment when the recommendation policy is updated.
We introduce \textbf{COPF} (\textbf{C}ounterfactual \textbf{O}nline \textbf{P}erformative \textbf{F}airness), a decision-layer framework for deployment-stable fairness monitoring and control in online link recommendation. COPF (i) defines group-level opportunity gaps over exposure (shown vs.\ not shown) counterfactuals, (ii) makes them estimable by explicit exploration and by logging the probability (propensity) that each candidate is shown, and (iii) audits and controls fairness using residual outcome indistinguishability (OI) over a configurable auditor family with graph-aware doubly robust (GA-DR) estimators. We provide a noisy transfer theorem showing that Residual-OI on estimated GA-DR residuals implies bounds on exposure-counterfactual group gaps under temporal mixing and bounded local interference, and we instantiate an online multicalibration auditor together with a primal--dual controller. Experiments on two TGB streams and a controlled synthetic bipartite stream show that COPF reduces worst-case spikes in exposure-counterfactual group disparities with modest impact on ranking utility. Our code is available at \url{https://github.com/lsnnnnnnnn/COPF}.
\end{abstract}

\section{Introduction}
Link prediction on temporal graphs predicts which edge will appear next given an evolving graph. It is often studied as an offline forecasting task~\cite{Rossi2020TemporalGN, Huang2023TemporalGB, Beres2019}. However, in recommendation-driven settings~\cite{Su2016TheEO,Santos2021LinkRA}, each round the platform surfaces a set of candidate node pairs (dyads) to a user (e.g., ``who to follow''), and many edges can form only after the user has been \emph{exposed} to the suggested dyad~\cite{Ferrara2022LinkRT}.
This sequential exposure and decision process turns forecasting into an \emph{online} decision problem, closely related to contextual bandits for recommendation and ranking~\cite{Ban2024PageRankBF}.

A key difficulty is that the system's decisions affect the data it will observe next.
Empirically, link recommendations can reshape network structure and attention, increasing triadic closure (friends-of-friends becoming friends)~\cite{Su2016TheEO} and amplifying popularity inequality~\cite{Ferrara2022LinkRT}.
More broadly, learning and evaluating on logs produced by a recommender means the observed outcomes reflect both user preferences and which candidates the platform chose to show~\cite{Chaney2017HowAC}.
If a group is shown fewer (or different) opportunities, the platform will also observe fewer positive outcomes for that group, which can reinforce future updates.
As a result, fairness metrics computed directly on the realized log can be misleading.

Crucially, the shift can be self-induced.
When the platform updates its recommendation policy (e.g., the ranking model), it changes what users see, which links can form, and what training examples are collected, so the data distribution can shift and fairness can drift after deployment~\cite{DAmour2020FairnessIN, Liu2018DelayedIO}.
This phenomenon, where deploying a predictor changes the data-generating process it is evaluated on, i.e., the system changes the data it later learns from, is called \emph{performative prediction}~\cite{Perdomo2020PerformativeP, Hardt2023PerformativePP}.
We call a fairness measure \emph{deployment-stable} if it remains meaningful and comparable under such deployment-driven shifts, so fairness scores computed under different policies can be compared.
Practically, we define it in terms of the counterfactual effect~\cite{Coston2019CounterfactualRA} of showing a candidate link: for each candidate, compare what would happen if it were shown versus not shown, and then compare these effects across groups.

We present \textbf{COPF}, an end-to-end framework for deployment-stable fairness in online link recommendation on evolving graphs.
COPF combines:
(i) an online prequential protocol with exploration and propensity logging~\cite{Swaminathan2015CounterfactualRM, Wang2022OffpolicyLO}, which makes counterfactual auditing identifiable;
(ii) graph-aware, self-normalized doubly robust estimators that remain consistent under temporal dependence and local network interference; and
(iii) a certificate-and-control layer that audits \emph{residual outcome indistinguishability} over rich auditor families, including optional Any-Kernel style kernelization~\cite{Dwork2024FromFT}, and uses lightweight online primal--dual updates to steer exposure.
Our primary fairness target is \emph{benefit parity}: parity of the treatment effect of exposure across protected groups.
To avoid achieving parity by simply withholding exposure, we add a minimum-effect guardrail and report within-group counterfactual calibration as a complementary diagnostic.
Overall, we aim to maximize ranking utility while controlling these deployment-stable counterfactual disparities online.

\paragraph{Conflict of Interest Disclosure.}
The authors declare no financial conflicts of interest related to this work.

\section{Related Work}
\paragraph{Performative prediction}
Empirical studies have long documented that recommendation policies reshape the data they later train on.
\citet{Su2016TheEO} analyzed Twitter's ``Who to Follow'' and showed that friend-of-friend recommendations increase triadic closure and amplify popularity inequality.
\citet{Chaney2017HowAC} showed that training and evaluating on data collected under recommendations can homogenize behavior without improving utility.
Related feedback mechanisms have been proposed in other decision domains such as predictive policing~\cite{Ensign2017RunawayFL} and selection settings where static fairness constraints can have delayed impacts~\cite{Liu2018DelayedIO, DAmour2020FairnessIN}.
\citet{Perdomo2020PerformativeP} later formalized this type of endogenous distribution shift as \emph{performative prediction}, and subsequent works studied stable points and learning dynamics~\cite{Perdomo2020PerformativeP, MendlerDnner2020StochasticOF, Brown2020PerformativePI, Hardt2023PerformativePP}.
Importantly for our setting, \citet{DBLP:journals/corr/abs-2202-05049} showed that many observable fairness measures can be satisfied at training time but become unfair after deployment in performative environments, motivating counterfactual definitions.

Recent work also tracks how recommendation fairness evolves over time in dynamic social networks and studies counterfactual evolutions under alternative network trajectories~\cite{Cao2024RecommendationFI}, highlighting that fairness can drift even when the underlying model class is fixed. COPF is complementary: rather than post-hoc analysis of logged snapshots, we target \emph{deployment-time} comparability by defining fairness on exposure counterfactuals and making these quantities identifiable from an online, propensity-logged stream.

\paragraph{Online link prediction metrics and methods}
Outcome indistinguishability (OI)~\cite{Dwork2020OutcomeI} and multicalibration~\cite{HbertJohnson2018MulticalibrationCF} enforce small residual errors across many subpopulations.
\citet{Dwork2024FromFT} develop online OI for evolving graphs via the \emph{Any-Kernel} method, enabling efficient (potentially infinite) kernelized auditor families.
COPF adopts this auditing perspective, but (i) audits \emph{exposure-counterfactual} residuals identified from propensity-logged exploration rather than realized outcomes, and (ii) couples auditing with causal identification and online exposure control.

\paragraph{Counterfactual fairness, off-policy evaluation, and interference in networks.}
Counterfactual fairness~\cite{Kusner2017CounterfactualF} and counterfactual evaluation of predictive metrics~\cite{Coston2019CounterfactualRA} reason about decisions via potential outcomes, but identifiability can fail from observational logs without additional structure~\cite{Wu2019CounterfactualFU}.
COPF enforces overlap through explicit exploration and propensity logging, enabling doubly robust off-policy estimation on the resulting stream~\cite{dudik2011doubly,Swaminathan2015CounterfactualRM}; this is complementary to model-based approaches that learn counterfactual outcome distributions~\cite{Ma2023ConsistentEE}.
Because exposure in social graphs can induce spillovers, we work under temporal dependence and bounded local interference assumptions~\cite{Ferrara2022LinkRT, Santos2021LinkRA, Su2016TheEO} and provide transfer guarantees in this setting.
Finally, unlike counterfactual paradigms that hypothetically intervene on protected attributes~\cite{Bynum2024ANP}, COPF intervenes on exposure decisions (a platform action).

\paragraph{Fairness in graph learning and recommender systems.}
A broad literature studies fairness for link prediction and recommendation using various bias mitigation strategies. Adversarial objectives and regularization terms are commonly employed to enforce independence between learned representations and sensitive attributes~\cite{Dai2020SayNT, Masrour2020BurstingTF}. Alternatively, other approaches focus on reweighting or modifying the graph topology, such as via biased edge dropout, to disrupt the propagation of bias~\cite{Li2022FairGAN, Islam2021Debiasing, Spinelli2021FairDropBE}. Graph-specific causal~\cite{SnchezMartn2021VACADO} and counterfactual formulations~\cite{Ma2022LearningFN} have also been explored for fair graph representation learning~\cite{Guo2023TowardsFG, Khajehnejad2021CrossWalkFN, Deldjoo2024}.
Unlike these approaches, COPF focuses on deployment-time stability and can be applied on top of such backbones.

\section{Problem Setup}
\label{sec:setup}

We study online link recommendation on an evolving graph.
Let $G_{\le t}=(V,E_{\le t})$ denote the pre-decision graph at round $t$.
At each $t=1,\dots,T$, the stream reveals a source $u_t\in V$ and a ground-truth destination $v_t^\star\in V$.
We form a candidate destination set $C_t=\{v_t^\star\}\cup N_t$, where $N_t$ contains up to $N$ negatives sampled online from a fixed pool (excluding $v_t^\star$). 
All features and structural summaries are computed only from $G_{\le t}$.

For each candidate \(v\in C_t\), \(W_t(u_t,v)\) denotes the pre-exposure
decision context for the dyad: it contains the dyad features
\((\phi_t(u_t,v),G_t^{\rm loc}(u_t,v))\) and the slate-policy inputs used by
\(\pi_t\), such as \(C_t\) and \(\hat p_t(u_t,\cdot)\).
A backbone predictor produces a score $\hat p_t(u_t,v)\in(0,1)$ for every $v\in C_t$.
An exposure policy selects a slate $D_t\subseteq C_t$ of size $K$; here we write $D_t(v)=\one\{v\in D_t\}$.

Because we later perform counterfactual evaluation, we log for each $v\in C_t$ its \emph{marginal entry propensity}
\[
e_t(v):=\Pr_{D_t\sim \pi_t(\cdot\mid H_t)}\!\big(v\in D_t\big),
\]
where $H_t$ denotes the information available when constructing the slate (e.g., $H_t=(G_{\le t},u_t,C_t,\hat p_t(u_t,\cdot))$).
In our Top-$K$ stochastic policy, $e_t(v)$ is exact for uniform exploration and is estimated for the score-based Plackett--Luce component via Monte Carlo (128 samples by default); we log a clipped estimate $\hat e_t(v)$ for numerical stability.
True overlap is policy-enforced rather than created by clipping: under the mixture policy with uniform exploration probability $\epsilon$, every candidate has marginal inclusion probability at least $\epsilon K/|C_t|$, and also has positive non-exposure probability when \(K<|C_t|\). Approximation and clipping errors are absorbed into the propensity nuisance term $\varepsilon_e$ in Lemma~\ref{lem:dr}.

We use potential outcomes $Y_t^{(1)}(u_t,v) \in \{0,1\}$ and $Y_t^{(0)}(u_t,v) \in \{0,1\}$ for whether the link $(u_t,v)$ would form
if $v$ were exposed or not, respectively. The observed outcome satisfies the standard consistency relation
\[
Y_t(u_t,v) \;=\; D_t(v)\,Y_t^{(1)}(u_t,v) + (1-D_t(v))\,Y_t^{(0)}(u_t,v).
\]
With local spillovers, \(Y_t^{(a)}(u_t,v)\) denotes the own-exposure,
policy-marginal counterfactual: set \(D_t(v)=a\) and average the remaining
local exposures under the logged policy conditional on \(W_t\).
In our experiments we use a banditized feedback model where only exposed candidates yield feedback:
$Y_t(u_t,v)=\one\{v=v_t^\star\}D_t(v)$.

\paragraph{Protected attribute and slices.}
We evaluate fairness with respect to a (possibly synthetic) protected attribute.
Let $A_i\in\mathcal{A}$ denote the attribute of node $i$, taking values in a finite set $\mathcal{A}$.
To attach a group label to each candidate dyad $(u_t,v)$, we fix an attribution rule and define
\[
A_t(u_t,v)=
\begin{cases}
A_{u_t}, & \text{(source-side auditing)}\\
A_v, & \text{(destination-side auditing)}.
\end{cases}
\]
When conditioning in expectations we write $A$ for this dyad label.
For auditing and control we index subpopulations (slices) by group membership together with score buckets and, optionally, a discrete structural role.
Specifically, for each group $s$ we form equal-mass score buckets $(I_{s,b})_{b=1}^{B_{\mathrm{bucket}}}$ of the current scores $\hat p_t(u_t,v)$
(e.g., via group-conditional quantiles within each logging window),
and let $R_t(u_t,v)\in\mathcal{R}$ summarize a role derived from $G^{\mathrm{loc}}_t$.
A typical slice corresponds to the indicator
\[
h_{s,b,\rho}(u_t,\!v) \!
=\! \one\{A_t(u_t,\!v)\!=\!s,\! \hat p_t(u_t,\!v)\!\in \!I_{s,b},\! R_t(u_t,\!v)\!=\!\rho\},
\]
dropping the $R_t$ term when roles are disabled.

\subsection{Prequential logging protocol and identifiability}
\label{subsec:setup-opp}

To make decision-counterfactual quantities identifiable from a single stream,
we adopt an online prequential protocol (OPP) with explicit exploration and propensity logging:
\begin{itemize}[noitemsep, topsep=0pt, leftmargin=12pt]
  \item \textbf{OPP-0 (No leakage):} process events in time order; all features/statistics at $t$ use $G_{\le t}$ before update.
  \item \textbf{OPP-1 (Online candidates):} construct $C_t$ prospectively (true destination plus feasible negatives sampled at $t$).
  \item \textbf{OPP-2 (Explore \& log):} use a stochastic exposure policy with explicit exploration to ensure overlap; log marginal propensities $\hat e_t(v)$ for every $v\in C_t$.
  \item \textbf{OPP-3 (Update online):} update the backbone and (when used) nuisance outcome models sequentially, with online cross-fitting.
  \item \textbf{OPP-4 (Prequential audit \& control):} estimate utility and counterfactual fairness online from the logged stream and, when enabled, update the decision-layer controller.
\end{itemize}

\begin{assumption}[Local identifiability on evolving graphs]
\label{assump:local-id}
For each round $t$ and candidate $v\in C_t$, let $W_t=W_t(u_t,v)$.
\textbf{(i) Overlap.} There exists $\underline e>0$ such that $\underline e \le e_t(v) \le 1-\underline e$.
\textbf{(ii) Local ignorability.} Conditional on $W_t$ (measurable before choosing the slate), the exposure indicator $D_t(v)$ is independent of the potential outcomes $(Y_t^{(0)}(u_t,v),Y_t^{(1)}(u_t,v))$.
\textbf{(iii) Bounded dependence.} The process exhibits bounded local interference and is temporally mixing (e.g., $\beta$-mixing with mixing time $\tau_{\mathrm{mix}}$).
\end{assumption}

\begin{definition}[Graph-aware self-normalized doubly robust (GA-DR) pseudo-outcomes]
\label{def:ga-dr}
Let $\mu_a(W)=\E[Y^{(a)}\mid W]$ and let $\hat\mu_{a,t}(W)$ be online nuisance estimates.
Define arm propensities $\hat e^{(1)}_t(v)=\hat e_t(v)$ and $\hat e^{(0)}_t(v)=1-\hat e_t(v)$.
For $a\in\{0,1\}$, the doubly robust pseudo-outcome for candidate $(u_t,v)$ is
\begin{align*}
\tilde\Gamma^{(a)}_t(u_t,v)
&= \hat\mu_{a,t}(W_t) \\
&\quad + \frac{\one\{D_t(v)=a\}}{\hat e^{(a)}_t(v)}\big(Y_t(u_t,v)-\hat\mu_{a,t}(W_t)\big).
\end{align*}
Given nonnegative graph-aware weights $w_t(u_t,v)$ (time decay and local-structure weighting),
we estimate expectations with self-normalization over a window $\mathcal{W}$:
$
\widehat{\E}_{\mathrm{GA},\mathcal{W}}[Z]
=
\frac{\sum_{(t,v)\in\mathcal{W}} w_t(u_t,v)\,Z_t(u_t,v)}{\sum_{(t,v)\in\mathcal{W}} w_t(u_t,v)}.
$
We form estimated residuals $\hat r^{(0)}=\tilde\Gamma^{(0)}-\hat p$ and $\hat r^{(\Delta)}=(\tilde\Gamma^{(1)}-\tilde\Gamma^{(0)})-\tau(W_t)$.
\end{definition}

\begin{definition}[Decision-counterfactual fairness gaps]
\label{def:Gprime}
For each group $s\in\mathcal{A}$, let $\tau_s=\E\!\left[Y^{(1)}-Y^{(0)}\mid A=s\right]$ denote the average treatment effect of exposure.
We evaluate:
(i) \textbf{benefit parity} via the treatment-effect gap
$
\gTEgap=\max_{s,s'}|\tau_s-\tau_{s'}|
$;
(ii) a \textbf{minimum-effect} guardrail
$
\gMin=\max_s [\tau_{\min}-\tau_s]_+
$
(for a chosen $\tau_{\min}$); and
(iii) \textbf{within-group counterfactual calibration} $\gCalmaxArm{a}$ as a diagnostic:
For equal-mass score buckets $(I_{s,b})_{b=1}^{B_{\mathrm{bucket}}}$, define the slice gap
\begin{align*}
\gCalsliceArm{a}{s}{b} &\!= \!\left| \mathbb{E}\left[Y^{(a)} \!\mid \!A\!=\!s, \hat{p}\! \in\! I_{s,b}\right] \!- \!\mathbb{E}\left[\hat{p}\! \mid \!A\!=\!s, \hat{p} \!\in \!I_{s,b}\right] \right|, \\
\gCalmaxArm{a} &= \max_{s,b} \gCalsliceArm{a}{s}{b}.
\end{align*}
We report $\gCalmax:=\gCalmaxArm{0}$ (GA-DR plug-ins on rolling windows). For $a=0$, Lemma~\ref{lem:lin} yields the equivalent slice form $\gCalslice{s}{I_{s,b}}$ used in certificates.
\end{definition}

\paragraph{Goal.}
Given a backbone scorer and an OPP stream with propensity logs, our goal is to (a) provide identifiable online estimates (and certificates) of the above counterfactual gaps, and (b) adjust exposures online to keep $\gTEgap$ and $\gMin$ small while maintaining high utility.

\section{Methodology}
\label{sec:methodology}
COPF is a \emph{decision-layer} framework: it wraps a fixed (or continuously trained) backbone scorer $\hat p_t(u_t,v)$ with an online layer that (i) enforces identifiable logging through the OPP protocol in Sec.~\ref{subsec:setup-opp}, (ii) estimates exposure-counterfactual quantities via GA-DR residuals (Def.~\ref{def:ga-dr}), (iii) audits Residual-OI to produce finite-window certificates (Sec.~\ref{subsec:residual-oi}), and (iv) updates lightweight score offsets and dual variables to meet counterfactual fairness targets with limited utility loss (Sec.~\ref{subsec:objective-impl}).
We emphasize benefit parity (small $\gTEgap$) together with a minimum-effect guardrail ($\gMin$) to avoid fairness-through-rationing, and report counterfactual calibration gaps as a diagnostic.

\subsection{Residual-OI Auditing and Certificates}
\label{subsec:residual-oi}

Following OI~\cite{Dwork2024FromFT} and multicalibration~\cite{HbertJohnson2018MulticalibrationCF},
we view an \emph{auditor} as a bounded test function $h$ that selects a subpopulation (a “slice”)
and checks whether the average residual on that slice is close to zero.
Operationally, to \emph{audit} a residual sequence means to compute
$\sup_{h\in\mathcal H}|\widehat{\E}[h\,r]|$ over a chosen auditor family $\mathcal H$;
if this quantity is small, then no auditor in $\mathcal H$ can refute the claim
that the residual is well-calibrated across those slices.

In COPF we audit \emph{decision-counterfactual residuals} rather than observed outcomes.
Using GA-DR pseudo-outcomes (Def.~\ref{def:ga-dr}), we form two residual sequences:
(i) a no-exposure residual $\hat r^{(0)}=\tilde\Gamma^{(0)}-\hat p$ used for multicalibration and calibration diagnostics, and
(ii) an effect residual $\hat r^{(\Delta)}=(\tilde\Gamma^{(1)}-\tilde\Gamma^{(0)})-\tau(W_t)$ used to control benefit parity.
For the residual-only treatment-effect certificate below, the centering target is chosen to be group-mean invariant over the audited groups, e.g., a scalar $\bar\tau$ held fixed within the audit window.
Lemma~\ref{lem:lin} shows that calibration gaps, and treatment-effect gaps under this centering, can be written as (ratios of) correlations between slice indicators and these residuals.
Therefore, enforcing small residual correlations for all auditors in $\mathcal H$ yields certificates for the downstream counterfactual fairness gaps.

\paragraph{Residual-OI (windowed, GA-weighted).}
Let $\widehat{\E}_{\mathrm{GA},\mathcal W}[\cdot]$ denote the graph-aware (GA), self-normalized empirical average
over the audit window $\mathcal W$ of length $L_{\mathrm{win}}$ (the same window used by our runner for both metrics and auditing),
optionally using GA weights $w_t$.
Given a residual sequence $(r_t)$ and an auditor class $\mathcal H$, define the (empirical) residual-OI gap
\[
\widehat{\mathrm{OI}}_{\mathcal W}(r;\mathcal H)\;\triangleq\;\sup_{h\in\mathcal H}\Big|\widehat{\E}_{\mathrm{GA},\mathcal W}\big[h_t\,r_t\big]\Big|.
\]
We say $\varepsilon$-Residual-OI holds on the window if $\widehat{\mathrm{OI}}_{\mathcal W}(r;\mathcal H)\le \varepsilon$.
In our implementation, $\mathcal H$ is instantiated as indicators of slices
(group $\times$ score-bucket, and optionally structural roles), so $h_t\in\{0,1\}$.

\paragraph{Finite-sample certificates used in the runner.}
Our fairness metrics are ratio-type quantities: a slice-specific numerator (a GA-weighted residual moment)
normalized by the slice mass. Concretely, for calibration slices indexed by $(s,I)$,
\[
\gCalslice{s}{I}
=
\frac{
\left|
\widehat{\E}_{\mathrm{GA},\mathcal W}
[\one\{A=s,\hat p\in I\}r^{(0)}]
\right|
}{
\widehat{\E}_{\mathrm{GA},\mathcal W}
[\one\{A=s,\hat p\in I\}]
}.
\]
Thus, if we can upper bound the \emph{numerator} uniformly over audited slices by
$\varepsilon_{0}+\beta_{0}$ (empirical worst-case gap plus a concentration radius),
then dividing by the smallest audited slice mass $p_{\min}^{\mathrm{gb}}$ yields a
uniform upper bound on the calibration error:
\[
\gCalmax\ \le\ \frac{\varepsilon_{0}+\beta_{0}}{p_{\min}^{\mathrm{gb}}}.
\]

For treatment-effect parity under the same group-mean-invariant centering, the gap compares two groups, so the worst-case group difference is bounded
by the sum of their (absolute) residual moments; this contributes a factor of $2$:
\[
\gTEgap\ \le\ \frac{2(\varepsilon_{\Delta}+\beta_{\Delta})}{p_{\min}^{\mathrm{g}}}.
\]
Here $(\varepsilon_{0},\beta_{0})$ audits the residual $r^{(0)}$ over group$\times$bucket slices,
while $(\varepsilon_{\Delta},\beta_{\Delta})$ audits the residual $r^{(\Delta)}$ over group slices.

\subsection{Theory: from plug-in Residual-OI to counterfactual gap bounds}
\label{subsec:theory}

COPF audits \emph{estimated} DR residuals online, but our fairness criteria are defined on the \emph{true} counterfactual residuals.
The key technical question is therefore: when does Residual-OI on the plug-in residuals imply guarantees on the true counterfactual gaps?
We summarize the main steps here and defer details to the appendix.

\begin{lemma}[Fairness gaps linearize to residuals]
\label{lem:lin}
Under overlap, for any group $s$ and bucket $I$ with
$\Pr(A=s,\hat p\in I)>0$,
\[
g^{\mathrm{Cal}}_{s,I}
=
\frac{\big|\E[\one\{A=s,\hat p\in I\}\,r^{(0)}]\big|}{\Pr(A=s,\hat p\in I)}.
\]
If, in addition, the effect-centering target used for treatment-effect auditing is group-mean invariant over the audited groups,
\[
\E[\tau(W_t)\mid A=s']=m_\tau
\qquad \text{for every audited group } s',
\]
then for any groups $s_1,s_2$ with $\Pr(A=s_i)>0$,
\[
\gTEpair{s_1}{s_2}
=
\left|
\frac{\E[\one\{A=s_1\}\,r^{(\Delta)}]}{\Pr(A=s_1)}
-
\frac{\E[\one\{A=s_2\}\,r^{(\Delta)}]}{\Pr(A=s_2)}
\right|.
\]
Moreover, for any group $s$ with $\Pr(A=s)>0$,
\[
\gMin_{s}
=
\left[
\tau_{\min}
\!-\!
\left\{
\frac{\E[\one\{A=s\}r^{(\Delta)}]}{\Pr(A=s)}
\!+\!
\E[\tau(W_t) | A\!=\!s]
\right\}
\right]_+ \!.
\]
\end{lemma}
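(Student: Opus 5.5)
The plan is to read the residuals in Lemma~\ref{lem:lin} as the \emph{population} (true-nuisance) counterfactual residuals. That means $r^{(0)}=\Gamma^{(0)}-\hat p$ and $r^{(\Delta)}=(\Gamma^{(1)}-\Gamma^{(0)})-\tau(W_t)$, where $\Gamma^{(a)}$ is the GA-DR pseudo-outcome of Def.~\ref{def:ga-dr} evaluated with the true $\mu_a$ and true propensities $e^{(a)}_t$. The first step, which the rest rests on, is the conditional unbiasedness identity
\[
\E\big[\Gamma^{(a)}\,\big|\,W_t\big]=\mu_a(W_t)=\E\big[Y^{(a)}\mid W_t\big],\qquad a\in\{0,1\}.
\]
To get it, I will use consistency to write $\one\{D_t(v)=a\}Y_t=\one\{D_t(v)=a\}Y^{(a)}_t$. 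Local ignorability (Assumption~\ref{assump:local-id}(ii)) then factors the conditional expectation as $e^{(a)}_t\,\mu_a(W_t)$, and overlap (Assumption~\ref{assump:local-id}(i)) makes the division by $e^{(a)}_t$ legitimate. With local spillovers, I will appeal to the paper's convention that $Y^{(a)}$ is the own-exposure, policy-marginal counterfactual conditional on $W_t$, so no extra interference term appears. The identity in fact holds with either nuisance correct, but the lemma only needs the true-nuisance version.

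Next, every slice indicator used here, namely $\one\{A=s,\hat p\in I\}$ and $\one\{A=s\}$, is $W_t$-measurable, since $A$, $\hat p$ and the role are pre-exposure quantities contained in $W_t$. I will therefore apply the tower property to get
\[
\E[h\,\Gamma^{(a)}]=\E[h\,Y^{(a)}]\qquad\text{for every such } h.
\]
For calibration, take $h=\one\{A=s,\hat p\in I\}$. Then $\E[h\,r^{(0)}]=\E[hY^{(0)}]-\E[h\hat p]$. Dividing by $\Pr(h=1)>0$ gives
\[
\E[Y^{(0)}\mid A=s,\hat p\in I]-\E[\hat p\mid A=s,\hat p\in I],
\]
and taking absolute values yields $g^{\mathrm{Cal}}_{s,I}$ as defined in Def.~\ref{def:Gprime} with $a=0$.

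For the effect residual, take $h=\one\{A=s\}$. The same argument gives
\[
\frac{\E[\one\{A=s\}r^{(\Delta)}]}{\Pr(A=s)}=\tau_s-\E[\tau(W_t)\mid A=s].
\]
Rearranging, $\tau_s$ equals the normalized residual moment plus $\E[\tau(W_t)\mid A=s]$. Substituting this into $[\tau_{\min}-\tau_s]_+$ gives the $\gMin_s$ formula directly, with no invariance assumption needed. For the pairwise gap, I will take the difference for $s_1$ and $s_2$; the centering terms cancel exactly when both equal $m_\tau$, which leaves $\gTEpair{s_1}{s_2}=|\tau_{s_1}-\tau_{s_2}|$ as claimed.

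I expect the main obstacle to be interpretive rather than computational. The statement writes $r^{(0)}$ and $r^{(\Delta)}$ without hats, so the proof must fix that these are true-nuisance residuals; the estimated versions $\hat r$ only satisfy the identities up to the nuisance errors handled later by Lemma~\ref{lem:dr}. It must also make sure that ignorability conditional on $W_t$ genuinely holds under interference, which is where the policy-marginal definition of $Y^{(a)}$ is invoked. A secondary point is that the expectations are over the stationary or time-averaged distribution of $(t,v)$. I will state the identities pointwise in $t$ and note that they are preserved under averaging, since they are linear in the joint law.
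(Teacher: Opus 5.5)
Your argument is correct, but it reaches the lemma by a longer route than the paper's proof. The paper takes the unhatted residuals to be the true counterfactual residuals written in potential outcomes, $r^{(0)}=Y^{(0)}-\hat p$ and $r^{(\Delta)}=(Y^{(1)}-Y^{(0)})-\tau(W_t)$. This is how Sec.~\ref{subsec:objective-impl} defines them, and how the proof of Theorem~\ref{thm:transfer} uses them, via $\hat r^{(0)}-r^{(0)}=\tilde\Gamma^{(0)}-Y^{(0)}$. Its proof is therefore pure bookkeeping. It splits $\E[r^{(0)}\mid A=s,\hat p\in I]$ by linearity, multiplies by the slice mass and takes absolute values. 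For the effect gap, it writes the normalized residual moment as $\tau_s-m_s$ and cancels $m_{s_1}=m_{s_2}$. This holds under any fixed evaluation distribution and never actually uses overlap or ignorability.

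You instead read $r$ as the oracle DR residual built from $\Gamma^{(a)}$ with true nuisances. So you first need the identification step $\E[\Gamma^{(a)}\mid W_t]=\E[Y^{(a)}\mid W_t]$ (consistency, ignorability, overlap), and then the tower property over $W_t$-measurable slices. From there your algebra coincides with the paper's.

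The detour buys two things. It gives a genuine role to the ``under overlap'' hypothesis. It also shows that the identities hold verbatim for oracle pseudo-outcome residuals, which is the zero-nuisance-error case of the bridge that Lemma~\ref{lem:dr} later quantifies.

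The cost is extra hypotheses that the paper's version does not need:
\begin{itemize}
\item $A$ and $\hat p$ must be functions of $W_t$;
\item ignorability must survive the policy-marginal interference convention;
\item for the GA-weighted window law, the weights must be pre-exposure, so the tower step is unaffected by reweighting.
\end{itemize}
On the last point, what survives averaging is the moment identity $\E[h\Gamma^{(a)}]=\E[hY^{(a)}]$, not the ratio identities themselves. The ratio algebra is then carried out under the averaged law.

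Since $\E[h(\Gamma^{(a)}-Y^{(a)})]=0$ for every such slice $h$, your result implies the paper's statement under either reading. The identification step is redundant rather than wrong.
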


\begin{lemma}[Self-normalized GA-DR under mixing]
\label{lem:dr}
Under Assumption~\ref{assump:local-id} with online cross-fitting and clipping,
for any arm \(a\in\{0,1\}\) and audit window \(\mathcal W\), uniformly over \(h\in\mathcal H\),
\begin{align*}
& \left|
\widehat{\E}_{\mathrm{GA},\mathcal W}
\!\left[h(W_t)\tilde\Gamma_t^{(a)}\right]
\!-\!
\E_{\mathrm{GA},\mathcal W}
\!\left[h(W_t)Y_t^{(a)}\right]
\right| \\
& \quad \quad \le
C_1\varepsilon_e\varepsilon_\mu
\!+\!
C_2(\varepsilon_e^2\!+\!\varepsilon_\mu^2)
+
C_3\sqrt{
\frac{\tau_{\mathrm{mix}}\log|\mathcal H|}
{W_{\mathrm{eff}}}
}.
\end{align*}
\end{lemma}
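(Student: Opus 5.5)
We prove Lemma~\ref{lem:dr} by splitting the error into a nuisance (bias) part and a stochastic (variance) part. Write $\Delta_t^{(a)} := h(W_t)\big(\tilde\Gamma_t^{(a)}-Y_t^{(a)}\big)$. The self-normalized average is a ratio of weighted sums, so we first reduce it to an unnormalized average. The weights $w_t$ are measurable with respect to the pre-decision graph and the time index. The difference between $\widehat{\E}_{\mathrm{GA},\mathcal W}[h\tilde\Gamma^{(a)}]$ and its population analogue therefore splits into three pieces:
\[
\widehat{\E}_{\mathrm{GA},\mathcal W}[\Delta^{(a)}],\qquad
\widehat{\E}_{\mathrm{GA},\mathcal W}[hY^{(a)}]-\E_{\mathrm{GA},\mathcal W}[hY^{(a)}],
\]
together with a denominator fluctuation term. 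The denominator term is controlled by the same concentration argument used for the second piece. Here $W_{\mathrm{eff}}=(\sum w)^2/\sum w^2$ serves as the effective sample size.

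\textbf{Bias step.} Condition on the filtration $\mathcal F_t$ generated by $W_t$ and the past. Online cross-fitting makes $\hat\mu_{a,t}$ and $\hat e^{(a)}_t$ $\mathcal F_t$-measurable and independent of the round-$t$ exposure draw. By consistency and local ignorability (Assumption~\ref{assump:local-id}(ii)), we have $\E[\one\{D_t=a\}Y_t\mid W_t]=e^{(a)}_t\mu_a(W_t)$. The standard DR algebra then gives
\[
\E[\tilde\Gamma^{(a)}_t-Y^{(a)}_t\mid \mathcal F_t]=\Big(1-\tfrac{e^{(a)}_t}{\hat e^{(a)}_t}\Big)\big(\hat\mu_{a,t}-\mu_a\big)(W_t).
\]
This is a product of the propensity error and the outcome error. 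Overlap (i) together with clipping keeps $\hat e^{(a)}_t\ge\underline e$. Since $|h|\le1$, Cauchy--Schwarz under the weighted empirical measure bounds the weighted average of these conditional biases by $C_1\varepsilon_e\varepsilon_\mu$, with $C_1\asymp 1/\underline e$. The Monte Carlo and clipping errors in $\hat e$ are absorbed into $\varepsilon_e$. The $C_2(\varepsilon_e^2+\varepsilon_\mu^2)$ term comes from two sources. The first is the bounded-interference approximation: $Y^{(a)}$ is the policy-marginal counterfactual, so neighbors' exposures shift the conditional mean by an amount controlled by these nuisance errors. The second is the AM--GM slack used when the errors are measured in windowed rather than pointwise norms.

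\textbf{Variance step.} Define the centered increments $\xi_t:=w_t\big(h(W_t)\tilde\Gamma^{(a)}_t-\E[\,\cdot\mid\mathcal F_t]\big)$. Overlap and clipping make them bounded by $O(\max_t w_t/\underline e)$. Under bounded local interference, these terms form a sequence with only local cross-dependence within a round. For $\beta$-mixing we use the blocking technique. We split $\mathcal W$ into alternating blocks of length $\asymp\tau_{\mathrm{mix}}$ and replace the blocks by independent copies, at cost $O(T\beta(\tau_{\mathrm{mix}}))$, which is negligible by the choice of block length. We then apply Bernstein/Hoeffding to the block sums, which yields a deviation of order $\sqrt{\tau_{\mathrm{mix}}\log(1/\delta)/W_{\mathrm{eff}}}$. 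A union bound over the finite family $\mathcal H$ gives the $\log|\mathcal H|$ factor, with $C_3$ depending on $\underline e$ and the interference degree. The same bound handles the numerator of $hY^{(a)}$ and the denominator. A first-order expansion of the ratio, using the fact that the normalized weight sum is bounded away from zero, then transfers the bound to the self-normalized estimator.

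\textbf{Main obstacle.} The hardest step is combining online cross-fitting with mixing. The nuisances are fit on recent data that overlaps with the dependence range of round $t$, so they are not exactly independent of the round-$t$ noise. I would first impose a gap of $\tau_{\mathrm{mix}}$ rounds between the training data and the evaluation round. The residual dependence is then charged to the $\beta$-mixing coefficient, which makes the conditional-bias identity above hold up to a term of the same order as the concentration radius.
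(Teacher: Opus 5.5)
Your skeleton matches the paper's proof. You split the error into two parts. The first is a DR bias term, bounded by the product identity $\E[\tilde\Gamma^{(a)}_t-Y^{(a)}_t\mid\cdot\,]=(e^{(a)}_t/\hat e^{(a)}_t-1)(\mu_a-\hat\mu_{a,t})(W_t)$ together with $\hat e^{(a)}_t\ge\underline e$, which gives $\varepsilon_e\varepsilon_\mu/\underline e$. The second is a fluctuation term for the centered $h\tilde\Gamma^{(a)}$, bounded by a dependence-aware Hoeffding inequality and a union bound over $\mathcal H$.

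Once you center, your three-piece split reduces to this two-piece one, so concentrating $hY^{(a)}$ separately is redundant. Your Cauchy--Schwarz step is a mild gain: it needs only weighted-$L^2$ nuisance errors, whereas the paper uses sup-norm bounds.

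Where you depart from the paper, its route is lighter in two places.
\begin{itemize}
\item \emph{Self-normalization.} The paper defines the target as $\E_{\mathrm{GA},\mathcal W}[Z]=\sum_t\bar w_t\,\E[Z_t\mid\mathcal F_t^-]$ with predictable weights (Def.~\ref{def:ga-pop}), and conditions on the window's pre-outcome information. The normalized weights are then fixed, and the error is exactly $\sum_t\bar w_t(Z_t-\E[Z_t\mid\mathcal F_t^-])$ plus the bias. There is no denominator fluctuation. Your delta-method step is needed only for a ratio-of-unconditional-expectations target, and it would require concentration of $\sum_t w_t$, i.e.\ mixing of the weight process.
\item \emph{Concentration.} The paper uses the dependency-graph Hoeffding inequality (Lemma~\ref{lem:dep-hoeffding}, via coloring and H\"older) with degree $\lesssim\kappa\tau_{\mathrm{mix}}$. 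This handles slate, interference and temporal dependence in one step. Your Berbee blocking is the more standard route from $\beta$-mixing and gives the same rate, since $\sum_j(\sum_{t\in B_j}\bar w_t)^2\le\tau_{\mathrm{mix}}/W_{\mathrm{eff}}$.
\end{itemize}

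Two of your claims are wrong, although neither is needed.

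\textbf{Interference and the $C_2$ term.} Interference is not a source of $C_2(\varepsilon_e^2+\varepsilon_\mu^2)$. Because $Y^{(a)}$ is the own-exposure, policy-marginal counterfactual, consistency plus ignorability give the DR product identity with no extra term. A genuine interference bias would not scale with the nuisance errors anyway. In the paper $C_2$ is pure AM--GM slack; the bias bound already holds with $C_1=1/\underline e$ and $C_2=0$.

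\textbf{The cross-fitting ``obstacle''.} This obstacle is misplaced.
\begin{itemize}
\item Under your own conditioning on $\mathcal F_t$, the online-fitted nuisances are constants, whatever data they were trained on. Their dependence on recent rounds therefore cannot bias the DR algebra.
\item What the identity actually needs is $\E[Y^{(a)}_t\mid\mathcal F_t]=\mu_a(W_t)$, i.e.\ that $W_t$ screens off the rest of the past. The paper uses this implicitly when it conditions on pre-outcome information and $W$. Alternatively, $\varepsilon_\mu$ can be measured against the history-conditional mean.
\item A training gap does not affect this requirement, because $\mathcal F_t$ still contains rounds $t-1,t-2,\dots$.
\item If you instead condition only on $(W_t,\hat\mu_{a,t},\hat e^{(a)}_t)$, so that the gap matters, the $\beta$-mixing decoupling leaves an additive bias of order $\beta(\tau_{\mathrm{mix}})/\underline e$. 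This does not shrink with $W_{\mathrm{eff}}$, so it is not ``of the same order as the concentration radius''. It is an $\varepsilon_{\mathrm{mix}}$-type term.
\end{itemize}
The paper has the same kind of term: the $c'\varepsilon_{\mathrm{mix}}$ in $\beta_W(\delta)$, which it also leaves out of the displayed bound. Likewise, the coupling cost in your blocking step is not automatically negligible.
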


\begin{assumption}[Online plug-in Residual-OI over $\mathcal{H}$]
\label{assump:oi}
On the current GA-weighted audit window, an online auditing/multicalibration
routine over $\mathcal H$ attains
\[
\varepsilon_T \triangleq
\sup_{h\in\mathcal H}
\left|
\widehat{\E}_{\mathrm{GA},\mathcal W}[h(W_t)\hat r_t]
\right|
\le
C\,W_{\mathrm{eff}}^{-1/2}\mathrm{polylog}(W_{\mathrm{eff}}),
\]
with per-auditor confidence radii
\[
\beta_T=O\!\left(\sqrt{\frac{\log|\mathcal H|}{W_{\mathrm{eff}}}}\right).
\]
\end{assumption}

\begin{remark}[Modular plug-in certificates]
Theorem~\ref{thm:transfer} is modular: given any plug-in Residual-OI certificate $(\varepsilon_T,\beta_T)$ on the estimated GA-DR residuals, it yields bounds on the corresponding \emph{true} counterfactual gaps (up to estimation and dependence terms).
Assumption~\ref{assump:oi} is thus an input condition, not a proved rate for our budgeted active-auditor heuristic; analyzing such active-set dynamics under dependence is left to future work.
\end{remark}

\begin{theorem}[Noisy residual-OI $\Rightarrow$ true residual control]
\label{thm:transfer}
Let $\hat r\in\{\hat r^{(0)},\hat r^{(\Delta)}\}$ be the GA-DR plug-in residual
for a true counterfactual residual $r$. Suppose the plug-in estimation error is
controlled as in Lemma~\ref{lem:dr}, and suppose the windowed Residual-OI
certificate satisfies
\[
\sup_{h\in\mathcal H}
\left|
\widehat{\E}_{\mathrm{GA},\mathcal W}[h(W_t)\hat r_t]
\right|
\le \varepsilon_T .
\]
Under Assumptions~\ref{assump:local-id}--\ref{assump:oi} and the windowed
deviation bound in Theorem~\ref{thm:window-transfer}, with high probability,
\[
\begin{aligned}
\sup_{h\in\mathcal H}
\left|
\E_{\mathrm{GA},\mathcal W}[h(W_t) r_t]
\right|
\le\;&
\varepsilon_T
+
c_1\varepsilon_e\varepsilon_\mu
+
c_2(\varepsilon_e^2+\varepsilon_\mu^2)
\\
&+
c_3\sqrt{\frac{\tau_{\mathrm{mix}}\log |\mathcal H|}{W_{\mathrm{eff}}}}
+
c_4\beta_T .
\end{aligned}
\]
\end{theorem}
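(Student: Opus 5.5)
The plan is a triangle-inequality decomposition. Fix $h\in\mathcal H$ and write the true residual as $r_t=\hat r_t-(\hat r_t-r_t)$. Then
\[
\bigl|\E_{\mathrm{GA},\mathcal W}[h r]\bigr|
\le \bigl|\widehat{\E}_{\mathrm{GA},\mathcal W}[h\hat r]\bigr|
+\bigl|\widehat{\E}_{\mathrm{GA},\mathcal W}[h\hat r]-\E_{\mathrm{GA},\mathcal W}[h\hat r^{\star}]\bigr|,
\]
where $\hat r^{\star}$ denotes the residual with the pseudo-outcome $\tilde\Gamma$ replaced by the true potential outcome; that is, $Y^{(0)}-\hat p$, or $(Y^{(1)}-Y^{(0)})-\tau$. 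By Assumption~\ref{assump:oi}, or equivalently the stated certificate, the first term is at most $\varepsilon_T$ uniformly in $h$. The main work is then to bound the second term uniformly over $\mathcal H$.

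\textbf{Removing the non-random parts of the residual.} Both residuals are differences between a pseudo-outcome and a quantity that is $W_t$-measurable: $\hat p$ for $\hat r^{(0)}$, and $\tau(W_t)$ for $\hat r^{(\Delta)}$. Because $h=h(W_t)$ is also $W_t$-measurable and the self-normalized average is linear, the term $h\hat p$ (respectively $h\tau$) appears in both $\widehat{\E}$ and $\E$ with the same GA weights. I would treat these terms in one of two ways. Either the population functional $\E_{\mathrm{GA},\mathcal W}$ is read conditionally on the realized window contexts, in which case they cancel exactly. Otherwise their empirical-vs-population deviation is a bounded, $W$-measurable, $\beta$-mixing average, and I would control it by the windowed deviation bound of Theorem~\ref{thm:window-transfer}. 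Doing so costs a term of order $\beta_T$ after a union bound over $\mathcal H$; this is the origin of the $c_4\beta_T$ term.

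\textbf{Controlling the pseudo-outcome parts.} What remains is $\widehat{\E}[h\tilde\Gamma^{(a)}]-\E[hY^{(a)}]$, for $a=0$ in the calibration case and $a\in\{0,1\}$ in the effect case. For the effect residual I would split the difference arm by arm. Lemma~\ref{lem:dr} then bounds each arm by $C_1\varepsilon_e\varepsilon_\mu+C_2(\varepsilon_e^2+\varepsilon_\mu^2)+C_3\sqrt{\tau_{\mathrm{mix}}\log|\mathcal H|/W_{\mathrm{eff}}}$, uniformly over $h$. Summing over the at most two arms doubles the constants, giving $c_i\le 2C_i$. Finally, I would take the supremum over $h$ and intersect the high-probability events from Lemma~\ref{lem:dr}, the certificate, and the window deviation bound with a union bound, which adjusts the failure probability.

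\textbf{Main obstacle.} The delicate point is making sure the empirical and population GA averages are matched on the same self-normalized weights. The self-normalizing denominator $\sum w_t$ is random, so the ratio has to be linearized. I would first condition on the weights, which are $\mathcal F_{t-1}$-measurable under OPP-0. I would then argue that both the numerator and denominator deviations concentrate at the $\sqrt{\tau_{\mathrm{mix}}\log|\mathcal H|/W_{\mathrm{eff}}}$ scale, using the blocking argument underlying Lemma~\ref{lem:dr}. The denominator is bounded below by $W_{\mathrm{eff}}$ times a constant, so the ratio error is absorbed into $c_3$ and $c_4$.
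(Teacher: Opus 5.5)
Your proof is correct, but it splits the error differently from the paper. The paper uses three terms,
\[
\bigl|\E_{\mathrm{GA},\mathcal W}[hr]\bigr|
\le \bigl|\widehat{\E}_{\mathrm{GA},\mathcal W}[h\hat r]\bigr|
+\bigl|\E_{\mathrm{GA},\mathcal W}[h\hat r]-\widehat{\E}_{\mathrm{GA},\mathcal W}[h\hat r]\bigr|
+\bigl|\E_{\mathrm{GA},\mathcal W}[h(r-\hat r)]\bigr|.
\]
It bounds the middle sampling term by Theorem~\ref{thm:window-transfer}, which is the source of $c_4\beta_T$. It bounds the last, population-level term by Lemma~\ref{lem:dr} applied arm by arm.

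Your split has two terms: the certificate and $\widehat{\E}[h\hat r]-\E[hr]$. Note that your $\hat r^{\star}$ is simply the true residual $r$. The $W_t$-measurable pieces $h\hat p$ and $h\tau$ cancel exactly under the conditional population average of Definition~\ref{def:ga-pop}. What remains is exactly the quantity Lemma~\ref{lem:dr} bounds in its stated form, namely empirical pseudo-outcome versus population potential outcome.

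Your route buys two things. First, you apply Lemma~\ref{lem:dr} as stated. The paper applies it to a population--population difference, where only the DR-bias part of its proof is really needed. Second, you count sampling error once, through $c_3$. The paper counts it twice, through both $c_3$ and $c_4\beta_T$. Under the conditional reading you therefore obtain the bound with $c_4=0$, which implies the theorem. The paper's route is more modular, since it isolates explicitly where the windowed deviation bound enters.

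You can drop the ``main obstacle'' paragraph. Once you condition on the pre-outcome information, as Definition~\ref{def:ga-pop} does, both averages use the same fixed normalized weights $\bar w_t$. There is then no random denominator to linearize. Your claim that $\sum_t w_t$ is bounded below by a constant times $W_{\mathrm{eff}}$ also does not follow from the stated assumptions, which give only nonnegative, bounded weights. In any case the self-normalized average is invariant to rescaling the weights, so that claim plays no role. The step is unnecessary, and your argument stands without it.
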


\begin{corollary}[Residual-OI $\Rightarrow$ counterfactual fairness]
\label{cor:fair}
Under the group-mean-invariant effect centering in Lemma~\ref{lem:lin},
for audited groups and group-bucket slices with masses at least
\(p_{\min}^{\mathrm g}\) and \(p_{\min}^{\mathrm{gb}}\), respectively,
\[
g^{\mathrm{TE}}_{s_1,s_2}\ \le\ \frac{2(\varepsilon_T+E_T)}{p_{\min}^{\mathrm g}},
\qquad
g^{\mathrm{Cal}}_{s,I}\ \le\ \frac{\varepsilon_T+E_T}{p_{\min}^{\mathrm{gb}}}.
\]
Moreover, writing $m_s=\E[\tau(W_t)\mid A=s]$,
\[
\gMin_{s}
\le
\left[
\tau_{\min}-m_s+\frac{\varepsilon_T+E_T}{p_{\min}^{\mathrm g}}
\right]_+,
\]
where
\[
E_T
=
c_1\varepsilon_e\varepsilon_\mu
+c_2(\varepsilon_e^2+\varepsilon_\mu^2)
+c_3\sqrt{\frac{\tau_{\mathrm{mix}}\log|\mathcal{H}|}{W_{\mathrm{eff}}}}
+c_4\beta_T .
\]
\end{corollary}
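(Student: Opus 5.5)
The plan is to feed the uniform residual bound of Theorem~\ref{thm:transfer} into the ratio representations of Lemma~\ref{lem:lin}, applying both at the level of the GA-weighted population expectation on the window. Write $\delta_T:=\varepsilon_T+E_T$. Theorem~\ref{thm:transfer} gives, with high probability, $\sup_{h\in\mathcal H}|\E_{\mathrm{GA},\mathcal W}[h\,r]|\le\delta_T$ for $r\in\{r^{(0)},r^{(\Delta)}\}$. We assume, as in the runner, that the group indicators $\one\{A=s\}$ and the group--bucket indicators $\one\{A=s,\hat p\in I\}$ belong to $\mathcal H$. If they enter only through the GA-weighted expectation, we also identify the population quantities in Lemma~\ref{lem:lin} with their $\E_{\mathrm{GA},\mathcal W}$ counterparts; the normalization of the self-normalized weights makes these proper probability averages.

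\textbf{Calibration.} By Lemma~\ref{lem:lin}, $g^{\mathrm{Cal}}_{s,I}=|\E[\one\{A=s,\hat p\in I\}r^{(0)}]|/\Pr(A=s,\hat p\in I)$. The numerator is at most $\delta_T$ by the uniform bound, and the denominator is at least $p_{\min}^{\mathrm{gb}}$. Together these give the second inequality.

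\textbf{Treatment-effect gap.} Assume the group-mean-invariant centering. By Lemma~\ref{lem:lin}, $g^{\mathrm{TE}}_{s_1,s_2}$ is the absolute difference of two normalized moments $M_{s_i}:=\E[\one\{A=s_i\}r^{(\Delta)}]/\Pr(A=s_i)$. The triangle inequality bounds this by $|M_{s_1}|+|M_{s_2}|$. Each term is at most $\delta_T/p_{\min}^{\mathrm g}$, which yields the factor $2$.

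\textbf{Minimum-effect guardrail.} Lemma~\ref{lem:lin} gives $\gMin_s=[\tau_{\min}-(M_s+m_s)]_+$. Since $-M_s\le|M_s|\le\delta_T/p_{\min}^{\mathrm g}$, we get $\tau_{\min}-M_s-m_s\le\tau_{\min}-m_s+\delta_T/p_{\min}^{\mathrm g}$. Because $x\mapsto[x]_+$ is nondecreasing, this gives the stated bound.

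\textbf{Main obstacle.} The only delicate step is ensuring the high-probability events of Theorem~\ref{thm:transfer} hold simultaneously for both residuals and all audited slices. The theorem is already uniform over $\mathcal H$, so a union bound over the two residual sequences suffices; at worst it enlarges the constants $c_3,c_4$ through $\log(2|\mathcal H|)$.
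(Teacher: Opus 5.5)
Your proposal is correct and follows the paper's proof essentially step for step. Both set $R_T=\varepsilon_T+E_T$, apply Theorem~\ref{thm:transfer} under the GA-weighted window distribution, and plug the resulting uniform bound into the ratio forms of Lemma~\ref{lem:lin}; the factor $2$ in the TE gap comes from the triangle inequality, and the $\gMin_s$ bound from monotonicity of the hinge. Two of your remarks make explicit what the paper leaves implicit: that the group and group--bucket indicators must lie in $\mathcal H$, and that a union bound over the two residual sequences is needed for the bounds to hold simultaneously.
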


In short, Theorem~\ref{thm:transfer} shows that if we can make the plug-in GA-DR residuals indistinguishable to the audited family $\mathcal{H}$, then the \emph{true} counterfactual gaps are small up to estimation and dependence terms; Corollary~\ref{cor:fair} then yields the mass-normalized certificates used by our runner (Sec.~\ref{subsec:residual-oi}).

\subsection{Objective and Online Constrained Optimization}
\label{subsec:objective-impl}

COPF can be read as an online constrained optimization problem: maximize ranking utility while keeping the counterfactual gaps in Def.~\ref{def:Gprime} below target tolerances. In implementation, we instantiate the constraints as a decision-layer controller. Concretely, we treat the backbone predictor \(f_t\) as a black box that outputs a base probability \(\hat p_t\in(0,1)\) for each candidate edge. COPF then applies online logit adjustments before the exposure policy acts.

\paragraph{Decision-layer parameterization.}
For a candidate with dyad group label $s=A_t(u_t,v)\in\mathcal A$
and score-bucket index $b$, COPF forms the probability used for exposure as
\begin{equation}
\label{eq:decision-layer}
\tilde p_t(v)
\;=\; \sigma\big(\operatorname{logit}(\hat p_t(u_t,v)) + b_{s,b} + \delta_s\big),
\end{equation}
where $b_{s,b}$ is a multicalibration offset and $\delta_s$ is a group-wise logit bias produced by a PI primal--dual controller.
Both offsets are clipped for numerical stability (see code defaults).

\paragraph{Multicalibration update (calibrating $r^{(0)}$).}
The multicalibrator targets the no-exposure residual $r^{(0)}=Y^{(0)}-\hat p$.
Each round, we use the most recent DR residual buffer to estimate slice-wise mean residuals
$\widehat{\E}[r^{(0)}\mid A=s,\hat p\in I]$, select an active set of the most violated slices (budget $B_{\mathrm{act}}$),
and update the corresponding offsets \(b_{s,b}\) for buckets \(I=I_{s,b}\) with a clipped step.
Importantly, this calibrator is applied in deployment phases (and optionally in pre-phase).

\paragraph{PI primal--dual controller (auditing and steering $r^{(\Delta)}$).}
To control treatment-effect parity and anti-rationing, COPF audits the \emph{effect residual}
$r^{(\Delta)}=(Y^{(1)}-Y^{(0)})-\tau(W_t)$, where $\tau(\cdot)$ is the centering target for effect auditing.
For the residual-only treatment-effect certificate, we use the group-mean-invariant centering in Lemma~\ref{lem:lin}, e.g., a scalar $\bar\tau$ held fixed within the current audit window; the controller still estimates the group effects $\hat\tau_s$ separately from GA-DR pseudo-outcomes.
Each round, we compute windowed gaps $(\gTEgap, \gCalmax, \gMin)$ on the last
$L_{\mathrm{win}}$ samples. The PI controller maintains dual variables
$(\lambda_{\mathrm{TE}},\lambda_{\mathrm{Cal}},\lambda_{\mathrm{Min}})$ and updates them from soft violations
(e.g., $v_{\mathrm{TE}}=[\gTEgap-\rho_{\mathrm{TE}}]_+$, where \(\rho_{\mathrm{TE}}\) is the TE tolerance).
The duals are then converted into per-group logit offsets using the current DR effect estimates $\hat\tau_s$:
\begin{equation}
\label{eq:pi-offset}
\delta_s
\;=\; \mathrm{clip}\Big(\underbrace{\alpha\,\lambda_{\mathrm{TE}}\,(\bar\tau-\hat\tau_s)}_{\text{TE parity}}
\;+
\underbrace{\alpha'\,\lambda_{\mathrm{Min}}\,[\tau_{\min}-\hat\tau_s]_+}_{\text{minimum-effect guard}}\Big),
\end{equation}
where $\bar\tau$ is the mean effect across groups.
By default, we only apply these offsets in deployment and post-deployment phases,
so the pre-phase serves as a warm-up.
The controller also computes $\lambda_{\mathrm{Cal}}$ with hierarchical gating (tighten calibration only after TE/Min are acceptable);
in the released code, this coupling is optional and disabled by default.

\paragraph{Online runner.}
Algorithm~\ref{alg:copf-runner-opp} gives the prequential runner used by both Base and Base+COPF. The two share candidate construction, exposure logging, banditized feedback, and rolling-window evaluation; COPF enables the decision-layer offsets, auditors, and PI updates.

\begin{algorithm}[!t]
\caption{OPP Runner with COPF Modules}
\label{alg:copf-runner-opp}

\begin{algorithmic}[1]
\REQUIRE Stream \(\{(u_t,v_t^\star,t)\}_{t=1}^T\); backbone \(f\); schedules \((K,\epsilon_q,\mathrm{temp}_q)\); negative budget \(N\); audit/log periods; window length \(L_{\mathrm{win}}\).
\FOR{\(t=1,\ldots,T\)}
    \STATE Set phase \(q_t\in\{\mathrm{Pre},\mathrm{Deploy},\mathrm{Post}\}\) and build \(C_t=\{v_t^\star\}\cup N_t\), \(|N_t|\le N\), from \(G_{\le t}\).
    \STATE Score all \(v\in C_t\), assign group/slice labels, and form \(\tilde p_t(v)\) by Eq.~\eqref{eq:decision-layer}, with inactive modules setting \(b_{s,b}\) or \(\delta_s\) to zero.
    \STATE Sample \(D_t\) using \textsc{TopK-Stochastic} with exploration \(\epsilon_{q_t}\), temperature \(\mathrm{temp}_{q_t}\), and coverage modifiers when enabled; log \(\hat e_t(v)\) for all \(v\in C_t\).
    \STATE Observe \(Y_t(v)=\one\{v=v_t^\star\}\one\{v\in D_t\}\); then update graph/backbone using positives.
    \STATE Update GA-DR buffers, cross-fit nuisance models, and pseudo-outcomes \(\tilde\Gamma_t^{(0)},\tilde\Gamma_t^{(1)}\).
    \STATE At audit checkpoints, update active-slice multicalibration offsets from \(\hat r^{(0)}\); at log/control checkpoints, compute windowed gaps/certificates and update PI duals.
\ENDFOR
\end{algorithmic}
\end{algorithm}

\subsection{Deployment-Stability}
We audit drift by running OPP under a base policy (\textbf{Pre}), changing exposure to induce performative shift
(\textbf{Deploy}), then re-auditing on the evolved graph (\textbf{Post}).

\begin{remark}[Complexity and coverage]
Incremental neighborhood statistics + $k$-neighbor subsampling + $B_{\mathrm{act}}$ active auditors yield
amortized $O(|C_t|\,dL + |C_t|k + B_{\mathrm{act}})$ per round for an $L$-layer model of width $d$.
Coverage-driven exploration supports the group and slice masses required by Corollary~\ref{cor:fair}.
\end{remark}

\section{Experiments}

\label{sec:exp}

\subsection{Datasets and Preprocessing}
\label{subsec:datasets}

We evaluate COPF on two temporal interaction streams from the Temporal Graph Benchmark (TGB)~\citep{Huang2023TemporalGB}: \textbf{tgbl-wiki} (user edits page), \textbf{tgbl-review} (user reviews item); and on a controlled synthetic bipartite stream.
Each event $(u,v,t)$ is treated as one online round with source $u_t=u$ and true destination \(v_t^\star=v\); the candidate set $C_t$ is formed online by adding up to $N$ negatives (Sec.~\ref{sec:setup}).

\paragraph{Synthetic protected attributes for TGB}
The TGB streams do not provide demographic/sensitive attributes.
To enable controlled evaluation \emph{without claiming any real-world semantics}, we use synthetic binary group labels on nodes.
Our default is a balanced ID-based split $A_v=\mathrm{ID}(v)\bmod 2$, which serves as a ``placebo'' partition that stress-tests identifiability and the online auditing pipeline.
To study a more structurally correlated partition, we also consider an activity/degree-based split in sensitivity experiments (Sec.~\ref{subsec:results-ablations}).
Unless stated otherwise, we audit groups on the destination side for TGB so that each candidate set spans both groups, making group-aware exposure control identifiable within each round.

\paragraph{Synthetic bipartite stream}
We generate a $600$-user/$4000$-item stream with $200{,}000$ events and injected group bias.
Users carry the protected attribute $A\in\{0,1\}$ and we audit on the user side (items have no group label).

\subsection{Experimental Setup}

\paragraph{Evaluation protocol (Pre $\rightarrow$ Deploy $\rightarrow$ Post).}
We follow a three-phase OPP protocol to measure deployment stability.
Each phase contains $20{,}000$ rounds (total $60{,}000$): Pre (warm-up), Deploy (policy shift / performative feedback), and Post (stability under the shifted environment).
Our exposure policy is \textsc{TopK-Stochastic} with $K=10$ and a phase-dependent schedule:
$\epsilon=0.20$, temperature $=1.0$ in Pre, and $\epsilon=0.02$, temperature $=0.7$ in Deploy/Post.

\paragraph{Candidate generation and logged bandit feedback.}
At round $t$ we treat the stream event $(u_t,v_t^\star)$ as the single positive and form the candidate set $C_t=\{v_t^\star\}\cup N_t$ by adding up to $N{=}200$ negatives sampled without replacement (item-only pool for the bipartite stream; all nodes for TGB).
The exposure policy selects a slate $D_t$ of size $K$ and we log each candidate's marginal entry propensity $\hat e_t$.
Feedback is bandit-style: we only observe whether the exposed slate contains the true destination, i.e., $Y_t(v)=\one\{v=v_t^\star\}\,D_t(v)$.
This corresponds to a ``platform-mediated'' stress test with $Y_t^{(0)}(v)\equiv 0$ that isolates policy-induced selection: changing the exposure policy changes what can be observed and thus the future graph used for training and evaluation.
COPF's definitions and estimators apply more generally when organic outcomes $Y^{(0)}\neq 0$; such counterfactual logs are rarely available in public benchmarks.
Accordingly, in our runner the evolving graph (and any online backbone updates) are driven only by realized positives.

\paragraph{Metrics and aggregation.}
Utility is reported using ranking metrics computed on each candidate set (MRR, Hits@10, NDCG@10, DeployHit@TopK).
Fairness is measured via the gaps in Def.~\ref{def:Gprime}, computed from GA-DR pseudo-outcomes on the same rolling audit window as Residual-OI.
For temporal plots we compute $\gTEgap$ and $\gCalmax$ every 1{,}000 rounds on the most recent $L_{\mathrm{win}}$ logged candidates.

For phase-wise tables, we follow the aggregation protocol used by our runner logs.
Within each seed and each phase, we (i) average checkpointed values inside that phase to obtain a \emph{phase mean}, and (ii) compute a \emph{worst-case-in-phase} summary (for utility we take the \emph{minimum} within the phase; for fairness we take the \emph{maximum} within the phase).
We then report mean$\pm$std of the phase means across seeds, and in parentheses the average of the worst-case-in-phase summaries across seeds.
Boldface indicates that a COPF-enabled method is better than its corresponding non-COPF counterpart on that statistic (higher for utility, lower for fairness).
We also report the corresponding Residual-OI transfer certificates produced from the same audit window.
All results are repeated over three random seeds; full phase-wise summaries are reported in Appendix Tables~\ref{tab:wiki_combined}--\ref{tab:review_combined}.

\begin{figure*}[t]
\centering
\captionsetup[subfigure]{justification=centering}

\noindent
\begin{minipage}[t]{0.67\textwidth}\vspace{0pt}
  \centering
  \includegraphics[width=\linewidth]{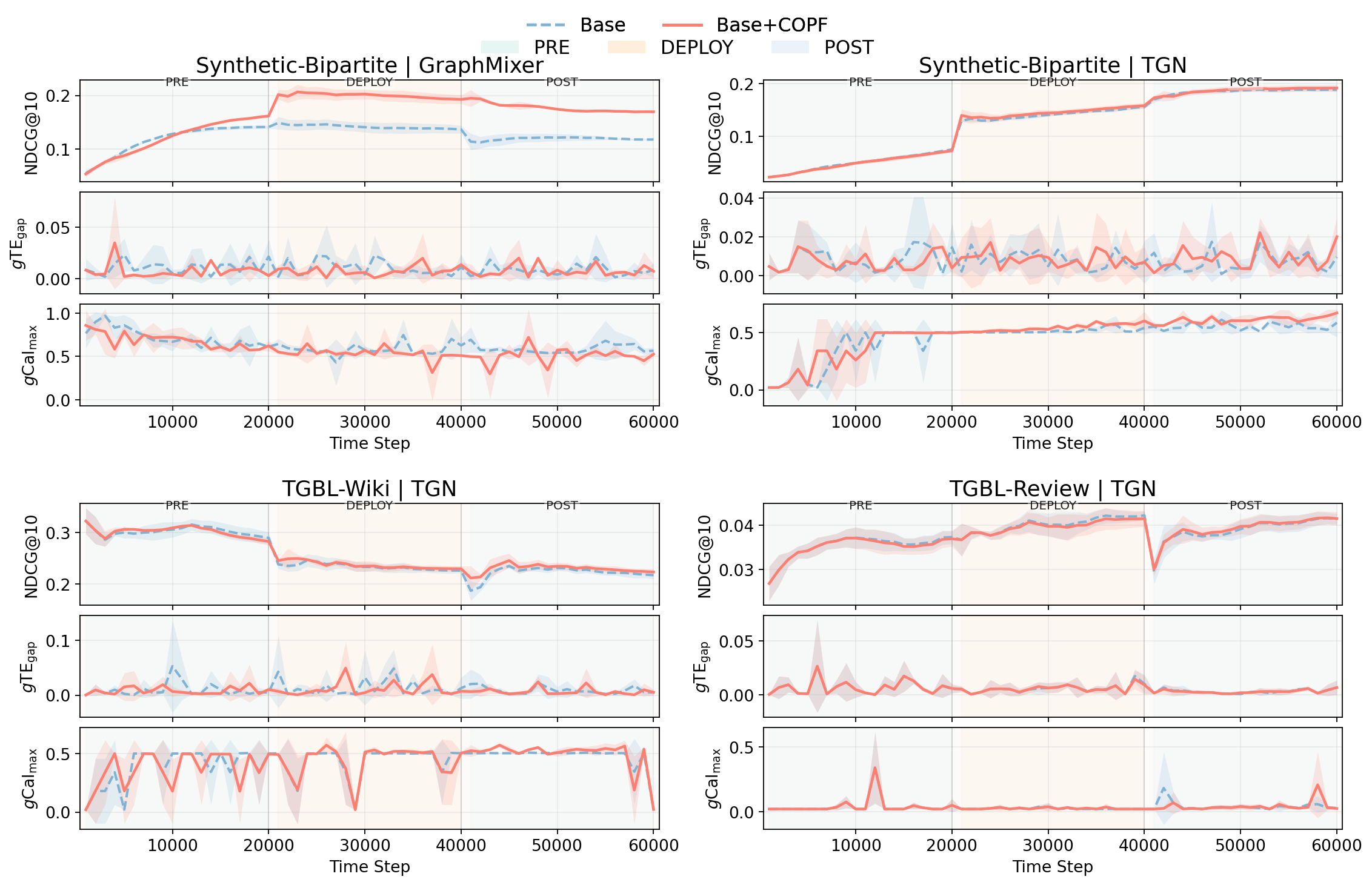}
  \subcaption{Temporal evolution under Pre/Deploy/Post (Base vs.\ Base+COPF).}
  \label{fig:temporal_evolution}
\end{minipage}\hfill
\begin{minipage}[t]{0.32\textwidth}\vspace{0pt}
  \centering
  \includegraphics[width=\linewidth]{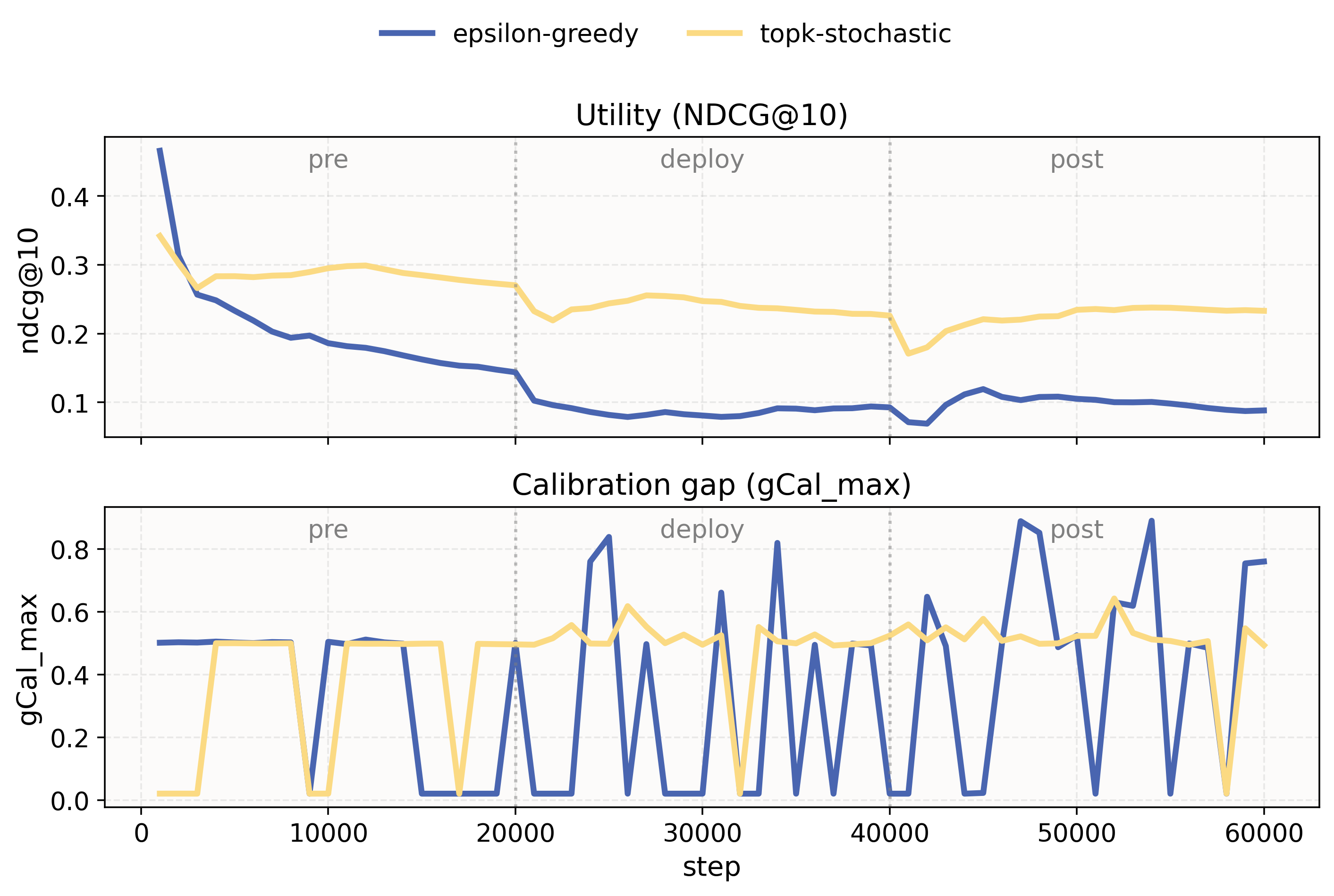}
  \subcaption{Exploration policy stability.}
  \label{fig:ab_policy}

  \vspace{0.8em}

  \includegraphics[width=\linewidth]{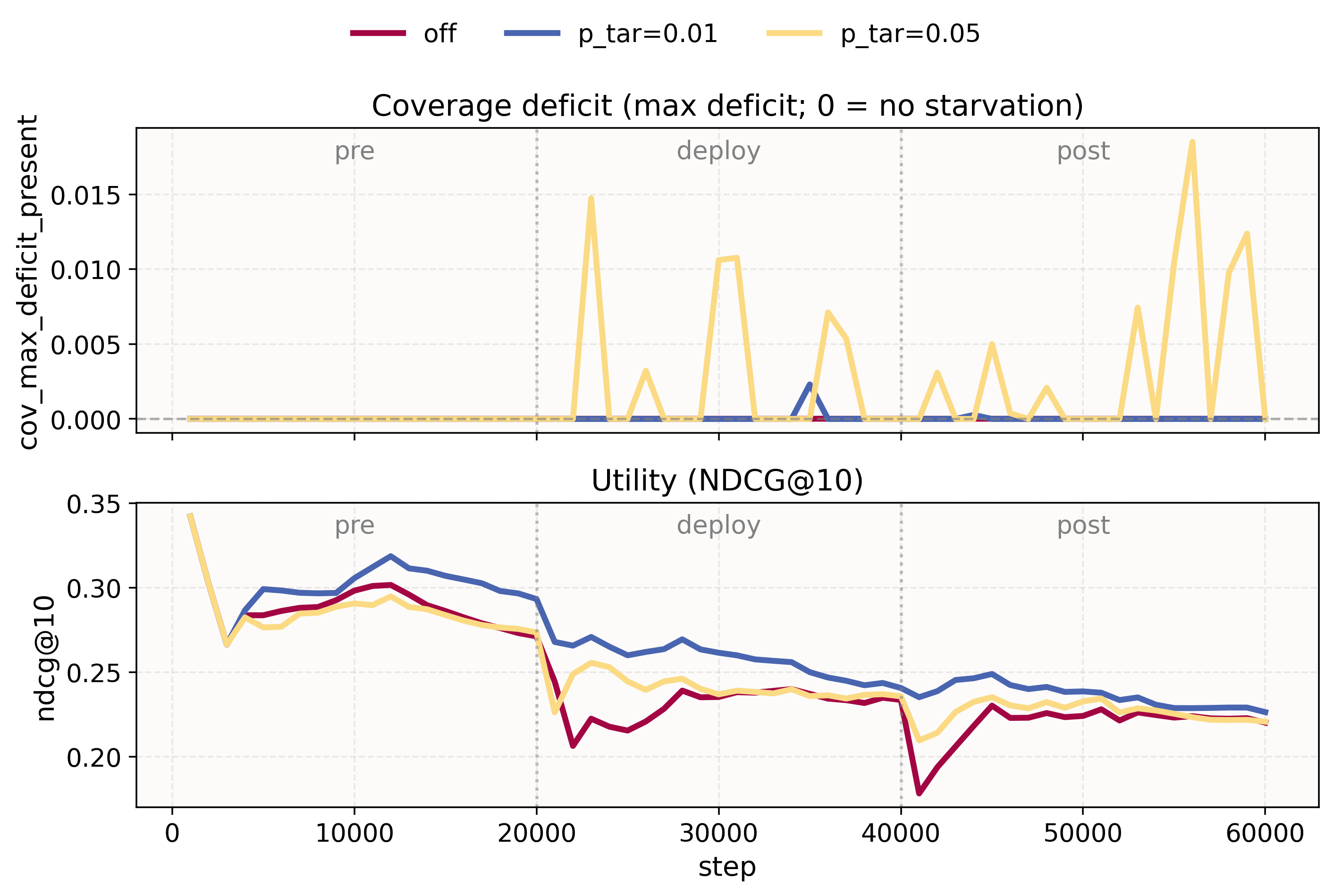}
  \subcaption{Coverage-driven exploration.}
  \label{fig:ab_coverage}
\end{minipage}

\caption{\textbf{Temporal evolution and key policy ablations.}
(a) Pre/Deploy/Post trajectories for Base vs.\ Base+COPF, reporting NDCG@10 and counterfactual gaps
($\gTEgap$, $\gCalmax$) as mean $\pm1$ std across seeds.
(b) Exploration-policy stability comparing $\epsilon$-greedy and \textsc{TopK-Stochastic} at the same exploration rate.
(c) Coverage-driven exploration as the minimum exposure target $p_{\mathrm{tar}}$ varies.
Vertical dashed lines mark phase boundaries.}
\label{fig:main_plus_policy}
\end{figure*}

\begin{figure*}[t]
\centering
\captionsetup[subfigure]{justification=centering}

\noindent
\begin{minipage}[t]{0.32\textwidth}\vspace{0pt}
  \centering
  \includegraphics[width=\linewidth]{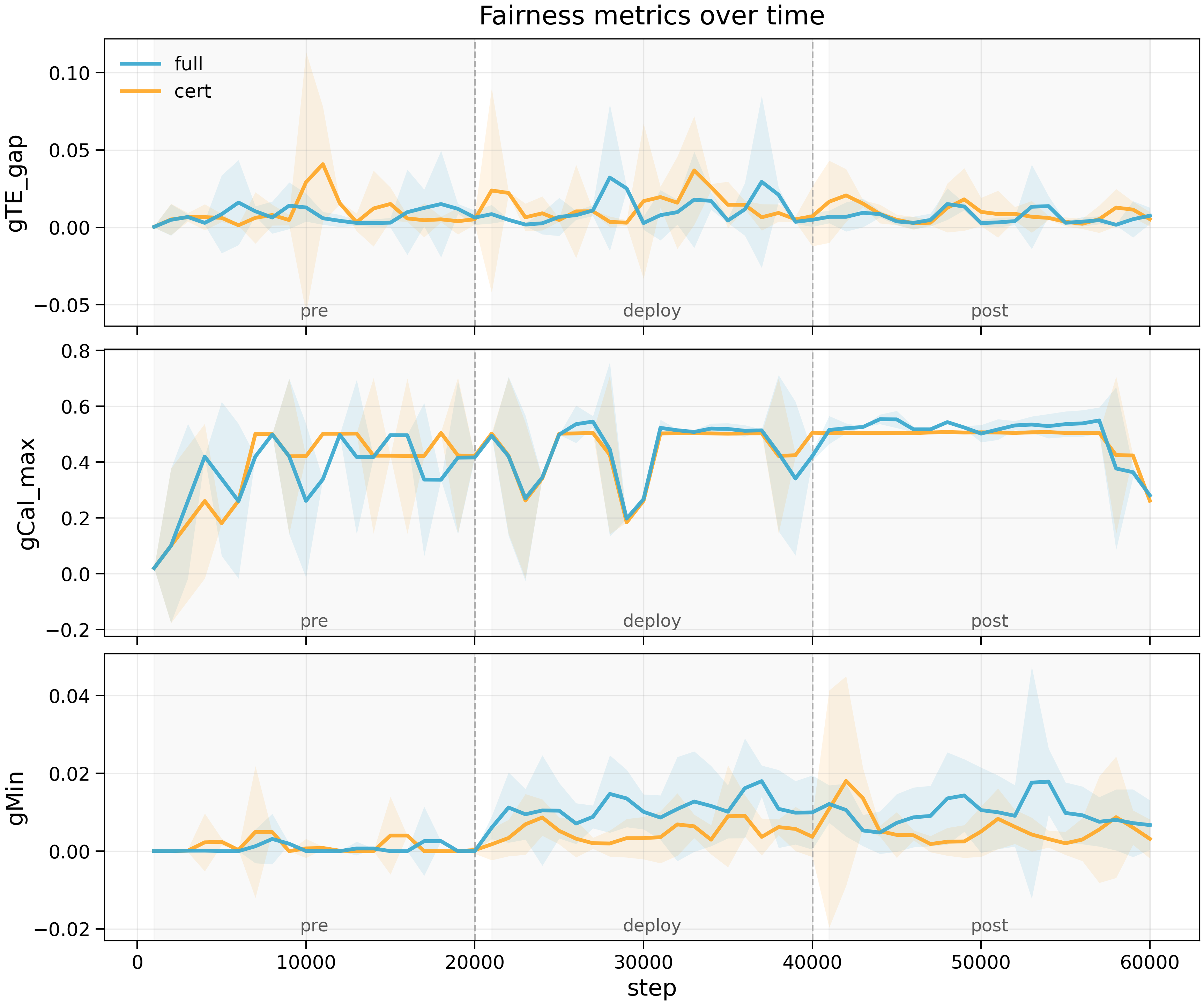}
  \subcaption{Full-window vs.\ certificate-aligned reporting (fairness).}
  \label{fig:ab_cert_fair}
\end{minipage}\hfill
\begin{minipage}[t]{0.32\textwidth}\vspace{0pt}
  \centering
  \includegraphics[width=\linewidth]{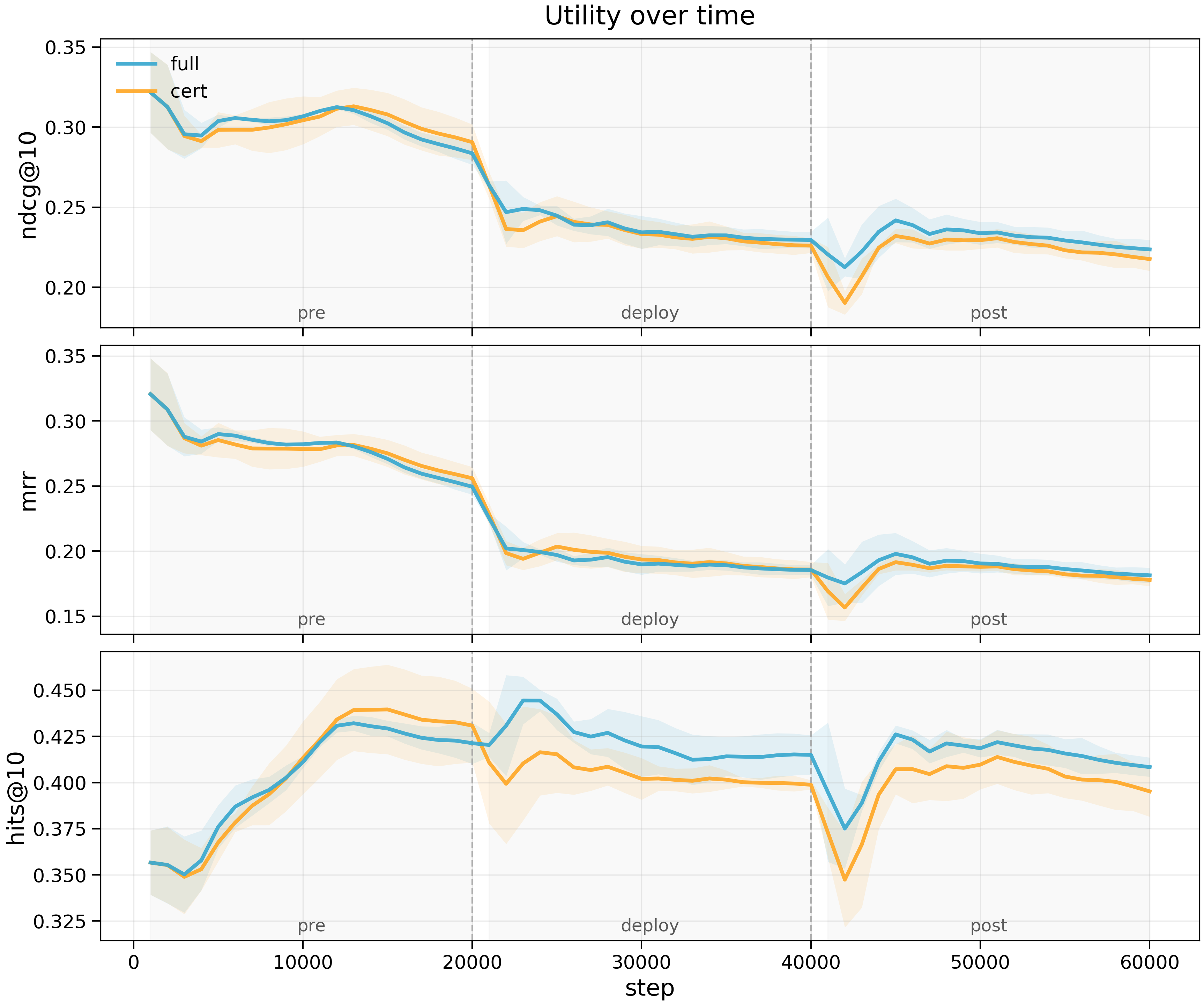}
  \subcaption{Full-window vs.\ certificate-aligned reporting (utility).}
  \label{fig:ab_cert_uti}
\end{minipage}\hfill
\begin{minipage}[t]{0.32\textwidth}\vspace{0pt}
  \centering
  \includegraphics[width=\linewidth]{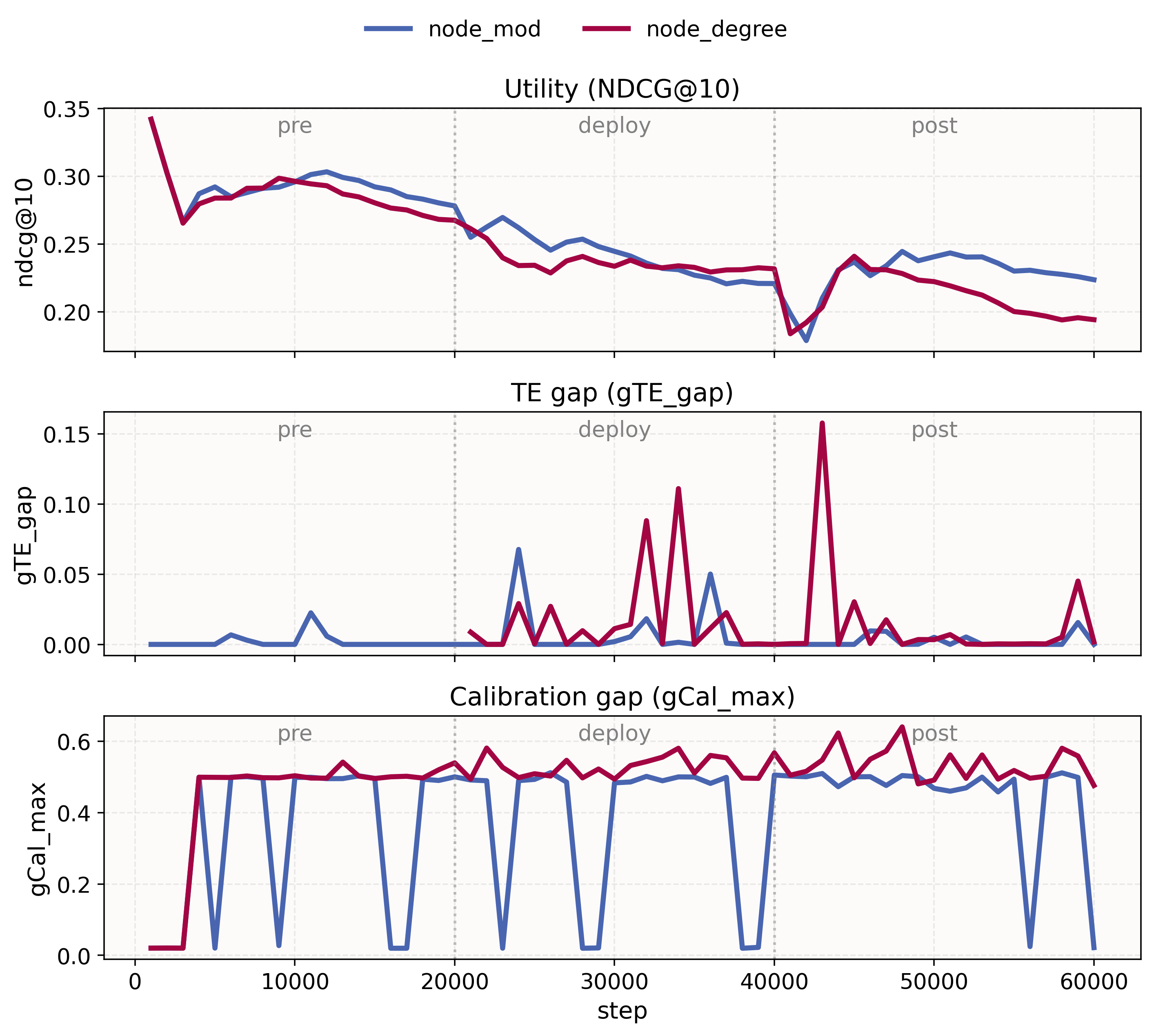}
  \subcaption{Sensitivity to attribute construction.}
  \label{fig:ab_attr}
\end{minipage}

\caption{\textbf{Certificate alignment and sensitivity.}
(a) Full-window vs.\ certificate-aligned reporting using the Residual-OI slice set for fairness metrics
($\gTEgap$, $\gCalmax$, $\gMin$).
(b) The same comparison for utility metrics (NDCG@10, MRR, Hits@10).
(c) Sensitivity to TGB protected-attribute construction, comparing ID-mod2 with degree/activity splits.
Shaded bands denote $\pm1$ std across seeds; vertical dashed lines mark phase boundaries.}
\label{fig:ablation_row2}
\end{figure*}

\subsection{Baselines and Models}
COPF is designed as a \emph{decision-layer wrapper} around an arbitrary backbone scorer.
We therefore evaluate COPF on three representative backbones commonly used for temporal link prediction: \textbf{EdgeBank}~\cite{poursafaei2022towards}, \textbf{TGN}~\cite{Rossi2020TemporalGN}, and \textbf{GraphMixer}~\cite{cong2023really}.

For \emph{training-time} fairness baselines we implement three standard interventions for \textbf{TGN} only:
\textbf{TGN-Adv} (adversarial debiasing), \textbf{TGN-Reweight} (group reweighting), and \textbf{TGN-Penalty} (fairness regularization).
These baselines require differentiable end-to-end training and are not directly applicable to non-parametric EdgeBank; for GraphMixer we focus on COPF-as-wrapper comparisons.

Unless stated otherwise, we use TGN embedding/message dimensions of 128 and GraphMixer token/time-feature dimensions of 64 with two mixer layers and a time-gap of 2000.

\subsection{Implementation Details}
\label{subsec:implementation}
All experiments use the same OPP runner with Top-$K$ stochastic exposure: with probability $\epsilon$ we sample a size-$K$ slate uniformly without replacement; otherwise we sample from backbone scores with temperature. We log per-candidate marginal entry propensities: uniform exploration has analytic inclusion probabilities, while the Plackett--Luce Top-$K$ component uses Monte Carlo inclusion estimates (default 128 samples), and we clip the logged propensities for numerical stability. We also clip backbone probabilities to $[10^{-4},1-10^{-4}]$. Counterfactual metrics are estimated online with GA-DR plug-in estimators and lightweight nuisance models, using stabilized graph-aware time-decay weights. Auditing/multicalibration use group$\times$score buckets (10 per group) with a bounded active set; richer slice families are supported but off unless stated. We log metrics every 1000 rounds. Dataset-specific hyperparameters are in Table~\ref{tab:hparam_overrides}; for Base models we disable auditing. Experiments run on a Linux server with GeForce RTX 4090 GPUs (24\,GB); each run uses a single GPU for the backbone, while auditing and nuisance updates run on CPU.

\section{Results}

\paragraph{Phase-wise performance and deployment stability}
Appendix Tables~\ref{tab:wiki_combined}--\ref{tab:review_combined} summarize utility (NDCG@10) and counterfactual gaps ($\gTEgap$, $\gCalmax$) under the Pre$\!\rightarrow$Deploy$\!\rightarrow$Post protocol.
Across datasets, COPF behaves as a deployment-time decision layer rather than a retrained backbone: it leaves the policy nearly unchanged when the logged counterfactual gaps are already small, but becomes more active when the exposure shift creates unstable treatment effects.
The clearest case is the injected-bias synthetic stream with GraphMixer (Table~\ref{tab:synthetic_combined}).
During Deploy, Base{+}COPF improves NDCG@10 from $0.1417$ to $0.1996$ and Hits@10 from $0.2952$ to $0.4154$, while reducing mean $\gTEgap$ from $0.0103$ to $0.0076$ and mean $\gCalmax$ from $0.5816$ to $0.5337$.
The worst-case-in-phase statistics show the same stabilization: Deploy $\gTEgap$ decreases from $0.0477$ to $0.0274$ and $\gCalmax$ from $0.9178$ to $0.7067$.
In Post, the utility gain persists ($0.1193\!\rightarrow\!0.1768$ NDCG@10), with lower calibration error ($0.5867\!\rightarrow\!0.5076$) and a slightly smaller treatment-effect gap ($0.0080\!\rightarrow\!0.0075$).
By contrast, on EdgeBank and some TGN settings the changes are small or mixed, indicating that COPF is most useful when the backbone and policy shift create exploitable exposure-side instability.

On the real TGB streams, the gains are more localized, which is expected because the default ID-mod2 partition is a placebo split rather than a demographic attribute.
On Wiki with TGN (Table~\ref{tab:wiki_combined}), COPF reduces the PRE worst-case $\gTEgap$ from $0.0528$ to $0.0217$, while keeping phase-mean utility essentially stable.
After deployment, it slightly improves NDCG@10 in both Deploy ($0.2335\!\rightarrow\!0.2365$) and Post ($0.2219\!\rightarrow\!0.2296$), and reduces the POST mean $\gTEgap$ from $0.0090$ to $0.0067$.
The calibration diagnostic is less uniform: for example, Wiki+TGN POST $\gCalmax$ increases from $0.4727$ to $0.4892$.
We therefore treat $\gCalmax$ as a high-variance diagnostic of counterfactual calibration under banditized feedback, while $\gTEgap$ is the primary benefit-parity target.
On Review (Table~\ref{tab:review_combined}), treatment-effect gaps are already small for TGN, and COPF is correspondingly non-binding: POST NDCG@10 changes from $0.0389$ to $0.0392$, while POST $\gTEgap$ changes from $0.0030$ to $0.0029$.
The TGN training-time debiasing variants show similar behavior under the shared OPP pipeline, suggesting that COPF is best interpreted as an orthogonal deployment-time wrapper rather than a replacement for representation-level debiasing.

\paragraph{Temporal evolution under policy shifts}
Figure~\ref{fig:temporal_evolution} shows that the phase-wise averages hide short-lived transients around deployment boundaries.
On Synthetic-Bipartite with GraphMixer, the utility curves separate after deployment in favor of Base{+}COPF, while the treatment-effect trajectory remains no worse and the calibration trajectory is smoother.
On Wiki and Review, the mean TE gaps are small, but the rolling curves reveal occasional spikes and high-frequency calibration variation.
COPF mainly suppresses these local excursions on Wiki and is almost indistinguishable from Base on Review, matching the table-level conclusion that the controller is active only when the logged counterfactual estimates indicate a violation.
This is the intended deployment-stability behavior: the method does not force a uniform utility--fairness tradeoff in every phase, but monitors and corrects exposure-induced drift when it appears.

\paragraph{Ablations and sensitivity}
\label{subsec:results-ablations}
The ablations support the same interpretation.
Certificate-aligned reporting in Figures~\ref{fig:ab_cert_fair} and~\ref{fig:ab_cert_uti} closely tracks the full-window fairness and utility curves, indicating that the main trends are not artifacts of evaluating on a population different from the one used by the Residual-OI certificate.
At the same time, the certificates remain conservative, especially when slice mass is small, so we use them as safety monitors rather than tight point estimates.
The policy ablations in Figures~\ref{fig:ab_policy} and~\ref{fig:ab_coverage} show that \textsc{TopK-Stochastic} exploration is smoother than $\epsilon$-greedy at the same exploration rate, and that modest coverage targets reduce short-term slice starvation without visibly dominating the ranking signal.
Finally, the attribute-construction sensitivity in Figure~\ref{fig:ab_attr} shows that replacing the placebo ID-mod2 split with a degree/activity-based split changes the baseline level of noise but not the qualitative conclusion: COPF is non-binding when counterfactual gaps are small and more active when exposure dynamics create instability.
The spike and certificate-slack diagnostics in Appendix~\ref{app:spike_slack} further reinforce this picture: Deploy spikes decrease and slack tightens on Synthetic-Bipartite with GraphMixer, Wiki with EdgeBank is nearly unchanged, and Wiki with TGN shows reduced PRE spikes with mixed later-phase behavior.

\section{Conclusion and Future Work}
\label{sec:conclusion}

We proposed \textbf{COPF}, a decision-layer framework for deployment-stable counterfactual fairness in online link recommendation on evolving graphs. COPF defines exposure-counterfactual group disparities with a minimum-effect guardrail, makes them identifiable from a single propensity-logged stream via explicit exploration, and audits/controls them via Residual-OI on GA-DR residuals. Our transfer analysis shows that plug-in Residual-OI implies high-probability bounds on true counterfactual gaps (with calibration reported as a diagnostic) under temporal mixing and bounded local interference. Empirically, COPF is largely non-binding when gaps are small, but under Pre/Deploy/Post policy shifts it reduces worst-case disparity spikes with modest utility impact.

At the same time, our public-benchmark evaluation is necessarily limited. TGB streams lack demographic attributes (so we use synthetic partitions), and our banditized outcome model stress-tests selection effects but makes calibration harder to interpret. Future work should strengthen the interference model beyond bounded local neighborhoods, improve propensity logging for complex slate policies, analyze convergence of budgeted active-auditor implementations under dependence, and expand comparisons to simpler online fairness controllers as well as settings with real sensitive attributes.

\section*{Acknowledgements}

Part of this work was completed by Sheng'en Li while participating in the summer and semester research programs at Duke University. We thank Professor Fan Wei of Duke University for helpful feedback and valuable suggestions that improved the paper. We also thank the anonymous reviewers for their insightful comments and constructive feedback.

\section*{Impact Statement}

This work develops methods for monitoring and controlling counterfactual fairness in online link recommendation. Its potential benefit is to make exposure policies more auditable under deployment feedback. Its risks include misuse of fairness certificates when assumptions, protected-attribute handling, or propensity logging are invalid. Real deployments should pair such certificates with domain validation, responsible data governance, and human oversight.

\bibliography{mybib}
\bibliographystyle{icml2026}

%%%%%%%%%%%%%%%%%%%%%%%%%%%%%%%%%%%%%%%%%%%%%%%%%%%%%%%%%%%%%%%%%%%%%%%%%%%%%%%
%%%%%%%%%%%%%%%%%%%%%%%%%%%%%%%%%%%%%%%%%%%%%%%%%%%%%%%%%%%%%%%%%%%%%%%%%%%%%%%
% APPENDIX
%%%%%%%%%%%%%%%%%%%%%%%%%%%%%%%%%%%%%%%%%%%%%%%%%%%%%%%%%%%%%%%%%%%%%%%%%%%%%%%
%%%%%%%%%%%%%%%%%%%%%%%%%%%%%%%%%%%%%%%%%%%%%%%%%%%%%%%%%%%%%%%%%%%%%%%%%%%%%%%
\newpage
\appendix
\onecolumn
\section*{Appendix}

\paragraph{Appendix roadmap.}
Appendix~\ref{app:notation} summarizes notation.
Appendix~\ref{app:protocol} gives a schematic view of the OPP/COPF dataflow.
Appendix~\ref{app:dependence} states checkable versions of the temporal mixing and bounded local interference conditions.
Appendix~\ref{app:windowed-audit} details the windowed Residual-OI / multicalibration routine and the finite-auditor confidence radius used by the logged certificates.
Appendix~\ref{app:proofs-theory} contains proofs for the theoretical results in Sec.~\ref{subsec:theory}.
Appendix~\ref{app:hyperparameters} summarizes implementation details and hyperparameters.
Appendix~\ref{app:tables} reports full phase-wise summary tables.
Appendix~\ref{app:spike_slack} reports phase-wise spike and certificate-slack diagnostics.
Appendix~\ref{app:propensity} reports propensity-logging sensitivity under IPS and GA-DR probes.
Appendix~\ref{app:target_pair_feasibility} gives a target-pair feasibility diagnostic for the treatment-effect parity and minimum-effect guardrail constraints.

\section{Core Notation}\label{app:notation}
For convenience, Table~\ref{tab:notation} collects the main symbols used throughout the paper (mainly Secs.~\ref{sec:setup}, \ref{sec:methodology}).

\begin{table}[ht]
\footnotesize
\caption{Core notation.}
\label{tab:notation}
\setlength{\tabcolsep}{4pt}
\begin{tabularx}{\linewidth}{@{}lX@{}}
\toprule
\textbf{Symbol} & \textbf{Description} \\
\midrule
\(t=1,\ldots,T\) & Online round; \(T\) is the total horizon. \\
\(G_{\le t}=(V,E_{\le t})\) & Pre-decision graph at round \(t\). \\
\(u_t, v_t^\star\) & Source node and stream destination revealed at round \(t\). \\
\(N_t, N\) & Online negative set and negative budget. \\
\(C_t=\{v_t^\star\}\cup N_t\) & Candidate destination set; audited dyads are \(\{(u_t,v):v\in C_t\}\). \\
\(K\) & Slate size. \\
\(H_t\) & Policy history/decision information available before choosing the slate. \\
\(\pi_t\) & Stochastic exposure policy. \\
\(D_t\subseteq C_t\), \(D_t(v)\) & Exposed slate and inclusion indicator \(D_t(v)=\mathbf 1\{v\in D_t\}\). \\
\(e_t(v),\hat e_t(v)\) & True and logged marginal inclusion propensities. \\
\(\hat e_t^{(1)}(v),\hat e_t^{(0)}(v)\) & Arm propensities: \(\hat e_t(v)\) and \(1-\hat e_t(v)\). \\
\(\epsilon\) & Exploration probability in the exposure policy. \\
\(\phi_t(u_t,v),G_t^{\rm loc}(u_t,v)\) & Dyad features and local graph summary. \\
\(W_t(u_t,v)\) & Pre-exposure decision/counterfactual context; includes dyad features, local graph summary, and slate-policy inputs used by \(\pi_t\).  \\
\(\hat p_t(u_t,v)\) & Backbone score/probability before COPF offsets. \\
\(Y_t(u_t,v)\) & Observed outcome under the logged exposure. \\
\(Y_t^{(0)},Y_t^{(1)}\) & No-exposure and exposure potential outcomes. \\
\(A_i,\mathcal A\) & Node protected attribute and finite group set. \\
\(A_t(u_t,v),s\) & Dyad group label and a generic group \(s\in\mathcal A\). \\
\(B_{\rm bucket}, I_{s,b}\) & Number of score buckets and bucket \(b\) for group \(s\). \\
\(R_t(u_t,v),\mathcal R,\rho\) & Optional structural role, role set, and role value. \\
\(h_{s,b,\rho}\), \(\mathcal H\) & Slice indicator and auditor family. \\
\(w_t\) & Nonnegative graph-aware weight. \\
\(\mathcal W_k, L_{\rm win}\) & Audit window and nominal window length. \\
\(\widehat{\E}_{\rm GA,\mathcal W_k}\) & Self-normalized GA empirical average over \(\mathcal W_k\). \\
\(W_{\rm eff}\) & Effective sample size of a weighted window. \\
\(\mu_a,\hat\mu_{a,t}\) & Arm-\(a\) outcome regression and online nuisance estimate. \\
\(\tilde\Gamma_t^{(a)}\) & GA-DR pseudo-outcome for arm \(a\). \\
\(r^{(0)},r^{(\Delta)}\) & No-exposure and treatment-effect residuals. \\
\(\tau_s,\hat\tau_s\) & True and estimated group exposure treatment effect. \\
\(\tau(W_t),\bar\tau,\tau_{\min}\) & Effect-centering target, scalar window centering, and minimum-effect target. \\
\(\gTEgap\)& Treatment-effect parity gap. \\
\(\gCalmax\) & Maximum within-group counterfactual calibration diagnostic. \\
\(\gMin\) & Minimum-effect guardrail violation. \\
\(p_{\min}^{\rm g},p_{\min}^{\rm gb}\) & Minimum group mass and group-bucket slice mass. \\
\(\varepsilon_T,\beta_T,E_T\) & Residual-OI tolerance, concentration radius, and total transfer error. \\
\(\varepsilon_e,\varepsilon_\mu\) & Propensity and outcome-nuisance errors. \\
\(\tau_{\rm mix},\kappa\) & Mixing scale and local dependency-graph degree. \\
\(\tilde p_t,b_{s,b},\delta_s\) & COPF-adjusted probability, calibration offset, and group logit offset. \\
\(\lambda_{\rm TE},\lambda_{\rm Cal},\lambda_{\rm Min},\rho_{\rm TE}\) & Controller dual variables and TE target tolerance. \\
\bottomrule
\end{tabularx}
\end{table}

\section{Protocol}
\label{app:protocol}

Figure~\ref{fig:protocol_appendix} gives a schematic view of the OPP/COPF dataflow corresponding to Algorithm~\ref{alg:copf-runner-opp}. Candidates are constructed prospectively, exposure is stochastic with explicit exploration, per-candidate marginal propensities are logged, and the same rolling log feeds GA-DR estimation, Residual-OI certification, multicalibration, and PI control.

\begin{figure}[t]
    \centering
    \includegraphics[width=\textwidth]{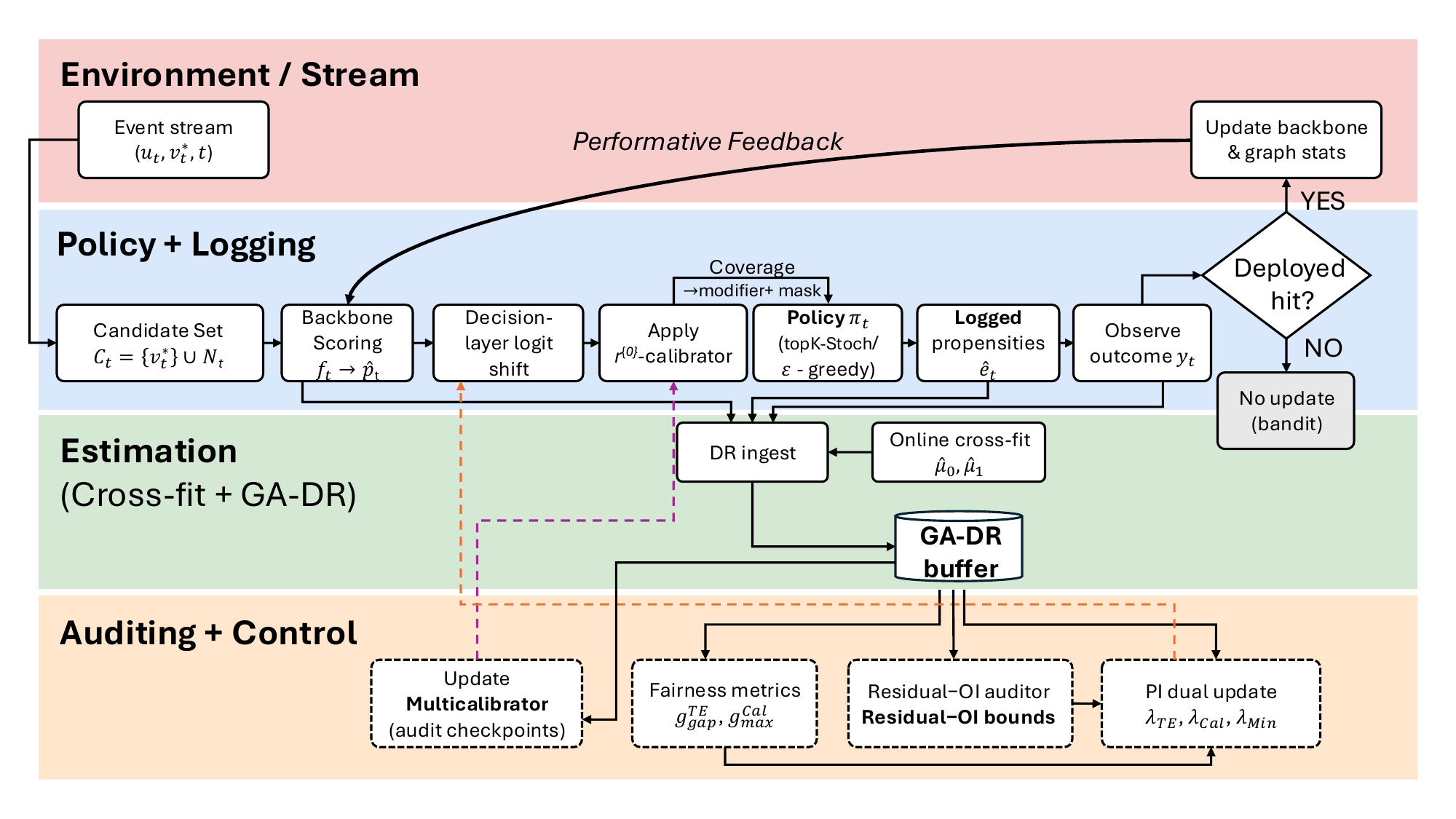}
    \caption{\textbf{Prequential logging protocol (OPP).} At each round we build the candidate set online, apply a stochastic Top-$K$ exposure policy with explicit exploration and propensity logging, observe banditized feedback on exposed candidates, and update the backbone and nuisance models sequentially. All reported counterfactual metrics and certificates are computed prequentially from the logged stream.}
    \label{fig:protocol_appendix}
\end{figure}

\section{Checkable Definitions for Mixing and Local Interference}
\label{app:dependence}

This appendix states operational (i.e., checkable) versions of the temporal dependence and interference conditions used throughout the paper. These conditions instantiate the bounded-dependence clause in Assumption~\ref{assump:local-id} and justify the dependence factors in our transfer bounds.
Let $Z_t$ denote the logged tuple(s) at time $t$ (e.g., context $W_t$, decisions $D_t$, propensities $\hat e_t$, outcomes $Y_t$, and any derived quantities such as DR pseudo-outcomes/residuals),
and let $\mathcal{F}_t=\sigma(Z_1,\dots,Z_t)$ be the natural filtration.

\subsection{Temporal mixing}

\begin{definition}[$\beta$-mixing (absolute regularity)]
\label{def:beta-mixing}
Define the $\beta$-mixing coefficients
\[
\beta(\ell)\;\triangleq\;\sup_{t\ge 1}\;
\mathbb{E}\Big[\sup_{B\in\sigma(Z_{t+\ell},Z_{t+\ell+1},\dots)}
\big|\mathbb{P}(B\mid \mathcal{F}_t)-\mathbb{P}(B)\big|\Big].
\]
We say the process is $\beta$-mixing with mixing time $\tau_{\mathrm{mix}}$ (at tolerance $\varepsilon_{\mathrm{mix}}$)
if $\beta(\tau_{\mathrm{mix}})\le \varepsilon_{\mathrm{mix}}$ (or more generally if $\beta(\ell)$ decays with $\ell$).
\end{definition}

\begin{definition}[Function-class mixing (finite test class)]
\label{def:function-mixing}
Let $\mathcal{G}$ be a \emph{finite} class of bounded test functions $g:\mathsf{Z}\to[-1,1]$.
Define the $\mathcal{G}$-mixing coefficient at lag $\ell$ by
\[
\alpha_{\mathcal{G}}(\ell)\;\triangleq\;
\sup_{t\ge 1}\;\sup_{g\in\mathcal{G}}
\big|\mathrm{Cov}(g(Z_t),\, g(Z_{t+\ell}))\big|.
\]
We say the process is $(\tau_{\mathrm{mix}},\varepsilon_{\mathrm{mix}})$-\emph{$\mathcal{G}$-mixing} if
$\alpha_{\mathcal{G}}(\ell)\le \varepsilon_{\mathrm{mix}}$ for all $\ell\ge \tau_{\mathrm{mix}}$.
\end{definition}

\begin{remark}[Function-class mixing as a diagnostic]
When $\mathcal{G}$ is finite (as in our auditing setting, e.g., slice indicators and their products with clipped residuals),
$\alpha_{\mathcal{G}}(\ell)$ can be estimated by empirical lag-$\ell$ covariances computed on logged streams,
optionally using a block bootstrap to attach uncertainty.
In practice we only require $\mathcal{G}$ large enough to cover the statistics that enter the certificate,
e.g., $g_t = h(W_t)\,\hat r_t$ for $h\in\mathcal{H}$ and bounded residual estimates $\hat r_t$. This function-class coefficient is used as an empirical diagnostic; the
finite-sample concentration in Appendix~\ref{app:windowed-audit} relies on the beta-mixing/blocking
and dependency-graph conditions.
\end{remark}

\subsection{Bounded local interference}
The local-interference mapping is used to control dependence of logged
quantities; our estimand remains the own-exposure marginal \(Y^{(a)}\), not
an arbitrary joint intervention.
\begin{definition}[Local interference neighborhood and exposure mapping]
\label{def:local-interference}
Index an audited instance by $u=(i,j,t)$ (a candidate dyad at time $t$).
Let $\mathbf{D}$ denote the global treatment/exposure assignment over all instances, and let
$\mathbf{D}_{\mathcal{I}(u)}$ be the restriction to a subset $\mathcal{I}(u)$.
We say the deployment satisfies $(r_{\mathrm{int}},L_{\mathrm{int}})$-\emph{local interference} if there exists:
(i) a \emph{computable} interference neighborhood $\mathcal{I}(u)$ computed from
$G_{\le t}$ and the candidate construction rule, such that $\mathcal{I}(u)$ contains
only instances within $r_{\mathrm{int}}$ hops of $i$ or $j$ in $G_{\le t}$ and within the last
$L_{\mathrm{int}}$ rounds; and
(ii) an exposure mapping $g_u(\cdot)$ such that the potential outcomes satisfy
\[
Y_u(\mathbf{D}) \;=\; Y_u\!\big(g_u(\mathbf{D}_{\mathcal{I}(u)})\big),
\]
i.e., treatments outside $\mathcal{I}(u)$ do not affect $Y_u$.
\end{definition}

\begin{definition}[Bounded-degree dependency graph]
\label{def:dependency-graph}
Define a (random) dependency graph on instances within a window:
connect \(u\) and \(u'\) if
\[
\mathcal I(u)\cap \mathcal I(u')\neq\emptyset,
\]
or more generally if their logged tuples share any random variable entering
the exposure mapping, nuisance estimates, or residuals.
We assume the maximum degree is bounded by $\kappa$ (uniformly over time/windows).
This is checkable because $\mathcal{I}(u)$ is computable from $G_{\le t}$ and the candidate rule;
$\kappa$ can be upper-bounded directly by design (e.g., $k$-hop subsampling, capped candidate budgets, bounded memory).
\end{definition}

\begin{remark}[No-interference special case]
No outcome interference is recovered by taking \(r_{\mathrm{int}}=0\) and
\(L_{\mathrm{int}}=0\), so each potential outcome depends only on its own
exposure. The dependency degree \(\kappa\) may still include within-slate
dependence induced by Top-\(K\) sampling without replacement; it is zero only
under independent candidate-level exposure.
\end{remark}

\section{Windowed Residual-OI with Active Auditors}
\label{app:windowed-audit}

This appendix specifies the windowed Residual-OI / multicalibration subroutine used at audit checkpoints in Algorithm~\ref{alg:copf-runner-opp}. The subroutine operates on a finite auditor class \(\mathcal H\), maintains an active-auditor budget, and supplies the confidence radii used in the logged certificates.

\subsection{Setup and certificate}
Fix a finite auditor family $\mathcal{H}$ of bounded functions $h:\mathsf{W}\to[-1,1]$,
where $\mathsf{W}$ is the logged context space (e.g., group indicators, score buckets, optional structural roles).
Let $\hat r_t$ denote the (clipped) residual estimate available online, e.g.,
a GA-DR pseudo-outcome residual $\hat r^{(0)}_t = \tilde \Gamma^{(0)}_t - \hat p_t$ for calibration,
or $\hat r^{(\Delta)}_t = (\tilde \Gamma^{(1)}_t-\tilde \Gamma^{(0)}_t) - \tau(W_t)$ for effect auditing.

We evaluate Residual-OI on a sliding window $\mathcal{W}_k$ of nominal length $L_{\mathrm{win}}$ using the same
graph-aware self-normalized averaging operator as Def.~\ref{def:ga-dr}:
\[
\widehat{\E}_{\mathrm{GA},\mathcal{W}_k}[Z]
\;\triangleq\;
\frac{\sum_{t\in\mathcal{W}_k} w_t\,Z_t}{\sum_{t\in\mathcal{W}_k} w_t},
\]
\[
\widehat{\mathrm{OI}}_{\mathrm{GA}}(\hat r;\mathcal{H},\mathcal{W}_k)
\;\triangleq\;
\sup_{h\in\mathcal{H}}
\Big|
\widehat{\E}_{\mathrm{GA},\mathcal{W}_k}\!\big[h(W_t)\,\hat r_t\big]
\Big|.
\]
(When $\mathcal{H}$ consists of slice indicators, this recovers windowed multicalibration.)

\paragraph{Online-loop interface.}
The complete OPP runner with COPF modules is stated in Algorithm~\ref{alg:copf-runner-opp}. This appendix does not restate the full loop. Instead, it specifies the windowed Residual-OI / multicalibration subroutine invoked at audit checkpoints and the confidence radius used by the logged certificates.

\subsection{Algorithm: budgeted windowed Residual-OI (active auditors)}
We maintain a sparse set of active auditors $\mathcal{H}^{\mathrm{act}}$ of size at most $B_{\mathrm{act}}$,
together with per-auditor offsets (or weights) $b_h$ that define the current adjustment.
For slice-based multicalibration, $h$ indexes a slice $S$ and $b_h$ corresponds to the slice offset
applied in the decision layer (Eq.~\eqref{eq:decision-layer}).

\begin{algorithm}[H]
\caption{Budgeted windowed Residual-OI / multicalibration with active auditors.}
\label{alg:windowed_mc}
\small
\textbf{Input:} finite auditor class $\mathcal{H}$; window length $L_{\mathrm{win}}$; tolerance $\alpha$;
active budget $B_{\mathrm{act}}$; step size $\eta$; offset radius $b_{\max}$; confidence radius $\beta_W(\delta)$ (Def.~\ref{def:betaW}).\\
\textbf{State:} active set $\mathcal{H}^{\mathrm{act}}$ and offsets $\{b_h\}_{h\in\mathcal{H}^{\mathrm{act}}}$; projection $\Pi_{[-b_{\max},b_{\max}]}(\cdot)$.
\begin{algorithmic}[1]
\FOR{each window $\mathcal{W}_k$ (most recent $L_{\mathrm{win}}$ logged instances)}
    \STATE For each $h\in\mathcal{H}$ compute the window correlation
    $
    \hat\rho_k(h)=\widehat{\E}_{\mathrm{GA},\mathcal{W}_k}\!\big[h(W_t)\,\hat r_t\big].
    $
    \STATE Let $\mathcal{V}$ be the set of up to $B_{\mathrm{act}}$ auditors with the largest $|\hat\rho_k(h)|$ satisfying $|\hat\rho_k(h)|>\alpha+\beta_W(\delta)$.
    \FOR{each $h\in\mathcal{V}$}
        \STATE Set $\mathcal{H}^{\mathrm{act}}=\mathcal{H}^{\mathrm{act}}\cup\{h\}$.
        \STATE Update its offset by projection: $b_h=\Pi_{[-b_{\max},b_{\max}]}(b_h+\eta\,\hat\rho_k(h))$.
    \ENDFOR
    \STATE (Optional) If $|\mathcal{H}^{\mathrm{act}}|>B_{\mathrm{act}}$, keep only the $B_{\mathrm{act}}$ active auditors with largest $|b_h|$.
\ENDFOR
\end{algorithmic}
\end{algorithm}

\paragraph{Remarks.}
(i) The routine only \emph{updates} $B_{\mathrm{act}}$ auditors per window, but it may still \emph{score} all $|\mathcal{H}|$ auditors
to find the top violations. This matches the common ``active constraints'' implementation in COPF.
(ii) For slice indicators, the correlation $\hat\rho_k(h)$ is proportional to the slice mean residual; one may equivalently compute slice means directly,
with a minimum-mass threshold $p_{\min}$.

\subsection{Concentration radius for a finite auditor class}
We state a standard deviation bound for the window correlations under the checkable dependence conditions of Appendix~\ref{app:dependence}.
Because we use self-normalized GA weights, the natural effective sample size is
\[
W_{\mathrm{eff}}
\;\triangleq\;
\frac{\big(\sum_{t\in\mathcal{W}_k} w_t\big)^2}{\sum_{t\in\mathcal{W}_k} w_t^2},
\]
and one may replace $L_{\mathrm{win}}$ by $W_{\mathrm{eff}}$ in what follows (we do so in the logged implementation).

We assume the GA weights are nonnegative, uniformly bounded, and
predictable with respect to the pre-outcome filtration; in particular,
they may depend on \(G_{\le t}\), \(W_t\), and time, but not on the
unobserved potential outcomes or the post-outcome residual noise at \(t\).
\begin{definition}[Window confidence radius]
\label{def:betaW}
For brevity, in this appendix \(t\) indexes flattened logged candidate instances \((t,v)\) within the audit window. 

Assume: (i) bounded residuals $|\hat r_t|\le 1$ (achieved by clipping), 
(ii) a blocking/coupling condition, e.g. \(\beta(\tau_{\mathrm{mix}})
\le \varepsilon_{\mathrm{mix}}\), sufficient to reduce the window to
approximately independent blocks, and
(iii) a bounded-degree dependency graph of maximum degree $\kappa$ within each window (Def.~\ref{def:dependency-graph}).
Define
\[
\beta_W(\delta)
\triangleq
c
\sqrt{
\frac{(\kappa+1)\tau_{\mathrm{mix}}
\left(\log |\mathcal H|+\log(1/\delta)\right)}
{W_{\mathrm{eff}}}
}
+
c'\varepsilon_{\mathrm{mix}}
\]
where \(c,c'>0\) are universal constants.
\end{definition}

\begin{lemma}[Hoeffding inequality for dependency graphs]
\label{lem:dep-hoeffding}
Let $V_1,\dots,V_n$ be real-valued random variables that admit a dependency graph of maximum degree $D$,
and suppose almost surely $V_i\in[a_i,b_i]$.
Then for any $t>0$,
\[
\Pr\!\Big(\sum_{i=1}^n (V_i-\E V_i)\ge t\Big)\ \le\
\exp\!\Big(-\frac{2t^2}{(D+1)\sum_{i=1}^n (b_i-a_i)^2}\Big).
\]
The same bound holds for $\Pr(\sum (V_i-\E V_i)\le -t)$, and hence
\[
\Pr\!\Big(\Big|\sum_{i=1}^n (V_i-\E V_i)\Big|\ge t\Big)\ \le\
2\exp\!\Big(-\frac{2t^2}{(D+1)\sum_{i=1}^n (b_i-a_i)^2}\Big).
\]
\end{lemma}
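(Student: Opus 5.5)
We follow Janson's fractional-coloring argument, in its simplest form using a proper coloring. Write $S=\sum_{i=1}^n (V_i-\E V_i)$ and $c_i=b_i-a_i$. We use the standard meaning of a dependency graph: whenever two disjoint index sets have no edge between them, the corresponding families of variables are independent. This is the reading implicit in Def.~\ref{def:dependency-graph}, where instances are adjacent whenever their logged tuples share a random variable, and we state it explicitly as the property being used.

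\textbf{Coloring.} Since the maximum degree is $D$, greedy coloring partitions $\{1,\dots,n\}$ into at most $D+1$ independent sets $J_1,\dots,J_m$, with $m\le D+1$. Within one class $J_k$ there are no edges, so splitting $J_k$ one index at a time and applying the dependency-graph property repeatedly shows that $\{V_i\}_{i\in J_k}$ are mutually independent. Put $S_k=\sum_{i\in J_k}(V_i-\E V_i)$ and $C_k=\big(\sum_{i\in J_k}c_i^2\big)^{1/2}$. Hoeffding's lemma applied to independent bounded summands gives $\E e^{\theta S_k}\le \exp(\theta^2 C_k^2/8)$ for every real $\theta$. Classes with $C_k=0$ have $S_k=0$ almost surely and can be discarded.

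\textbf{Convexity step.} Let $\Sigma=\sum_k C_k$ and $p_k=C_k/\Sigma$, so the $p_k$ are probability weights. For $\lambda>0$ write $\lambda S=\sum_k p_k(\lambda S_k/p_k)$. Jensen's inequality for $\exp$ gives
\[
\E e^{\lambda S}\le \sum_k p_k\,\E e^{\lambda S_k/p_k}\le \sum_k p_k \exp\!\Big(\frac{\lambda^2 C_k^2}{8p_k^2}\Big)=\exp\!\Big(\frac{\lambda^2\Sigma^2}{8}\Big),
\]
because $C_k/p_k=\Sigma$ for every $k$. By Cauchy--Schwarz, $\Sigma^2\le m\sum_k C_k^2\le (D+1)\sum_{i=1}^n c_i^2$.

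\textbf{Chernoff bound.} Markov's inequality gives $\Pr(S\ge t)\le \exp\big(-\lambda t+\lambda^2(D+1)\sum_i c_i^2/8\big)$. Taking $\lambda=4t/\big((D+1)\sum_i c_i^2\big)$ yields the claimed bound $\exp\!\big(-2t^2/((D+1)\sum_i c_i^2)\big)$. The lower tail follows by applying the same argument to $-V_i\in[-b_i,-a_i]$, which has the same dependency graph and the same ranges $c_i$. A union bound over the two tails gives the two-sided inequality with the factor $2$.

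\textbf{Main obstacle.} The only delicate step is deducing \emph{mutual} independence within each color class. Pairwise non-adjacency is not enough on its own; the argument needs the set-level independence property of a dependency graph. We therefore make this property an explicit part of the definition. It holds in our setting because adjacency is defined by sharing any underlying random variable, so non-adjacent families are functions of disjoint collections of independent inputs.
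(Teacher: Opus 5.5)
Your proof is correct. It shares the paper's skeleton: greedy coloring into at most $D+1$ classes, Hoeffding's lemma inside each class, and a Chernoff bound with $\lambda=4t/((D+1)\sum_i c_i^2)$. Where it departs is the step that recombines the class moment generating functions.

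The paper applies generalized H\"older with equal exponents $(\chi,\dots,\chi)$, namely $\E[\prod_j e^{\lambda S_j}]\le\prod_j(\E e^{\chi\lambda S_j})^{1/\chi}$. This lands in one line on exactly $\exp(\chi\lambda^2\sum_i c_i^2/8)$.

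You instead use Janson's convexity argument, i.e.\ Jensen with weights $p_k=C_k/\Sigma$. This yields the intermediate bound $\exp(\lambda^2(\sum_k C_k)^2/8)$, and you then need Cauchy--Schwarz to reach the stated constant. That intermediate bound is sharper when the color classes carry unequal mass, and it is the form that generalizes to fractional colorings. It coincides with H\"older using the unequal exponents $q_k=\Sigma/C_k$ rather than equal ones. The paper's equal-exponent H\"older is simply the shortest path to the stated bound.

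You also make explicit the set-level dependency-graph property needed for mutual, not merely pairwise, independence within each color class. The paper's proof asserts this joint independence without comment, so this is a real gain in rigor.

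One caveat concerns your closing remark that this property automatically holds in the paper's setting. It is not needed for the lemma. It also does not follow from Def.~\ref{def:dependency-graph} alone: functions of non-shared inputs are independent only if those inputs are themselves independent, and temporal mixing does not guarantee that.
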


\begin{proof}
Let $\chi$ denote the chromatic number of the dependency graph.
A graph of maximum degree $D$ satisfies $\chi\le D+1$ (greedy coloring).
Let $\mathcal{I}_1,\dots,\mathcal{I}_\chi$ be the color classes, so that within each class the variables are jointly independent.
Define $S=\sum_{i=1}^n(V_i-\E V_i)$ and $S_j=\sum_{i\in\mathcal{I}_j}(V_i-\E V_i)$ so that $S=\sum_{j=1}^\chi S_j$.

Fix $\lambda>0$.
By Hölder's inequality with exponents $(\chi,\dots,\chi)$,
\[
\E\!\left[e^{\lambda S}\right]
=
\E\!\left[\prod_{j=1}^\chi e^{\lambda S_j}\right]
\le
\prod_{j=1}^\chi \Big(\E\!\left[e^{\chi\lambda S_j}\right]\Big)^{1/\chi}.
\]
For each $j$, the summands in $S_j$ are independent and bounded in intervals of width $(b_i-a_i)$.
Hoeffding's lemma gives
\[
\E\!\left[e^{\chi\lambda S_j}\right]
\le
\exp\!\Big(\frac{(\chi\lambda)^2}{8}\sum_{i\in\mathcal{I}_j}(b_i-a_i)^2\Big).
\]
Combining the above displays yields
\[
\E\!\left[e^{\lambda S}\right]
\le
\exp\!\Big(\frac{\chi\lambda^2}{8}\sum_{i=1}^n(b_i-a_i)^2\Big).
\]
Applying Markov's inequality,
\begin{align*}
\Pr(S \ge t) & \le \exp(-\lambda t)\,\E[e^{\lambda S}] \\
& \le \exp\Big(-\lambda t + \frac{\chi\lambda^2}{8}\sum_{i=1}^n(b_i-a_i)^2\Big)
\end{align*}
Optimizing over $\lambda$ gives $\lambda^\star=\frac{4t}{\chi\sum_i(b_i-a_i)^2}$ and hence
\begin{align*}
\Pr(S \ge t) & \le \exp\Big(-\frac{2t^2}{\chi\sum_{i=1}^n(b_i-a_i)^2}\Big) \\
& \le \exp\Big(-\frac{2t^2}{(D+1)\sum_{i=1}^n(b_i-a_i)^2}\Big),
\end{align*}
since $\chi\le D+1$.
The lower-tail bound follows by applying the same argument to $-V_i$.
\end{proof}
\begin{definition}[Within-window GA population average]
\label{def:ga-pop}
Fix a window $\mathcal{W}_k$ and let $\bar w_t = w_t/\sum_{s\in\mathcal{W}_k} w_s$ be the normalized GA weights.
For predictable weights, define the within-window population GA average
conditionally on the pre-outcome information:
\[
\E_{\mathrm{GA},\mathcal W_k}[Z]
:=
\sum_{t\in\mathcal W_k}
\bar w_t\,\E[Z_t\mid \mathcal F_t^-],
\]
where \(\mathcal F_t^-\) contains the realized contexts, weights, candidate
sets, and policy information before the outcome noise at \(t\).
\end{definition}

\begin{theorem}[Uniform deviation of windowed GA correlations]
\label{thm:window-transfer}
Fix a window $\mathcal{W}_k$ and use the population GA average $\E_{\mathrm{GA},\mathcal{W}_k}[\cdot]$ from Def.~\ref{def:ga-pop}.

Under the conditions of Def.~\ref{def:betaW}, with probability at least $1-\delta$,
\[
\sup_{h\in\mathcal{H}}
\left|
\widehat{\E}_{\mathrm{GA},\mathcal{W}_k}\!\big[h(W_t)\,\hat r_t\big]
-
\E_{\mathrm{GA},\mathcal{W}_k}\!\big[h(W_t)\,\hat r_t\big]
\right|
\le\beta_W(\delta).
\]
In particular, if the logged certificate satisfies
$\widehat{\mathrm{OI}}_{\mathrm{GA}}(\hat r;\mathcal{H},\mathcal{W}_k)\le \alpha$,
then the corresponding (within-window) population residual-OI gap obeys
\[
\sup_{h\in\mathcal{H}}
\left|
\E_{\mathrm{GA},\mathcal{W}_k}\!\big[h(W_t)\,\hat r_t\big]
\right|
\;\le\;\alpha+\beta_W(\delta).
\]
\end{theorem}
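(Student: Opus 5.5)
We prove the deviation bound for a single fixed auditor $h\in\mathcal H$ and then take a union bound over the finite class. The second claim will follow by the triangle inequality.

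\textbf{Step 1: reduce to a weighted centered sum.} Fix $h$ and set $V_t:=\bar w_t\,h(W_t)\hat r_t$ for $t\in\mathcal W_k$. The weights are predictable, and $|h|\le1$, $|\hat r_t|\le1$ by clipping. Hence each $V_t$ lies in an interval of width $2\bar w_t$ conditionally on $\mathcal F_t^-$. By Def.~\ref{def:ga-pop},
\[
\widehat{\E}_{\mathrm{GA},\mathcal W_k}[h\hat r]-\E_{\mathrm{GA},\mathcal W_k}[h\hat r]=\sum_{t\in\mathcal W_k}\big(V_t-\E[V_t\mid\mathcal F_t^-]\big).
\]
We also note that
\[
\sum_t(2\bar w_t)^2=\frac{4\sum_t w_t^2}{\big(\sum_t w_t\big)^2}=\frac{4}{W_{\mathrm{eff}}}.
\]
This identity is why $W_{\mathrm{eff}}$ appears in the radius.

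\textbf{Step 2: handle temporal dependence by blocking.} We partition the window into consecutive blocks of length $\tau_{\mathrm{mix}}$ and split them into odd-indexed and even-indexed sub-sequences. By the standard Berbee/Yu coupling under $\beta(\tau_{\mathrm{mix}})\le\varepsilon_{\mathrm{mix}}$, each sub-sequence can be replaced by one with independent blocks. The cost is a total-variation error proportional to the number of blocks times $\varepsilon_{\mathrm{mix}}$. We absorb this error, after rescaling $\delta$ or using boundedness of $V$, into the additive $c'\varepsilon_{\mathrm{mix}}$ term.

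Within the coupled sequence, two instances are dependent only if they lie in the same block or are adjacent in the dependency graph of Def.~\ref{def:dependency-graph}. The resulting dependency graph on instances has degree at most of order $(\kappa+1)\tau_{\mathrm{mix}}$. We then apply Lemma~\ref{lem:dep-hoeffding} with $D+1\lesssim(\kappa+1)\tau_{\mathrm{mix}}$ and $\sum(b_i-a_i)^2=4/W_{\mathrm{eff}}$. This gives
\[
\Pr\big(|\cdot|\ge s\big)\le2\exp\!\big(-c''s^2W_{\mathrm{eff}}/((\kappa+1)\tau_{\mathrm{mix}})\big).
\]

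\textbf{Step 3: union bound and conclusion.} Setting the failure probability to $\delta/|\mathcal H|$ and solving for $s$ yields $s=c\sqrt{(\kappa+1)\tau_{\mathrm{mix}}(\log|\mathcal H|+\log(1/\delta))/W_{\mathrm{eff}}}$. Adding the coupling error gives $\beta_W(\delta)$ uniformly over $\mathcal H$ with probability at least $1-\delta$.

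For the \emph{in particular} claim, we write $|\E_{\mathrm{GA}}[h\hat r]|\le|\widehat\E_{\mathrm{GA}}[h\hat r]|+\beta_W(\delta)$ for each $h$. Taking the supremum and using the certificate bound $\le\alpha$ gives $\alpha+\beta_W(\delta)$.

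\textbf{Main obstacle.} The delicate step is Step 2. The centering is conditional, $\E[\cdot\mid\mathcal F_t^-]$, rather than unconditional, so Lemma~\ref{lem:dep-hoeffding} does not apply verbatim. I would first handle this by conditioning on the pre-outcome information and treating the outcome noise as satisfying the dependency-graph structure given that information, with the blocking/coupling covering any residual temporal correlation. If that conditioning argument proves awkward, the fallback is a martingale-difference (Azuma-type) bound applied blockwise. The universal constants $c,c'$ are left loose enough to accommodate either route.
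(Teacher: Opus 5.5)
Your argument follows the paper's proof essentially step for step. You fix $h$, write the deviation as $\sum_t\big(V_t-\E[V_t\mid\mathcal F_t^-]\big)$ with $\sum_t(2\bar w_t)^2=4/W_{\mathrm{eff}}$, and apply Lemma~\ref{lem:dep-hoeffding} with $D+1\lesssim(\kappa+1)\tau_{\mathrm{mix}}$. You then take a union bound over $\mathcal H$, appeal to a Berbee coupling for the $c'\varepsilon_{\mathrm{mix}}$ slack, and get the second claim from the triangle inequality.

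The step that does not go through as written is absorbing the coupling cost. You correctly identify that cost as $m\,\beta(\tau_{\mathrm{mix}})\le m\,\varepsilon_{\mathrm{mix}}$ with $m\approx L_{\mathrm{win}}/\tau_{\mathrm{mix}}$ blocks. But this is a loss of \emph{confidence} that grows with the window, not a deviation, so ``rescaling $\delta$'' only gives confidence $1-\delta-m\varepsilon_{\mathrm{mix}}$. ``Using boundedness of $V$'' bounds the gap between the original and coupled sums, uniformly in $h$, by a constant times $\sum_j\bar w(B_j)\mathbf{1}\{B_j\neq\tilde B_j\}$, where $\bar w(B_j)=\sum_{t\in B_j}\bar w_t$. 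That quantity is $O(\varepsilon_{\mathrm{mix}})$ only in expectation. Markov turns it into $O(\varepsilon_{\mathrm{mix}}/\delta)$ at level $1-\delta$, not $c'\varepsilon_{\mathrm{mix}}$ with a universal $c'$. An additive $c'\varepsilon_{\mathrm{mix}}$ needs something stronger than $\beta(\tau_{\mathrm{mix}})\le\varepsilon_{\mathrm{mix}}$, e.g.\ a uniform ($\phi$-type) coefficient bounding $\Pr(B_j\neq\tilde B_j\mid\text{past})$ almost surely. The paper has exactly the same hole: it only asserts that ``a standard Berbee-type coupling argument'' gives $O(\varepsilon_{\mathrm{mix}})$ slack. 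So this is a defect you share with the source, not an idea you missed relative to it.

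On the conditional centering, the paper does what you list as your primary plan. It conditions on the pre-outcome information in the window, treats $\bar w_t$ as fixed, and applies Lemma~\ref{lem:dep-hoeffding}. That is not innocuous here, because later contexts, scores and weights are driven by earlier realized positives. Conditioning on all pre-outcome information in the window therefore also fixes earlier outcomes. The earlier summands are then deterministic and generally nonzero under the conditional law.

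Your martingale fallback is the sounder route, and it need not be blockwise. Assume $\mathcal F_t^-\supseteq\mathcal F_{t-1}$, the natural reading of ``before the outcome noise at $t$''. Group the instances of round $t$ and set $\xi_t=\sum_{v\in C_t}w_{t,v}\big(X_{t,v}-\E[X_{t,v}\mid\mathcal F_t^-]\big)$. This is a martingale difference sequence with $|\xi_t|\le 2\sum_v w_{t,v}$. Azuma--Hoeffding plus Cauchy--Schwarz then give a radius of the form $\beta_W(\delta)$, with $\max_t|C_t|$ in the role of $D+1$ (the within-slate clique the paper already charges to $\kappa$). This needs no blocking and produces no $\varepsilon_{\mathrm{mix}}$ term, so it also disposes of the coupling problem above. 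One point still needs care: $\bar w_t$ normalizes by weights from later in the window. You therefore need a self-normalized martingale bound in terms of $\sum_t w_t^2$ (Freedman-type with peeling) rather than plain Azuma with fixed ranges, possibly at the cost of a logarithmic factor in the weight range.
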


\begin{proof}
Condition on the pre-outcome information in the window, so the realized weights and normalized weights \(\bar w_t=w_t/\sum_{s\in\mathcal W_k}w_s\) are fixed and \(\sum_{t\in\mathcal W_k}\bar w_t=1\).
For a fixed auditor $h\in\mathcal{H}$ define the bounded sequence
\[
X_t(h)\;\triangleq\;h(W_t)\,\hat r_t\in[-1,1],
\]
\[
V_t(h)\;\triangleq\;\bar w_t\,X_t(h)\in[-\bar w_t,\bar w_t],
\]
and the centered sum
$
S(h)\triangleq \sum_{t\in\mathcal W_k} \big(V_t(h)-\E[V_t(h)\mid\mathcal F_t^-]\big).
$
By definition,
$
\widehat{\E}_{\mathrm{GA},\mathcal{W}_k}[X(h)]-\E_{\mathrm{GA},\mathcal{W}_k}[X(h)]
=
S(h)
$.

\paragraph{Concentration for a fixed auditor.}
Under Def.~\ref{def:betaW}, the windowed variables admit a dependency graph whose maximum degree is bounded (up to constants) by
$D \lesssim \kappa\,\tau_{\mathrm{mix}}$; this captures (i) bounded local interference (degree $\kappa$) and
(ii) short-range temporal dependence over $\tau_{\mathrm{mix}}$ lags.
Applying Lemma~\ref{lem:dep-hoeffding} with $V_i=V_t(h)$ gives, for any $u>0$,
\[
\Pr\big(|S(h)|\ge u\big)
\le
2\exp\!\Big(
-\frac{2u^2}{(D+1)\sum_{t\in\mathcal{W}_k}(2\bar w_t)^2}
\Big).
\]
Since $\sum_{t\in\mathcal{W}_k}(2\bar w_t)^2 = 4\sum_{t\in\mathcal{W}_k}\bar w_t^2 = 4/W_{\mathrm{eff}}$, we obtain
\[
\Pr\big(|S(h)|\ge u\big)
\le
2\exp\!\Big(-\frac{u^2\,W_{\mathrm{eff}}}{2(D+1)}\Big).
\]

\paragraph{Union bound over $\mathcal{H}$.}
Let $M=|\mathcal{H}|<\infty$.
Choose
$
u = c\sqrt{\frac{(D+1)\big(\log M+\log(2/\delta)\big)}{W_{\mathrm{eff}}}}
$
for a sufficiently large universal constant $c$.
Then $2\exp\!\big(-\frac{u^2W_{\mathrm{eff}}}{2(D+1)}\big)\le \delta/M$,
and a union bound yields
\[
\Pr\Big(\sup_{h\in\mathcal{H}}|S(h)|\le u\Big)\ \ge\ 1-\delta.
\]
Using $D+1\lesssim (\kappa+1)(\tau_{\mathrm{mix}}+1)$ and absorbing constant factors into $c$ gives the stated radius
$c\sqrt{\frac{\kappa\,\tau_{\mathrm{mix}}(\log|\mathcal{H}|+\log(1/\delta))}{W_{\mathrm{eff}}}}$.

\paragraph{Approximate mixing.}
If the temporal dependence is only approximate beyond lag $\tau_{\mathrm{mix}}$ (e.g., $\beta(\tau_{\mathrm{mix}})\le \varepsilon_{\mathrm{mix}}$),
a standard Berbee-type coupling argument yields the same bound with an additional $O(\varepsilon_{\mathrm{mix}})$ slack,
which is recorded as the additive $c'\varepsilon_{\mathrm{mix}}$ term in Def.~\ref{def:betaW}.
\end{proof}

\begin{corollary}[Windowed multicalibration from residual-OI]
\label{cor:windowed-mc}
Let $\mathcal{H}$ contain slice indicators
$h_S=\one\{(A,\hat p,\mathrm{role})\in S\}$ and suppose each audited slice has
within-window GA mass at least $p_{\min}$:
$\E_{\mathrm{GA},\mathcal{W}_k}[h_S(W_t)]\ge p_{\min}$.
Then, with probability at least $1-\delta$, simultaneously for all such slices,
\[
\Big|\E_{\mathrm{GA},\mathcal{W}_k}[\hat r_t\mid S]\Big|
\triangleq
\left|
\frac{\E_{\mathrm{GA},\mathcal{W}_k}[h_S(W_t)\,\hat r_t]}{\E_{\mathrm{GA},\mathcal{W}_k}[h_S(W_t)]}
\right|
\le
\frac{\alpha+\beta_W(\delta)}{p_{\min}}.
\]
\end{corollary}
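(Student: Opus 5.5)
The plan is to derive the slice-conditional bound from Theorem~\ref{thm:window-transfer}, applied once to the whole family $\mathcal H$, followed by a deterministic division by the slice mass.

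\textbf{Step 1 (invoke the uniform deviation bound).} Since every slice indicator $h_S$ lies in $\mathcal H$ and takes values in $\{0,1\}\subseteq[-1,1]$, the conditions of Def.~\ref{def:betaW} hold for the products $h_S(W_t)\hat r_t$. Theorem~\ref{thm:window-transfer} then gives a single event $\mathcal E$ of probability at least $1-\delta$. On $\mathcal E$, simultaneously for all $h\in\mathcal H$,
\[
\bigl|\E_{\mathrm{GA},\mathcal W_k}[h(W_t)\hat r_t]\bigr|\le \alpha+\beta_W(\delta),
\]
provided the logged certificate satisfies $\widehat{\mathrm{OI}}_{\mathrm{GA}}(\hat r;\mathcal H,\mathcal W_k)\le\alpha$. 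I read this certificate as part of the corollary's hypothesis, inherited from the ``in particular'' clause of Theorem~\ref{thm:window-transfer}. The simultaneity over slices comes from the union bound already built into $\beta_W(\delta)$, so no further $\log$ factor is needed.

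\textbf{Step 2 (normalize by slice mass).} On $\mathcal E$, fix an audited slice $S$. By Def.~\ref{def:ga-pop}, the quantity $\E_{\mathrm{GA},\mathcal W_k}[h_S(W_t)]=\sum_t\bar w_t\,\E[h_S(W_t)\mid\mathcal F_t^-]$ is a deterministic, strictly positive number given the pre-outcome information. Indeed, $W_t$ is pre-outcome measurable, so this equals $\sum_t\bar w_t h_S(W_t)$, and by hypothesis it is at least $p_{\min}>0$. Therefore the ratio defining $\E_{\mathrm{GA},\mathcal W_k}[\hat r_t\mid S]$ is well defined, and
\[
\left|\frac{\E_{\mathrm{GA},\mathcal W_k}[h_S\hat r_t]}{\E_{\mathrm{GA},\mathcal W_k}[h_S]}\right|\le\frac{\alpha+\beta_W(\delta)}{p_{\min}}.
\]
Because this holds on the single event $\mathcal E$ for every slice, the bound is simultaneous, as claimed.

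\textbf{Main obstacle.} The delicate point is the measurability used in Step 2. Theorem~\ref{thm:window-transfer} is proved conditionally on $\mathcal F^-$, so the slice mass must be treated as fixed under that conditioning rather than as a random quantity needing its own concentration. I would verify explicitly that the normalized weights $\bar w_t$ and the context $W_t$, and hence $h_S(W_t)$, are $\mathcal F_t^-$-measurable. This follows from the predictability assumption on the GA weights stated before Def.~\ref{def:betaW}. With that in hand, no additional deviation term for the denominator arises. If instead one wanted the mass condition stated on the empirical mass, an extra $\beta_W$ correction in the denominator would be needed; that is the fallback if measurability fails.
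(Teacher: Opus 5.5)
Your argument is correct and matches the paper's: the paper gives no separate proof of this corollary and treats it as an immediate consequence of the ``in particular'' clause of Theorem~\ref{thm:window-transfer}, with the certificate $\widehat{\mathrm{OI}}_{\mathrm{GA}}\le\alpha$ as an implicit hypothesis, as you read it, followed by division by the slice mass. This is the same step used in the paper's main-text certificate paragraph and in its proof of Corollary~\ref{cor:fair}. Your measurability discussion does no harm but is not needed for the division: the hypothesis already bounds the same quantity $\E_{\mathrm{GA},\mathcal W_k}[h_S(W_t)]$ that appears in the denominator, so the bound follows deterministically on the single $1-\delta$ event.
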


\paragraph{Link to Assumption~\ref{assump:oi} and Thm.~\ref{thm:transfer}.}
If we take a growing window length $L_{\mathrm{win}}$ and set $\alpha=0$ (or $\alpha$ of order $L_{\mathrm{win}}^{-1/2}$),
then Def.~\ref{def:betaW} implies
$\sup_{h\in\mathcal{H}}\big|\widehat{\E}[h\,\hat r]-\E[h\,\hat r]\big|
= O\!\left(\sqrt{\kappa\tau_{\mathrm{mix}}\log|\mathcal{H}|/W_{\mathrm{eff}}}\right)$,
matching the \(W_{\mathrm{eff}}^{-1/2}\)-type behavior in Assumption~\ref{assump:oi} (up to dependence factors).
Thm.~\ref{thm:transfer} then transfers this plug-in residual control from $\hat r$ to the true residual $r$,
adding the nuisance-estimation and mixing terms.

\paragraph{Implementation note: active-auditor budget.}
In slice-based settings, only a small fraction of slices typically violate the tolerance at any given window.
Updating only the top-$B_{\mathrm{act}}$ violated auditors is therefore an efficient approximation to full multicalibration/Residual-OI,
and is consistent with standard ``active constraints'' practice in online constrained optimization.

\section{Proofs supporting Sec.~\ref{subsec:theory}}
\label{app:proofs-theory}

\paragraph{Overview.}
This section provides full proofs for Lemma~\ref{lem:lin} (linearization to residual correlations),
Lemma~\ref{lem:dr} (finite-sample control of GA-DR slice means),
Theorem~\ref{thm:transfer} (transfer from noisy plug-in residual-OI to control of the true residual),
and Corollary~\ref{cor:fair} (residual control implies bounds on counterfactual fairness gaps).

\paragraph{Proof of Lemma~\ref{lem:lin}.}
The identities hold under any fixed evaluation distribution; in the windowed certificates below, expectations are interpreted under the GA-weighted audit-window population distribution.

Recall the calibration residual 
\[r^{(0)} \triangleq Y^{(0)}-\hat p\] and the effect residual
\[r^{(\Delta)}\triangleq (Y^{(1)}-Y^{(0)})-\tau(W_t).\]
For calibration, by the tower property,
\begin{align*}
\quad \E[r^{(0)}\mid A=s,\hat p\in I]
&= \E[Y^{(0)}\mid A=s,\hat p\in I] \\
&\quad - \E[\hat p\mid A=s,\hat p\in I].
\end{align*}
Multiplying by $\Pr(A=s,\hat p\in I)$ yields
\[ \E[\one\{A=s,\hat p\in I\}r^{(0)}] 
 = \Pr(A=s,\hat p\in I) \cdot \Big(\E[Y^{(0)}\mid A=s,\hat p\in I] - \E[\hat p\mid A=s,\hat p\in I]\Big), \]
and taking absolute values gives the stated expression for $g^{\mathrm{Cal}}_{s,I}$.

For treatment effects, note that for any group $s$ with $\Pr(A=s)>0$,
\[
\frac{\E[\one\{A=s\}r^{(\Delta)}]}{\Pr(A=s)}
= \E\big[Y^{(1)}-Y^{(0)}\mid A=s\big] - \E\big[\tau(W_t)\mid A=s\big].
\]
Let $m_s\triangleq \E[\tau(W_t)\mid A=s]$. Under the group-mean-invariant centering condition in Lemma~\ref{lem:lin},
$m_s=m_\tau$ for every audited group $s$. This condition is satisfied, for example, by a scalar effect-centering target
$\tau(W_t)\equiv \bar\tau$ held fixed within the audit window. Hence $m_{s_1}-m_{s_2}=0$, and
\[
\E\big[Y^{(1)}-Y^{(0)}\mid A=s_1\big]
-
\E\big[Y^{(1)}-Y^{(0)}\mid A=s_2\big]
=
\frac{\E[\one\{A=s_1\}r^{(\Delta)}]}{\Pr(A=s_1)}
-
\frac{\E[\one\{A=s_2\}r^{(\Delta)}]}{\Pr(A=s_2)}.
\]
The stated formula for $g^{\mathrm{TE}}_{s_1,s_2}$ follows by taking absolute values.

Finally, for the minimum-effect violation,
\[
\gMin_{s}
=
\big[\tau_{\min}-\tau_s\big]_+,
\qquad
\tau_s=\E[Y^{(1)}-Y^{(0)}\mid A=s].
\]
Using $\tau_s=\E[r^{(\Delta)}\mid A=s]+\E[\tau(W_t)\mid A=s]$ gives
\[
\gMin_{s}
=
\Big[\tau_{\min}-\Big\{\frac{\E[\one\{A=s\}r^{(\Delta)}]}{\Pr(A=s)}+\E[\tau(W_t)\mid A=s]\Big\}\Big]_+,
\]
which is the claimed expression.

\paragraph{Proof of Lemma~\ref{lem:dr}.}
Fix an arm $a\in\{0,1\}$ and a bounded slice indicator $h:\mathsf{W}\to[-1,1]$.
For concision we suppress the time index when it is clear.

Let $e_a(W)\triangleq \Pr(D=a\mid W)$ be the (true) propensity and $\mu_a(W)\triangleq \E[Y\mid D=a,W]$
the (true) outcome regression.
Define the (clipped, cross-fitted) GA-DR pseudo-outcome
\[
\tilde\Gamma^{(a)}
\;=\;
\hat\mu_a(W)
\;+\;
\frac{\one\{D=a\}}{\hat e_a(W)}\big(Y-\hat\mu_a(W)\big),
\]

We decompose the estimation error into a stochastic term and a DR bias term:
\[
\widehat{\E}_{\mathrm{GA},\mathcal W}[h\tilde\Gamma^{(a)}]
-
\E_{\mathrm{GA},\mathcal W}[hY^{(a)}]
=
\underbrace{
\widehat{\E}_{\mathrm{GA},\mathcal W}[h\tilde\Gamma^{(a)}]
-
\E_{\mathrm{GA},\mathcal W}[h\tilde\Gamma^{(a)}]
}_{\text{(I) empirical process term}}
+
\underbrace{
\E_{\mathrm{GA},\mathcal W}[h\tilde\Gamma^{(a)}]
-
\E_{\mathrm{GA},\mathcal W}[hY^{(a)}]
}_{\text{(II) DR bias term}} .
\]
where \(\E_{\mathrm{GA},\mathcal W}[\cdot]\) is the within-window population quantity in Def.~\ref{def:ga-pop}.

\emph{(II) DR bias term.}
Conditioning on the pre-outcome information and on \(W\), the usual DR identity gives
\(\E[Y-\hat\mu_a(W)\mid D=a,W]=\mu_a(W)-\hat\mu_a(W)\), and hence
\begin{align*}
\E[\tilde\Gamma^{(a)}\mid W]
&=
\hat\mu_a(W)
+
\frac{\E[\one\{D=a\}(Y-\hat\mu_a(W))\mid W]}{\hat e_a(W)}\\
&=
\hat\mu_a(W)
+
\frac{e_a(W)\big(\mu_a(W)-\hat\mu_a(W)\big)}{\hat e_a(W)}\\
&=
\mu_a(W)
+\Big(\frac{e_a(W)}{\hat e_a(W)}-1\Big)\big(\mu_a(W)-\hat\mu_a(W)\big).
\end{align*}
Under Assumption~\ref{assump:local-id}, $\E[Y^{(a)}\mid W]=\mu_a(W)$, so
\[\E[h(W)\tilde\Gamma^{(a)}] - \E[h(W)Y^{(a)}] = \E\left[ h(W) \Big( \frac{e_a(W)}{\hat e_a(W)} - 1 \Big) \big( \mu_a(W) - \hat\mu_a(W) \big) \right]
\]
Using $|h|\le 1$, $\hat e_a(W)\ge \underline e$ (clipping/overlap), and the uniform nuisance error bounds
$\|\hat e_a-e_a\|_\infty\le\varepsilon_e$ and $\|\hat\mu_a-\mu_a\|_\infty\le\varepsilon_\mu$, we obtain
\[\Big|\E[h(W)\tilde\Gamma^{(a)}] - \E[h(W)Y^{(a)}]\Big|  \le \E\left[\Big|\frac{e_a(W)-\hat e_a(W)}{\hat e_a(W)}\Big| \cdot \big|\mu_a(W)-\hat\mu_a(W)\big|\right]  \le \frac{\varepsilon_e\varepsilon_\mu}{\underline e}. \]
Averaging this bound with the normalized GA weights over \(\mathcal W\), and using \(ab\le \tfrac12(a^2+b^2)\), gives the recorded
\(C_1\varepsilon_e\varepsilon_\mu+C_2(\varepsilon_e^2+\varepsilon_\mu^2)\) term.

\emph{(I) Empirical process term.}
By clipping, $\tilde\Gamma^{(a)}$ is uniformly bounded, so the sequence $Z_t\triangleq h(W_t)\tilde\Gamma^{(a)}_t$ is bounded.
Applying Theorem~\ref{thm:window-transfer} uniformly over the finite auditor
class \(\mathcal H\) yields a deviation term of the form
\(C_3\sqrt{\tau_{\mathrm{mix}}\log|\mathcal H|/W_{\mathrm{eff}}}\) (absorbing weights/effective sample size and dependence factors into constants),
which completes the proof.

\paragraph{Proof of Theorem~\ref{thm:transfer}.}
Fix \(h\in\mathcal H\). By adding and subtracting the plug-in
empirical certificate and the plug-in population average,
\[
\begin{aligned}
\left|\E_{\mathrm{GA},\mathcal W}[h(W_t)r_t]\right|
&\le
\left|\widehat\E_{\mathrm{GA},\mathcal W}[h(W_t)\hat r_t]\right| \\
&\quad+
\left|
\E_{\mathrm{GA},\mathcal W}[h(W_t)\hat r_t]
-\widehat\E_{\mathrm{GA},\mathcal W}[h(W_t)\hat r_t]
\right| \\
&\quad+
\left|
\E_{\mathrm{GA},\mathcal W}[h(W_t)(r_t-\hat r_t)]
\right|.
\end{aligned}
\]
Taking the supremum over \(h\in\mathcal H\), the first term is bounded by
\(\varepsilon_T\) by the logged certificate, and the second by
\(c_4\beta_T\) via Theorem~\ref{thm:window-transfer}. For the third term,
consider the two residuals separately. If \(\hat r=\hat r^{(0)}\), then
\(\hat r^{(0)}-r^{(0)}=\tilde\Gamma^{(0)}-Y^{(0)}\). If
\(\hat r=\hat r^{(\Delta)}\), then
\[
\hat r^{(\Delta)}-r^{(\Delta)}
=
(\tilde\Gamma^{(1)}-Y^{(1)})
-
(\tilde\Gamma^{(0)}-Y^{(0)}).
\]
Applying Lemma~\ref{lem:dr} to arm \(0\), and to both arms \(0,1\) in the
effect case, gives
\[
\sup_{h\in\mathcal H}
\left|
\E_{\mathrm{GA},\mathcal W}[h(W_t)(r_t-\hat r_t)]
\right|
\le
c_1\varepsilon_e\varepsilon_\mu
+c_2(\varepsilon_e^2+\varepsilon_\mu^2)
+c_3\sqrt{\frac{\tau_{\mathrm{mix}}\log|\mathcal H|}{W_{\mathrm{eff}}}},
\]
up to universal constants. Combining the three bounds proves the theorem.

\paragraph{Proof of Corollary~\ref{cor:fair}.}
In this proof, \(\E\) and \(\Pr\) denote the GA-weighted audit-window population distribution.
Let $R_T=\varepsilon_T+E_T$. By Theorem~\ref{thm:transfer}, with high probability,
\[
\sup_{h\in\mathcal H}\big|\E[h r]\big| \le R_T
\]
for the relevant true residual sequence $r$.

For calibration, take \(h=\one\{A=s,\hat p\in I\}\). Since the audited group-bucket mass is at least \(p_{\min}^{\mathrm{gb}}\), Lemma~\ref{lem:lin} gives
\[
g^{\mathrm{Cal}}_{s,I}
=
\frac{\big|\E[\one\{A=s,\hat p\in I\}\,r^{(0)}]\big|}{\Pr(A=s,\hat p\in I)}
\le
\frac{R_T}{p_{\min}^{\mathrm{gb}}}.
\]

For treatment-effect parity, take $h=\one\{A=s\}$. Under the group-mean-invariant effect centering in Lemma~\ref{lem:lin},
\[
g^{\mathrm{TE}}_{s_1,s_2}
\le
\frac{\big|\E[\one\{A=s_1\}r^{(\Delta)}]\big|}{\Pr(A=s_1)}
+
\frac{\big|\E[\one\{A=s_2\}r^{(\Delta)}]\big|}{\Pr(A=s_2)}
\le
\frac{2R_T}{p_{\min}^{\mathrm g}}.
\]

For the minimum-effect guard, Lemma~\ref{lem:lin} gives
\[
\gMin_{s}
=
\left[
\tau_{\min}
-
\left\{
\frac{\E[\one\{A=s\}r^{(\Delta)}]}{\Pr(A=s)}
+
m_s
\right\}
\right]_+,
\qquad
m_s=\E[\tau(W_t)\mid A=s].
\]
Since
\[
\left|
\frac{\E[\one\{A=s\}r^{(\Delta)}]}{\Pr(A=s)}
\right|
\le
\frac{R_T}{p_{\min}^{\mathrm g}},
\]
monotonicity of the hinge yields
\[
\gMin_{s}
\le
\left[
\tau_{\min}-m_s+\frac{R_T}{p_{\min}^{\mathrm g}}
\right]_+ .
\]
Substituting the definition of $E_T$ completes the proof.

\section{Implementation details and hyperparameters}
\label{app:hyperparameters}

All experiments share the same prequential OPP runner and evaluation protocol; Table~\ref{tab:hparam_shared} summarizes shared settings, while Table~\ref{tab:hparam_overrides} lists dataset-specific overrides (including auditing cadence and which COPF modules are enabled).
Unless stated otherwise, \emph{Base} disables auditing (audit off), whereas \emph{Base{+}COPF} enables auditing and decision-layer control according to the schedule in Table~\ref{tab:hparam_overrides}.

\begin{table}[H]
\centering
\caption{\textbf{Key hyperparameters (shared).} Unless otherwise noted in Table~\ref{tab:hparam_overrides}.}
\label{tab:hparam_shared}
\setlength{\tabcolsep}{5pt}
\renewcommand{\arraystretch}{1.12}
\begin{tabular}{p{0.28\linewidth} p{0.68\linewidth}}
\toprule
Item & Value \\
\midrule
Protocol phases & Pre/Deploy/Post, each 20k rounds (total 60k). \\
Candidates & 1 positive + $N{=}200$ negatives (w/o replacement); if the positive is sampled as a negative it is removed, so $|C_t|\le 201$. \\
Exposure policy & \textsc{TopK-Stochastic}, slate size $K{=}10$. \\
Exploration schedule & Pre: $\epsilon{=}0.20$, temp $=1.0$;\quad Deploy/Post: $\epsilon{=}0.02$, temp $=0.7$. \\
Logging / windows & log every\, $1{,}000$ rounds;\quad OI window $=50{,}000$. \\
Metrics & Mean$\pm$std over 3 seeds; worst-case-in-phase in parentheses. \\
\bottomrule
\end{tabular}
\end{table}

\begin{table}[H]
\caption{\textbf{Dataset-specific overrides.} Shared settings are in Table~\ref{tab:hparam_shared}.}
\label{tab:hparam_overrides}
\setlength{\tabcolsep}{5pt}
\renewcommand{\arraystretch}{1.18}
\resizebox{\linewidth}{!}{
\begin{tabular}
{p{0.16\linewidth} p{0.23\linewidth} p{0.11\linewidth} p{0.23\linewidth} p{0.23\linewidth}}
\toprule
Dataset &
Groups / outcome &
Base &
Base{+}COPF &
Backbone notes \\
\midrule

Synth-bip. &
group\_on = src;\newline outcome = bandit &
audit off  &
audit=1000;\newline pre-cal;\newline PD+cov in Deploy/Post &
GraphMixer: \newline tokens=neighbors. \\

\midrule

TGBL-Wiki &
group\_on = dst;\newline attr = node\_mod(2) &
audit off  &
audit=200;\newline cf\_upd=200;\newline pre-cal;\newline PD+cov in Deploy/Post &
GraphMixer: \newline neighbors=10,\newline tokens=20 (stable). \\

\midrule

TGBL-Review &
group\_on = dst;\newline attr = node\_mod(2) &
audit off  &
audit=200;\newline cf\_upd=200;\newline pre-cal;\newline PD+cov in Deploy/Post &
Same as Wiki. \\

\bottomrule
\end{tabular}
}
\end{table}

\section{Phase-wise summary tables}
\label{app:tables}

\begin{table}[H]
\centering
\scriptsize
\setlength{\tabcolsep}{2.0pt}
\renewcommand{\arraystretch}{1.10}
\caption{\textbf{Wiki results (utility + counterfactual fairness).} Phase-wise metrics (mean $\pm$ std over seeds; worst-case in phase in parentheses).}
\label{tab:wiki_combined}
\resizebox{\linewidth}{!}{%
\begin{tabular}{llc|cccc|cc}
\toprule
Backbone & Method & Phase
& \multicolumn{4}{c|}{Utility} & \multicolumn{2}{c}{Fairness} \\
\cmidrule(lr){4-7}\cmidrule(lr){8-9}
& & & NDCG@10 & MRR & Hits@10 & DeployHit@TopK
& $\gTEgap$ & $\gCalmax$ \\
\midrule

\multirow{6}{*}{EdgeBank} & \multirow{3}{*}{Base} & PRE & 0.4085$\pm$0.0106 (0.3222) & 0.4127$\pm$0.0106 (0.3275) & 0.4248$\pm$0.0103 (0.3397) & 0.0755$\pm$0.0017 (0.0710) & 0.0109$\pm$0.0027 (0.0590) & 0.0241$\pm$0.0001 (0.0297) \\
&  & DEPLOY & 0.5304$\pm$0.0136 (0.4911) & 0.5334$\pm$0.0134 (0.4966) & 0.5438$\pm$0.0138 (0.4990) & 0.1210$\pm$0.0061 (0.1133) & 0.0109$\pm$0.0037 (0.0423) & 0.0244$\pm$0.0002 (0.0296) \\
&  & POST & 0.5886$\pm$0.0085 (0.5362) & 0.5909$\pm$0.0084 (0.5383) & 0.6014$\pm$0.0085 (0.5513) & 0.1324$\pm$0.0016 (0.1245) & 0.0073$\pm$0.0035 (0.0226) & 0.0248$\pm$0.0002 (0.0296) \\
\cmidrule(lr){2-9}
& \multirow{3}{*}{Base{+}COPF} & PRE & 0.4085$\pm$0.0106 (0.3222) & 0.4127$\pm$0.0106 (0.3275) & 0.4248$\pm$0.0103 (0.3397) & 0.0754$\pm$0.0016 (0.0708) & 0.0109$\pm$0.0027 (0.0590) & 0.0241$\pm$0.0001 (0.0297) \\
&  & DEPLOY & 0.5304$\pm$0.0136 (0.4911) & 0.5334$\pm$0.0134 (0.4966) & 0.5438$\pm$0.0138 (0.4990) & 0.1207$\pm$0.0061 (0.1130) & \textbf{0.0108$\pm$0.0040} (0.0431) & 0.0244$\pm$0.0002 (0.0296) \\
&  & POST & 0.5886$\pm$0.0085 (0.5362) & 0.5909$\pm$0.0084 (0.5383) & 0.6014$\pm$0.0085 (0.5513) & 0.1321$\pm$0.0014 (0.1242) & 0.0074$\pm$0.0035 (0.0249) & 0.0248$\pm$0.0002 (0.0296) \\

\midrule

\multirow{24}{*}{TGN} & \multirow{3}{*}{Base} & PRE & 0.3018$\pm$0.0125 (0.2854) & 0.2777$\pm$0.0116 (0.2545) & 0.4068$\pm$0.0162 (0.3439) & 0.0521$\pm$0.0010 (0.0509) & 0.0094$\pm$0.0049 (0.0528) & 0.3814$\pm$0.0005 (0.5054) \\
&  & DEPLOY & 0.2335$\pm$0.0066 (0.2260) & 0.1935$\pm$0.0086 (0.1858) & 0.4037$\pm$0.0094 (0.3920) & 0.0506$\pm$0.0025 (0.0493) & 0.0135$\pm$0.0024 (0.0486) & 0.4390$\pm$0.0140 (0.5063) \\
&  & POST & 0.2219$\pm$0.0049 (0.1868) & 0.1819$\pm$0.0018 (0.1525) & 0.3982$\pm$0.0155 (0.3473) & 0.0513$\pm$0.0010 (0.0502) & 0.0090$\pm$0.0029 (0.0226) & 0.4727$\pm$0.0139 (0.5090) \\
\cmidrule(lr){2-9}
& \multirow{3}{*}{Base{+}COPF} & PRE & 0.3012$\pm$0.0019 (0.2822) & 0.2777$\pm$0.0028 (0.2480) & 0.4040$\pm$0.0052 (\textbf{0.3463}) & \textbf{0.0576$\pm$0.0025} (\textbf{0.0520}) & \textbf{0.0082$\pm$0.0062} (\textbf{0.0217}) & \textbf{0.3705$\pm$0.0138} (\textbf{0.5000}) \\
&  & DEPLOY & \textbf{0.2365$\pm$0.0059} (\textbf{0.2295}) & 0.1918$\pm$0.0047 (0.1854) & \textbf{0.4221$\pm$0.0100} (\textbf{0.4116}) & \textbf{0.0519$\pm$0.0020} (\textbf{0.0510}) & \textbf{0.0115$\pm$0.0045} (0.0493) & 0.4413$\pm$0.0265 (0.5728) \\
&  & POST & \textbf{0.2296$\pm$0.0082} (\textbf{0.2115}) & \textbf{0.1872$\pm$0.0094} (\textbf{0.1740}) & \textbf{0.4121$\pm$0.0060} (\textbf{0.3747}) & \textbf{0.0537$\pm$0.0025} (\textbf{0.0523}) & \textbf{0.0067$\pm$0.0025} (0.0242) & 0.4892$\pm$0.0121 (0.5725) \\
\cmidrule(lr){2-9}
& \multirow{3}{*}{Adv} & PRE & 0.3018$\pm$0.0125 (0.2817) & 0.2777$\pm$0.0116 (0.2545) & 0.4068$\pm$0.0162 (0.3346) & 0.0521$\pm$0.0010 (0.0487) & 0.0094$\pm$0.0049 (0.0900) & 0.3814$\pm$0.0005 (0.5067) \\
&  & DEPLOY & 0.2335$\pm$0.0066 (0.2230) & 0.1935$\pm$0.0086 (0.1842) & 0.4037$\pm$0.0094 (0.3819) & 0.0506$\pm$0.0025 (0.0459) & 0.0135$\pm$0.0024 (0.0939) & 0.4390$\pm$0.0140 (0.5120) \\
&  & POST & 0.2219$\pm$0.0049 (0.1859) & 0.1819$\pm$0.0018 (0.1509) & 0.3982$\pm$0.0155 (0.3351) & 0.0513$\pm$0.0010 (0.0489) & 0.0090$\pm$0.0029 (0.0765) & 0.4727$\pm$0.0139 (0.5127) \\
\cmidrule(lr){2-9}
& \multirow{3}{*}{Adv{+}COPF} & PRE & 0.3012$\pm$0.0019 (0.2764) & 0.2777$\pm$0.0028 (0.2480) & 0.4040$\pm$0.0052 (\textbf{0.3370}) & \textbf{0.0576$\pm$0.0025} (\textbf{0.0511}) & \textbf{0.0082$\pm$0.0062} (\textbf{0.0342}) & \textbf{0.3705$\pm$0.0138} (\textbf{0.5036}) \\
&  & DEPLOY & \textbf{0.2365$\pm$0.0059} (\textbf{0.2257}) & 0.1918$\pm$0.0047 (0.1830) & \textbf{0.4221$\pm$0.0100} (\textbf{0.3973}) & 0.0519$\pm$0.0020 (0.0469) & \textbf{0.0115$\pm$0.0045} (\textbf{0.0325}) & 0.4413$\pm$0.0265 (0.5589) \\
&  & POST & \textbf{0.2296$\pm$0.0082} (\textbf{0.2017}) & \textbf{0.1872$\pm$0.0094} (\textbf{0.1653}) & \textbf{0.4121$\pm$0.0060} (\textbf{0.3505}) & \textbf{0.0537$\pm$0.0025} (\textbf{0.0494}) & \textbf{0.0067$\pm$0.0025} (\textbf{0.0193}) & 0.4892$\pm$0.0121 (0.5808) \\
\cmidrule(lr){2-9}
& \multirow{3}{*}{Reweight} & PRE & 0.3020$\pm$0.0127 (0.2821) & 0.2776$\pm$0.0118 (0.2535) & 0.4076$\pm$0.0158 (0.3348) & 0.0521$\pm$0.0010 (0.0486) & 0.0091$\pm$0.0039 (0.0758) & 0.3814$\pm$0.0005 (0.5066) \\
&  & DEPLOY & 0.2334$\pm$0.0067 (0.2231) & 0.1933$\pm$0.0087 (0.1840) & 0.4039$\pm$0.0095 (0.3821) & 0.0507$\pm$0.0025 (0.0457) & 0.0124$\pm$0.0016 (0.0750) & 0.4390$\pm$0.0140 (0.5120) \\
&  & POST & 0.2219$\pm$0.0049 (0.1852) & 0.1819$\pm$0.0018 (0.1503) & 0.3982$\pm$0.0156 (0.3346) & 0.0513$\pm$0.0010 (0.0489) & 0.0084$\pm$0.0022 (0.0490) & 0.4727$\pm$0.0139 (0.5127) \\
\cmidrule(lr){2-9}
& \multirow{3}{*}{Reweight{+}COPF} & PRE & 0.2988$\pm$0.0027 (0.2737) & 0.2749$\pm$0.0038 (0.2444) & 0.4028$\pm$0.0096 (\textbf{0.3370}) & \textbf{0.0575$\pm$0.0026} (\textbf{0.0512}) & \textbf{0.0077$\pm$0.0060} (\textbf{0.0317}) & \textbf{0.3706$\pm$0.0138} (\textbf{0.5036}) \\
&  & DEPLOY & \textbf{0.2344$\pm$0.0058} (\textbf{0.2242}) & 0.1896$\pm$0.0070 (0.1809) & \textbf{0.4205$\pm$0.0095} (\textbf{0.3975}) & 0.0519$\pm$0.0020 (0.0469) & \textbf{0.0109$\pm$0.0042} (\textbf{0.0307}) & 0.4418$\pm$0.0266 (0.5589) \\
&  & POST & \textbf{0.2268$\pm$0.0044} (\textbf{0.2007}) & \textbf{0.1843$\pm$0.0051} (\textbf{0.1647}) & \textbf{0.4089$\pm$0.0103} (\textbf{0.3504}) & \textbf{0.0537$\pm$0.0025} (\textbf{0.0494}) & \textbf{0.0064$\pm$0.0024} (\textbf{0.0176}) & 0.4905$\pm$0.0123 (0.5808) \\
\cmidrule(lr){2-9}
& \multirow{3}{*}{Penalty} & PRE & 0.3018$\pm$0.0125 (0.2817) & 0.2777$\pm$0.0116 (0.2545) & 0.4068$\pm$0.0162 (0.3346) & 0.0521$\pm$0.0010 (0.0487) & 0.0094$\pm$0.0049 (0.0900) & 0.3814$\pm$0.0005 (0.5067) \\
&  & DEPLOY & 0.2335$\pm$0.0066 (0.2230) & 0.1935$\pm$0.0086 (0.1842) & 0.4037$\pm$0.0094 (0.3819) & 0.0506$\pm$0.0025 (0.0459) & 0.0135$\pm$0.0024 (0.0939) & 0.4390$\pm$0.0140 (0.5120) \\
&  & POST & 0.2219$\pm$0.0049 (0.1859) & 0.1819$\pm$0.0018 (0.1509) & 0.3982$\pm$0.0155 (0.3351) & 0.0513$\pm$0.0010 (0.0489) & 0.0090$\pm$0.0029 (0.0765) & 0.4727$\pm$0.0139 (0.5127) \\
\cmidrule(lr){2-9}
& \multirow{3}{*}{Penalty{+}COPF} & PRE & 0.3012$\pm$0.0019 (0.2764) & 0.2777$\pm$0.0028 (0.2480) & 0.4040$\pm$0.0052 (\textbf{0.3370}) & \textbf{0.0576$\pm$0.0025} (\textbf{0.0511}) & \textbf{0.0082$\pm$0.0062} (\textbf{0.0342}) & \textbf{0.3705$\pm$0.0138} (\textbf{0.5036}) \\
&  & DEPLOY & \textbf{0.2365$\pm$0.0059} (\textbf{0.2257}) & 0.1918$\pm$0.0047 (0.1830) & \textbf{0.4221$\pm$0.0100} (\textbf{0.3973}) & 0.0519$\pm$0.0020 (0.0469) & \textbf{0.0115$\pm$0.0045} (\textbf{0.0325}) & 0.4413$\pm$0.0265 (0.5589) \\
&  & POST & \textbf{0.2296$\pm$0.0082} (\textbf{0.2017}) & \textbf{0.1872$\pm$0.0094} (\textbf{0.1653}) & \textbf{0.4121$\pm$0.0060} (\textbf{0.3505}) & \textbf{0.0537$\pm$0.0025} (\textbf{0.0494}) & \textbf{0.0067$\pm$0.0025} (\textbf{0.0193}) & 0.4892$\pm$0.0121 (0.5808) \\

\midrule

\multirow{6}{*}{GraphMixer} & \multirow{3}{*}{Base} & PRE & 0.4818$\pm$0.0086 (0.4168) & 0.4036$\pm$0.0072 (0.3370) & 0.7537$\pm$0.0130 (0.7109) & 0.0779$\pm$0.0056 (0.0745) & 0.0097$\pm$0.0007 (0.0295) & 0.7428$\pm$0.0226 (0.9800) \\
&  & DEPLOY & 0.3327$\pm$0.0490 (0.3244) & 0.2690$\pm$0.0479 (0.2583) & 0.5987$\pm$0.0412 (0.5854) & 0.0835$\pm$0.0023 (0.0783) & 0.0099$\pm$0.0016 (0.0548) & 0.7723$\pm$0.0270 (0.9759) \\
&  & POST & 0.3353$\pm$0.0645 (0.2800) & 0.2824$\pm$0.0618 (0.2317) & 0.5663$\pm$0.0580 (0.5050) & 0.0853$\pm$0.0053 (0.0821) & 0.0086$\pm$0.0028 (0.0204) & 0.7651$\pm$0.0215 (0.9676) \\
\cmidrule(lr){2-9}
& \multirow{3}{*}{Base{+}COPF} & PRE & \textbf{0.4843$\pm$0.0100} (0.4156) & 0.4036$\pm$0.0097 (0.3342) & \textbf{0.7624$\pm$0.0078} (\textbf{0.7141}) & \textbf{0.0825$\pm$0.0020} (\textbf{0.0757}) & \textbf{0.0081$\pm$0.0003} (\textbf{0.0284}) & 0.7644$\pm$0.0164 (0.9800) \\
&  & DEPLOY & 0.3060$\pm$0.0151 (0.2958) & 0.2428$\pm$0.0156 (0.2369) & 0.5750$\pm$0.0095 (0.5521) & \textbf{0.0855$\pm$0.0003} (\textbf{0.0821}) & 0.0104$\pm$0.0005 (\textbf{0.0424}) & 0.7752$\pm$0.0084 (0.9800) \\
&  & POST & 0.2936$\pm$0.0378 (0.2401) & 0.2434$\pm$0.0349 (0.1973) & 0.5260$\pm$0.0411 (0.4545) & 0.0804$\pm$0.0036 (0.0775) & \textbf{0.0071$\pm$0.0019} (0.0231) & \textbf{0.7255$\pm$0.0214} (0.9706) \\

\bottomrule
\end{tabular}
}
\end{table}

\begin{table*}[t]
\centering
\scriptsize
\setlength{\tabcolsep}{2.0pt}
\renewcommand{\arraystretch}{1.10}
\caption{\textbf{Synthetic results (utility + counterfactual fairness).} Phase-wise metrics (mean $\pm$ std over seeds; worst-case in phase in parentheses).}
\label{tab:synthetic_combined}
\resizebox{\linewidth}{!}{%
\begin{tabular}{llc|cccc|cc}
\toprule
Backbone & Method & Phase
& \multicolumn{4}{c|}{Utility} & \multicolumn{2}{c}{Fairness} \\
\cmidrule(lr){4-7}\cmidrule(lr){8-9}
& & & NDCG@10 & MRR & Hits@10 & DeployHit@TopK
& $\gTEgap$ & $\gCalmax$ \\
\midrule

\multirow{6}{*}{EdgeBank}
& \multirow{3}{*}{Base}
& PRE    & 0.0445$\pm$0.0011 (0.0227) & 0.0507$\pm$0.0011 (0.0280) & 0.0713$\pm$0.0011 (0.0532) & 0.0488$\pm$0.0012 (0.0423) & 0.0073$\pm$0.0020 (0.0319) & 0.0210$\pm$0.0003 (0.0234) \\
&        & DEPLOY & 0.1641$\pm$0.0047 (0.1317) & 0.1692$\pm$0.0045 (0.1369) & 0.1887$\pm$0.0050 (0.1577) & 0.0681$\pm$0.0042 (0.0581) & 0.0062$\pm$0.0008 (0.0219) & 0.0221$\pm$0.0002 (0.0241) \\
&        & POST   & 0.2777$\pm$0.0035 (0.2479) & 0.2800$\pm$0.0036 (0.2506) & 0.3049$\pm$0.0033 (0.2759) & 0.0861$\pm$0.0030 (0.0784) & 0.0088$\pm$0.0020 (0.0312) & 0.0231$\pm$0.0007 (0.0264) \\
\cmidrule(lr){2-9}
& \multirow{3}{*}{Base{+}COPF}
& PRE    & 0.0445$\pm$0.0011 (0.0227) & 0.0507$\pm$0.0011 (0.0280) & 0.0713$\pm$0.0011 (0.0532) & 0.0488$\pm$0.0012 (0.0423) & \textbf{0.0068$\pm$0.0014} (0.0319) & 0.0210$\pm$0.0003 (0.0234) \\
&        & DEPLOY & 0.1641$\pm$0.0048 (0.1317) & 0.1692$\pm$0.0046 (0.1369) & 0.1887$\pm$0.0051 (0.1577) & 0.0680$\pm$0.0040 (0.0579) & \textbf{0.0060$\pm$0.0007} (0.0221) & 0.0221$\pm$0.0002 (0.0241) \\
&        & POST   & 0.2777$\pm$0.0036 (0.2479) & 0.2800$\pm$0.0036 (0.2506) & 0.3049$\pm$0.0033 (0.2759) & 0.0858$\pm$0.0030 (0.0783) & \textbf{0.0085$\pm$0.0017} (0.0312) & 0.0231$\pm$0.0007 (0.0264) \\

\midrule

\multirow{24}{*}{TGN} & \multirow{3}{*}{Base} & PRE    & 0.0495$\pm$0.0024 (0.0225) & 0.0496$\pm$0.0022 (0.0279) & 0.0980$\pm$0.0034 (0.0533) & 0.0474$\pm$0.0013 (0.0418) & 0.0081$\pm$0.0015 (0.0299) & 0.3202$\pm$0.0109 (0.5004) \\
&        & DEPLOY & 0.1414$\pm$0.0024 (0.1280) & 0.1211$\pm$0.0018 (0.1102) & 0.2516$\pm$0.0062 (0.2304) & 0.0478$\pm$0.0015 (0.0426) & 0.0081$\pm$0.0004 (0.0296) & 0.5135$\pm$0.0117 (0.5840) \\
&        & POST   & 0.1851$\pm$0.0022 (0.1700) & 0.1432$\pm$0.0031 (0.1360) & 0.3570$\pm$0.0018 (0.3120) & 0.0540$\pm$0.0043 (0.0503) & 0.0069$\pm$0.0014 (0.0285) & 0.5522$\pm$0.0146 (0.6714) \\
\cmidrule(lr){2-9}
& \multirow{3}{*}{Base{+}COPF} & PRE    & 0.0472$\pm$0.0012 (0.0226) & 0.0476$\pm$0.0009 (\textbf{0.0280}) & 0.0944$\pm$0.0032 (0.0530) & \textbf{0.0484$\pm$0.0022} (0.0417) & \textbf{0.0072$\pm$0.0025} (\textbf{0.0236}) & 0.3410$\pm$0.0464 (\textbf{0.4999}) \\
&        & DEPLOY & \textbf{0.1442$\pm$0.0078} (\textbf{0.1334}) & \textbf{0.1224$\pm$0.0074} (\textbf{0.1146}) & \textbf{0.2553$\pm$0.0092} (\textbf{0.2334}) & \textbf{0.0530$\pm$0.0011} (\textbf{0.0519}) & 0.0092$\pm$0.0021 (0.0317) & 0.5422$\pm$0.0143 (0.6380) \\
&        & POST   & \textbf{0.1870$\pm$0.0083} (\textbf{0.1721}) & \textbf{0.1448$\pm$0.0067} (0.1360) & \textbf{0.3590$\pm$0.0111} (\textbf{0.3215}) & 0.0509$\pm$0.0016 (0.0472) & 0.0085$\pm$0.0012 (\textbf{0.0270}) & 0.6105$\pm$0.0039 (0.6923) \\
\cmidrule(lr){2-9}
& \multirow{3}{*}{Adv} & PRE    & 0.0495$\pm$0.0024 (0.0225) & 0.0496$\pm$0.0022 (0.0279) & 0.0980$\pm$0.0034 (0.0533) & 0.0474$\pm$0.0013 (0.0418) & 0.0081$\pm$0.0015 (0.0299) & 0.3202$\pm$0.0109 (0.5004) \\
&        & DEPLOY & 0.1414$\pm$0.0024 (0.1280) & 0.1211$\pm$0.0018 (0.1102) & 0.2516$\pm$0.0062 (0.2304) & 0.0478$\pm$0.0015 (0.0426) & 0.0081$\pm$0.0004 (0.0296) & 0.5135$\pm$0.0117 (0.5840) \\
&        & POST   & 0.1851$\pm$0.0022 (0.1700) & 0.1432$\pm$0.0031 (0.1360) & 0.3570$\pm$0.0018 (0.3120) & 0.0540$\pm$0.0043 (0.0503) & 0.0069$\pm$0.0014 (0.0285) & 0.5522$\pm$0.0146 (0.6714) \\
\cmidrule(lr){2-9}
& \multirow{3}{*}{Adv{+}COPF} & PRE    & 0.0472$\pm$0.0012 (\textbf{0.0226}) & 0.0476$\pm$0.0009 (\textbf{0.0280}) & 0.0944$\pm$0.0032 (0.0530) & \textbf{0.0484$\pm$0.0022} (0.0417) & \textbf{0.0072$\pm$0.0025} (\textbf{0.0236}) & 0.3410$\pm$0.0464 (\textbf{0.4999}) \\
&        & DEPLOY & \textbf{0.1442$\pm$0.0078} (\textbf{0.1334}) & \textbf{0.1224$\pm$0.0074} (\textbf{0.1146}) & \textbf{0.2553$\pm$0.0092} (\textbf{0.2334}) & \textbf{0.0530$\pm$0.0011} (\textbf{0.0519}) & 0.0092$\pm$0.0021 (0.0317) & 0.5422$\pm$0.0143 (0.6380) \\
&        & POST   & \textbf{0.1870$\pm$0.0083} (\textbf{0.1721}) & \textbf{0.1448$\pm$0.0067} (0.1360) & \textbf{0.3590$\pm$0.0111} (\textbf{0.3215}) & 0.0509$\pm$0.0016 (0.0472) & 0.0085$\pm$0.0012 (\textbf{0.0270}) & 0.6105$\pm$0.0039 (0.6923) \\
\cmidrule(lr){2-9}
& \multirow{3}{*}{Reweight} & PRE    & 0.0496$\pm$0.0024 (0.0225) & 0.0496$\pm$0.0021 (0.0279) & 0.0980$\pm$0.0036 (0.0533) & 0.0474$\pm$0.0013 (0.0418) & 0.0077$\pm$0.0018 (0.0307) & 0.3202$\pm$0.0109 (0.5004) \\
&        & DEPLOY & 0.1414$\pm$0.0024 (0.1280) & 0.1211$\pm$0.0018 (0.1102) & 0.2516$\pm$0.0062 (0.2304) & 0.0478$\pm$0.0015 (0.0426) & 0.0081$\pm$0.0004 (0.0296) & 0.5135$\pm$0.0117 (0.5840) \\
&        & POST   & 0.1851$\pm$0.0022 (0.1700) & 0.1432$\pm$0.0031 (0.1360) & 0.3570$\pm$0.0018 (0.3120) & 0.0540$\pm$0.0043 (0.0503) & 0.0069$\pm$0.0014 (0.0285) & 0.5522$\pm$0.0146 (0.6714) \\
\cmidrule(lr){2-9}
& \multirow{3}{*}{Reweight{+}COPF} & PRE    & 0.0476$\pm$0.0016 (\textbf{0.0226}) & 0.0476$\pm$0.0006 (\textbf{0.0280}) & 0.0963$\pm$0.0053 (0.0530) & \textbf{0.0484$\pm$0.0014} (0.0417) & 0.0078$\pm$0.0025 (\textbf{0.0271}) & 0.3479$\pm$0.0200 (\textbf{0.4998}) \\
&        & DEPLOY & 0.1418$\pm$0.0065 (\textbf{0.1310}) & 0.1208$\pm$0.0048 (\textbf{0.1133}) & 0.2514$\pm$0.0068 (\textbf{0.2341}) & \textbf{0.0530$\pm$0.0008} (\textbf{0.0519}) & 0.0087$\pm$0.0016 (0.0317) & 0.5426$\pm$0.0138 (0.6268) \\
&        & POST   & \textbf{0.1827$\pm$0.0102} (\textbf{0.1710}) & 0.1417$\pm$0.0043 (0.1344) & \textbf{0.3603$\pm$0.0112} (\textbf{0.3204}) & \textbf{0.0532$\pm$0.0018} (0.0490) & 0.0085$\pm$0.0008 (\textbf{0.0269}) & 0.6094$\pm$0.0059 (0.6923) \\
\cmidrule(lr){2-9}
& \multirow{3}{*}{Penalty} & PRE    & 0.0495$\pm$0.0024 (0.0225) & 0.0496$\pm$0.0022 (0.0279) & 0.0980$\pm$0.0034 (0.0533) & 0.0474$\pm$0.0013 (0.0418) & 0.0081$\pm$0.0015 (0.0299) & 0.3202$\pm$0.0109 (0.5004) \\
&        & DEPLOY & 0.1414$\pm$0.0024 (0.1280) & 0.1211$\pm$0.0018 (0.1102) & 0.2516$\pm$0.0062 (0.2304) & 0.0478$\pm$0.0015 (0.0426) & 0.0081$\pm$0.0004 (0.0296) & 0.5135$\pm$0.0117 (0.5840) \\
&        & POST   & 0.1851$\pm$0.0022 (0.1700) & 0.1432$\pm$0.0031 (0.1360) & 0.3570$\pm$0.0018 (0.3120) & 0.0540$\pm$0.0043 (0.0503) & 0.0069$\pm$0.0014 (0.0285) & 0.5522$\pm$0.0146 (0.6714) \\
\cmidrule(lr){2-9}
& \multirow{3}{*}{Penalty{+}COPF} & PRE    & 0.0472$\pm$0.0012 (\textbf{0.0226}) & 0.0476$\pm$0.0009 (\textbf{0.0280}) & 0.0944$\pm$0.0032 (0.0530) & \textbf{0.0484$\pm$0.0022} (0.0417) & \textbf{0.0072$\pm$0.0025} (\textbf{0.0236}) & 0.3410$\pm$0.0464 (\textbf{0.4999}) \\
&        & DEPLOY & \textbf{0.1442$\pm$0.0078} (\textbf{0.1334}) & \textbf{0.1224$\pm$0.0074} (\textbf{0.1146}) & \textbf{0.2553$\pm$0.0092} (\textbf{0.2334}) & \textbf{0.0530$\pm$0.0011} (\textbf{0.0519}) & 0.0092$\pm$0.0021 (0.0317) & 0.5422$\pm$0.0143 (0.6380) \\
&        & POST   & \textbf{0.1870$\pm$0.0083} (\textbf{0.1721}) & \textbf{0.1448$\pm$0.0067} (0.1360) & \textbf{0.3590$\pm$0.0111} (\textbf{0.3215}) & 0.0509$\pm$0.0016 (0.0472) & 0.0085$\pm$0.0012 (\textbf{0.0270}) & 0.6105$\pm$0.0039 (0.6923) \\

\midrule

\multirow{6}{*}{GraphMixer}
& \multirow{3}{*}{Base}
& PRE    & 0.1175$\pm$0.0018 (0.0555) & 0.1019$\pm$0.0004 (0.0569) & 0.2390$\pm$0.0044 (0.1013) & 0.0563$\pm$0.0014 (0.0417) & 0.0109$\pm$0.0021 (0.0367) & 0.7176$\pm$0.0183 (0.9796) \\
&        & DEPLOY & 0.1417$\pm$0.0093 (0.1367) & 0.1250$\pm$0.0058 (0.1217) & 0.2952$\pm$0.0191 (0.2828) & 0.0787$\pm$0.0023 (0.0752) & 0.0103$\pm$0.0012 (0.0477) & 0.5816$\pm$0.0292 (0.9178) \\
&        & POST   & 0.1193$\pm$0.0059 (0.1087) & 0.1100$\pm$0.0043 (0.1021) & 0.2455$\pm$0.0101 (0.2268) & 0.0769$\pm$0.0066 (0.0706) & 0.0080$\pm$0.0005 (0.0329) & 0.5867$\pm$0.0214 (0.8321) \\
\cmidrule(lr){2-9}
& \multirow{3}{*}{Base{+}COPF}
& PRE    & \textbf{0.1203$\pm$0.0018} (0.0539) & \textbf{0.1041$\pm$0.0023} (0.0555) & \textbf{0.2448$\pm$0.0010} (\textbf{0.1015}) & \textbf{0.0582$\pm$0.0017} (\textbf{0.0467}) & \textbf{0.0073$\pm$0.0015} (0.0434) & \textbf{0.6807$\pm$0.0038} (\textbf{0.9091}) \\
&        & DEPLOY & \textbf{0.1996$\pm$0.0101} (\textbf{0.1926}) & \textbf{0.1604$\pm$0.0075} (\textbf{0.1564}) & \textbf{0.4154$\pm$0.0168} (\textbf{0.3999}) & \textbf{0.0800$\pm$0.0010} (\textbf{0.0779}) & \textbf{0.0076$\pm$0.0008} (\textbf{0.0274}) & \textbf{0.5337$\pm$0.0462} (\textbf{0.7067}) \\
&        & POST   & \textbf{0.1768$\pm$0.0006} (\textbf{0.1689}) & \textbf{0.1492$\pm$0.0002} (\textbf{0.1443}) & \textbf{0.3639$\pm$0.0016} (\textbf{0.3476}) & \textbf{0.0777$\pm$0.0037} (\textbf{0.0725}) & \textbf{0.0075$\pm$0.0007} (\textbf{0.0274}) & \textbf{0.5076$\pm$0.0089} (\textbf{0.8037}) \\

\bottomrule
\end{tabular}
}
\end{table*}

\begin{table*}[t]
\centering
\scriptsize
\setlength{\tabcolsep}{2.0pt}
\renewcommand{\arraystretch}{1.10}
\caption{\textbf{Review results (utility + counterfactual fairness).} Phase-wise metrics (mean $\pm$ std over seeds; worst-case in phase in parentheses).}
\label{tab:review_combined}
\resizebox{\linewidth}{!}{%
\begin{tabular}{llc|cccc|cc}
\toprule
Backbone & Method & Phase
& \multicolumn{4}{c|}{Utility} & \multicolumn{2}{c}{Fairness} \\
\cmidrule(lr){4-7}\cmidrule(lr){8-9}
& & & NDCG@10 & MRR & Hits@10 & DeployHit@TopK
& $\gTEgap$ & $\gCalmax$ \\
\midrule

\multirow{6}{*}{EdgeBank}
& \multirow{3}{*}{Base}
& PRE & 0.0236$\pm$0.0002 (0.0215) & 0.0303$\pm$0.0002 (0.0279) & 0.0511$\pm$0.0006 (0.0465) & 0.0510$\pm$0.0014 (0.0445) & 0.0064$\pm$0.0048 (0.0760) & 0.0200$\pm$0.0000 (0.0200) \\
& & DEPLOY & 0.0210$\pm$0.0003 (0.0161) & 0.0281$\pm$0.0004 (0.0241) & 0.0460$\pm$0.0009 (0.0365) & 0.0488$\pm$0.0015 (0.0300) & 0.0059$\pm$0.0027 (0.0236) & 0.0200$\pm$0.0000 (0.0202) \\
& & POST & 0.0222$\pm$0.0006 (0.0175) & 0.0293$\pm$0.0006 (0.0250) & 0.0483$\pm$0.0010 (0.0420) & 0.0492$\pm$0.0018 (0.0410) & 0.0031$\pm$0.0002 (0.0135) & 0.0200$\pm$0.0000 (0.0202) \\
\cmidrule(lr){2-9}
& \multirow{3}{*}{Base{+}COPF}
& PRE & 0.0236$\pm$0.0002 (0.0215) & 0.0303$\pm$0.0002 (0.0279) & 0.0511$\pm$0.0006 (0.0465) & 0.0510$\pm$0.0014 (0.0445) & 0.0064$\pm$0.0048 (0.0760) & 0.0200$\pm$0.0000 (0.0200) \\
& & DEPLOY & 0.0210$\pm$0.0003 (0.0161) & 0.0281$\pm$0.0004 (0.0241) & 0.0460$\pm$0.0009 (0.0365) & 0.0488$\pm$0.0015 (0.0300) & 0.0059$\pm$0.0027 (0.0236) & 0.0200$\pm$0.0000 (0.0202) \\
& & POST & 0.0222$\pm$0.0006 (0.0175) & 0.0293$\pm$0.0006 (0.0250) & 0.0483$\pm$0.0010 (0.0420) & 0.0492$\pm$0.0018 (0.0410) & 0.0031$\pm$0.0002 (0.0135) & 0.0200$\pm$0.0000 (0.0202) \\

\midrule

\multirow{24}{*}{TGN}
& \multirow{3}{*}{Base}
& PRE & 0.0352$\pm$0.0020 (0.0223) & 0.0416$\pm$0.0020 (0.0291) & 0.0626$\pm$0.0020 (0.0490) & 0.0512$\pm$0.0014 (0.0445) & 0.0067$\pm$0.0045 (0.0760) & 0.0427$\pm$0.0162 (0.5000) \\
& & DEPLOY & 0.0402$\pm$0.0015 (0.0324) & 0.0462$\pm$0.0014 (0.0377) & 0.0675$\pm$0.0020 (0.0605) & 0.0489$\pm$0.0015 (0.0310) & 0.0058$\pm$0.0025 (0.0236) & 0.0235$\pm$0.0018 (0.0734) \\
& & POST & 0.0389$\pm$0.0006 (0.0263) & 0.0444$\pm$0.0006 (0.0340) & 0.0683$\pm$0.0009 (0.0520) & 0.0495$\pm$0.0020 (0.0410) & 0.0030$\pm$0.0003 (0.0139) & 0.0420$\pm$0.0152 (0.5077) \\
\cmidrule(lr){2-9}
& \multirow{3}{*}{Base{+}COPF}
& PRE & 0.0350$\pm$0.0021 (0.0223) & 0.0414$\pm$0.0021 (0.0291) & 0.0624$\pm$0.0021 (0.0490) & 0.0512$\pm$0.0014 (0.0445) & 0.0068$\pm$0.0045 (0.0760) & \textbf{0.0426$\pm$0.0162} (\textbf{0.4988}) \\
& & DEPLOY & 0.0399$\pm$0.0017 (0.0324) & 0.0458$\pm$0.0016 (0.0377) & 0.0672$\pm$0.0023 (0.0605) & 0.0489$\pm$0.0015 (0.0310) & \textbf{0.0056$\pm$0.0028} (0.0236) & 0.0244$\pm$0.0026 (\textbf{0.0730}) \\
& & POST & \textbf{0.0392$\pm$0.0010} (\textbf{0.0284}) & \textbf{0.0447$\pm$0.0011} (\textbf{0.0362}) & \textbf{0.0687$\pm$0.0006} (\textbf{0.0540}) & 0.0495$\pm$0.0018 (0.0410) & \textbf{0.0029$\pm$0.0003} (0.0140) & \textbf{0.0409$\pm$0.0178} (\textbf{0.4998}) \\

\cmidrule(lr){2-9}
& \multirow{3}{*}{Adv}
& PRE & 0.0352$\pm$0.0020 (0.0223) & 0.0416$\pm$0.0020 (0.0291) & 0.0626$\pm$0.0020 (0.0490) & 0.0512$\pm$0.0014 (0.0445) & 0.0067$\pm$0.0045 (0.0760) & 0.0427$\pm$0.0162 (0.5000) \\
& & DEPLOY & 0.0402$\pm$0.0015 (0.0324) & 0.0462$\pm$0.0014 (0.0377) & 0.0675$\pm$0.0020 (0.0605) & 0.0489$\pm$0.0015 (0.0310) & 0.0058$\pm$0.0025 (0.0236) & 0.0235$\pm$0.0018 (0.0734) \\
& & POST & 0.0389$\pm$0.0006 (0.0263) & 0.0444$\pm$0.0006 (0.0340) & 0.0683$\pm$0.0009 (0.0520) & 0.0495$\pm$0.0020 (0.0410) & 0.0030$\pm$0.0003 (0.0139) & 0.0420$\pm$0.0152 (0.5077) \\
\cmidrule(lr){2-9}
& \multirow{3}{*}{Adv{+}COPF}
& PRE & 0.0350$\pm$0.0021 (0.0223) & 0.0414$\pm$0.0021 (0.0291) & 0.0624$\pm$0.0021 (0.0490) & 0.0512$\pm$0.0014 (0.0445) & 0.0068$\pm$0.0045 (0.0760) & \textbf{0.0426$\pm$0.0162} (\textbf{0.4988}) \\
& & DEPLOY & 0.0399$\pm$0.0017 (0.0324) & 0.0458$\pm$0.0016 (\textbf{0.0377}) & 0.0672$\pm$0.0023 (0.0605) & 0.0489$\pm$0.0015 (0.0310) & \textbf{0.0056$\pm$0.0028} (\textbf{0.0236}) & 0.0244$\pm$0.0026 (\textbf{0.0730}) \\
& & POST & \textbf{0.0392$\pm$0.0010} (\textbf{0.0284}) & \textbf{0.0447$\pm$0.0011} (\textbf{0.0362}) & \textbf{0.0687$\pm$0.0006} (\textbf{0.0540}) & \textbf{0.0495$\pm$0.0018} (0.0410) & \textbf{0.0029$\pm$0.0003} (0.0140) & \textbf{0.0409$\pm$0.0178} (\textbf{0.4998}) \\

\cmidrule(lr){2-9}
& \multirow{3}{*}{Reweight}
& PRE & 0.0352$\pm$0.0021 (0.0223) & 0.0417$\pm$0.0021 (0.0291) & 0.0627$\pm$0.0021 (0.0490) & 0.0512$\pm$0.0014 (0.0445) & 0.0066$\pm$0.0047 (0.0760) & 0.0422$\pm$0.0162 (0.5000) \\
& & DEPLOY & 0.0403$\pm$0.0016 (0.0346) & 0.0462$\pm$0.0014 (0.0399) & 0.0679$\pm$0.0023 (0.0610) & 0.0489$\pm$0.0015 (0.0300) & 0.0060$\pm$0.0026 (0.0238) & 0.0253$\pm$0.0051 (0.0912) \\
& & POST & 0.0388$\pm$0.0008 (0.0280) & 0.0443$\pm$0.0007 (0.0356) & 0.0681$\pm$0.0014 (0.0530) & 0.0493$\pm$0.0020 (0.0410) & 0.0032$\pm$0.0007 (0.0137) & 0.0528$\pm$0.0223 (0.5105) \\
\cmidrule(lr){2-9}
& \multirow{3}{*}{Reweight{+}COPF}
& PRE & 0.0350$\pm$0.0022 (0.0223) & 0.0414$\pm$0.0022 (0.0291) & 0.0624$\pm$0.0022 (0.0490) & 0.0512$\pm$0.0014 (0.0445) & 0.0067$\pm$0.0047 (0.0760) & \textbf{0.0422$\pm$0.0162} (\textbf{0.4988}) \\
& & DEPLOY & 0.0398$\pm$0.0017 (\textbf{0.0350}) & 0.0457$\pm$0.0015 (\textbf{0.0406}) & 0.0672$\pm$0.0023 (0.0595) & 0.0489$\pm$0.0015 (0.0300) & 0.0061$\pm$0.0025 (0.0238) & \textbf{0.0245$\pm$0.0026} (\textbf{0.0731}) \\
& & POST & 0.0381$\pm$0.0009 (0.0262) & 0.0436$\pm$0.0010 (0.0328) & 0.0676$\pm$0.0006 (\textbf{0.0540}) & \textbf{0.0495$\pm$0.0019} (0.0410) & 0.0033$\pm$0.0005 (0.0157) & \textbf{0.0318$\pm$0.0020} (\textbf{0.1020}) \\

\cmidrule(lr){2-9}
& \multirow{3}{*}{Penalty}
& PRE & 0.0352$\pm$0.0020 (0.0223) & 0.0416$\pm$0.0020 (0.0291) & 0.0626$\pm$0.0020 (0.0490) & 0.0512$\pm$0.0014 (0.0445) & 0.0067$\pm$0.0045 (0.0760) & 0.0427$\pm$0.0162 (0.5000) \\
& & DEPLOY & 0.0402$\pm$0.0015 (0.0324) & 0.0462$\pm$0.0014 (0.0377) & 0.0675$\pm$0.0020 (0.0605) & 0.0489$\pm$0.0015 (0.0310) & 0.0058$\pm$0.0025 (0.0236) & 0.0235$\pm$0.0018 (0.0734) \\
& & POST & 0.0389$\pm$0.0006 (0.0263) & 0.0444$\pm$0.0006 (0.0340) & 0.0683$\pm$0.0009 (0.0520) & 0.0495$\pm$0.0020 (0.0410) & 0.0030$\pm$0.0003 (0.0139) & 0.0420$\pm$0.0152 (0.5077) \\
\cmidrule(lr){2-9}
& \multirow{3}{*}{Penalty{+}COPF}
& PRE & 0.0350$\pm$0.0021 (0.0223) & 0.0414$\pm$0.0021 (0.0291) & 0.0624$\pm$0.0021 (0.0490) & 0.0512$\pm$0.0014 (0.0445) & 0.0068$\pm$0.0045 (0.0760) & \textbf{0.0426$\pm$0.0162} (\textbf{0.4988}) \\
& & DEPLOY & 0.0399$\pm$0.0017 (0.0324) & 0.0458$\pm$0.0016 (\textbf{0.0377}) & 0.0672$\pm$0.0023 (0.0605) & 0.0489$\pm$0.0015 (0.0310) & \textbf{0.0056$\pm$0.0028} (\textbf{0.0236}) & 0.0244$\pm$0.0026 (\textbf{0.0730}) \\
& & POST & \textbf{0.0392$\pm$0.0010} (\textbf{0.0284}) & \textbf{0.0447$\pm$0.0011} (\textbf{0.0362}) & \textbf{0.0687$\pm$0.0006} (\textbf{0.0540}) & \textbf{0.0495$\pm$0.0018} (0.0410) & \textbf{0.0029$\pm$0.0003} (0.0140) & \textbf{0.0409$\pm$0.0178} (\textbf{0.4998}) \\

\midrule

\multirow{6}{*}{GraphMixer}
& \multirow{3}{*}{Base}
& PRE & 0.2350$\pm$0.0033 (0.1776) & 0.2183$\pm$0.0047 (0.1812) & 0.3089$\pm$0.0012 (0.1960) & 0.0596$\pm$0.0030 (0.0450) & 0.0107$\pm$0.0042 (0.0873) & 0.6209$\pm$0.0629 (0.9575) \\
& & DEPLOY & 0.2495$\pm$0.0048 (0.2395) & 0.2120$\pm$0.0073 (0.2009) & 0.3920$\pm$0.0038 (0.3706) & 0.0664$\pm$0.0001 (0.0600) & 0.0057$\pm$0.0015 (0.0221) & 0.7554$\pm$0.0656 (0.9800) \\
& & POST & 0.2301$\pm$0.0245 (0.1956) & 0.1963$\pm$0.0298 (0.1559) & 0.3681$\pm$0.0060 (0.3470) & 0.0592$\pm$0.0016 (0.0515) & 0.0066$\pm$0.0017 (0.0439) & 0.7242$\pm$0.0476 (0.9417) \\
\cmidrule(lr){2-9}
& \multirow{3}{*}{Base{+}COPF}
& PRE & 0.2322$\pm$0.0042 (\textbf{0.1781}) & 0.2141$\pm$0.0055 (0.1803) & \textbf{0.3103$\pm$0.0005} (\textbf{0.2010}) & \textbf{0.0693$\pm$0.0017} (\textbf{0.0510}) & \textbf{0.0049$\pm$0.0007} (\textbf{0.0187}) & 0.6414$\pm$0.0360 (0.9800) \\
& & DEPLOY & \textbf{0.2511$\pm$0.0148} (0.2313) & \textbf{0.2129$\pm$0.0188} (0.1915) & \textbf{0.3951$\pm$0.0014} (\textbf{0.3800}) & \textbf{0.0738$\pm$0.0011} (\textbf{0.0698}) & 0.0074$\pm$0.0049 (0.1193) & \textbf{0.7479$\pm$0.0291} (0.9800) \\
& & POST & \textbf{0.2326$\pm$0.0238} (\textbf{0.2017}) & \textbf{0.1985$\pm$0.0285} (\textbf{0.1615}) & \textbf{0.3708$\pm$0.0056} (\textbf{0.3571}) & \textbf{0.0620$\pm$0.0041} (\textbf{0.0571}) & 0.0073$\pm$0.0038 (0.0545) & \textbf{0.6800$\pm$0.0105} (0.9762) \\

\bottomrule
\end{tabular}
}
\end{table*}

\clearpage

\section{Spike Statistics and Certificate Slack}
\label{app:spike_slack}

For each dataset--backbone pair, we report two phase-wise diagnostics.
The spike statistic is the per-seed, per-phase maximum of the rolling maximum of
\(\gTEgap\), using a window of \(w=5\) logged checkpoints.
The certificate-slack statistic is the per-seed, per-phase median of
\[
\frac{\widehat B_{\mathrm{TE}}}
     {\gTEgap+\varepsilon_{\mathrm{num}}},
\qquad
\varepsilon_{\mathrm{num}}=10^{-12},
\]
where \(\widehat B_{\mathrm{TE}}\) is the logged Residual-OI upper bound on
\(\gTEgap\). Values close to \(1\) indicate tight certificates,
whereas larger values indicate more conservative bounds.

\begin{figure}[H]
\centering
\captionsetup[subfigure]{justification=centering}

\noindent
\begin{minipage}[t]{0.32\textwidth}\vspace{0pt}
  \centering
  \includegraphics[width=\linewidth]{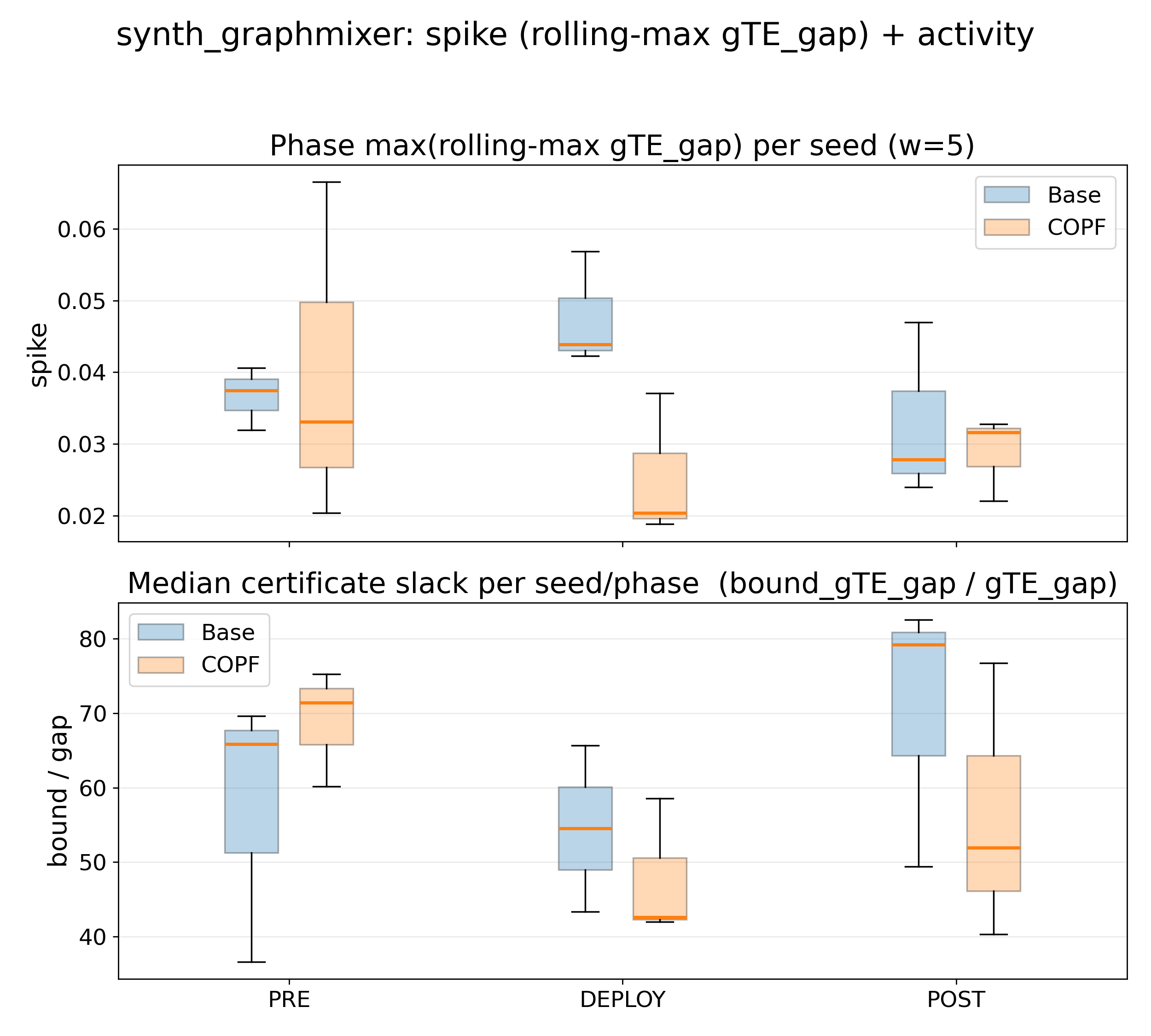}
  \subcaption{\textbf{Synthetic Bipartite (GraphMixer).} DEPLOY spikes decrease under COPF; slack tightens.}
  \label{fig:app_spike_synth_graphmixer}
\end{minipage}\hfill
\begin{minipage}[t]{0.32\textwidth}\vspace{0pt}
  \centering
  \includegraphics[width=\linewidth]{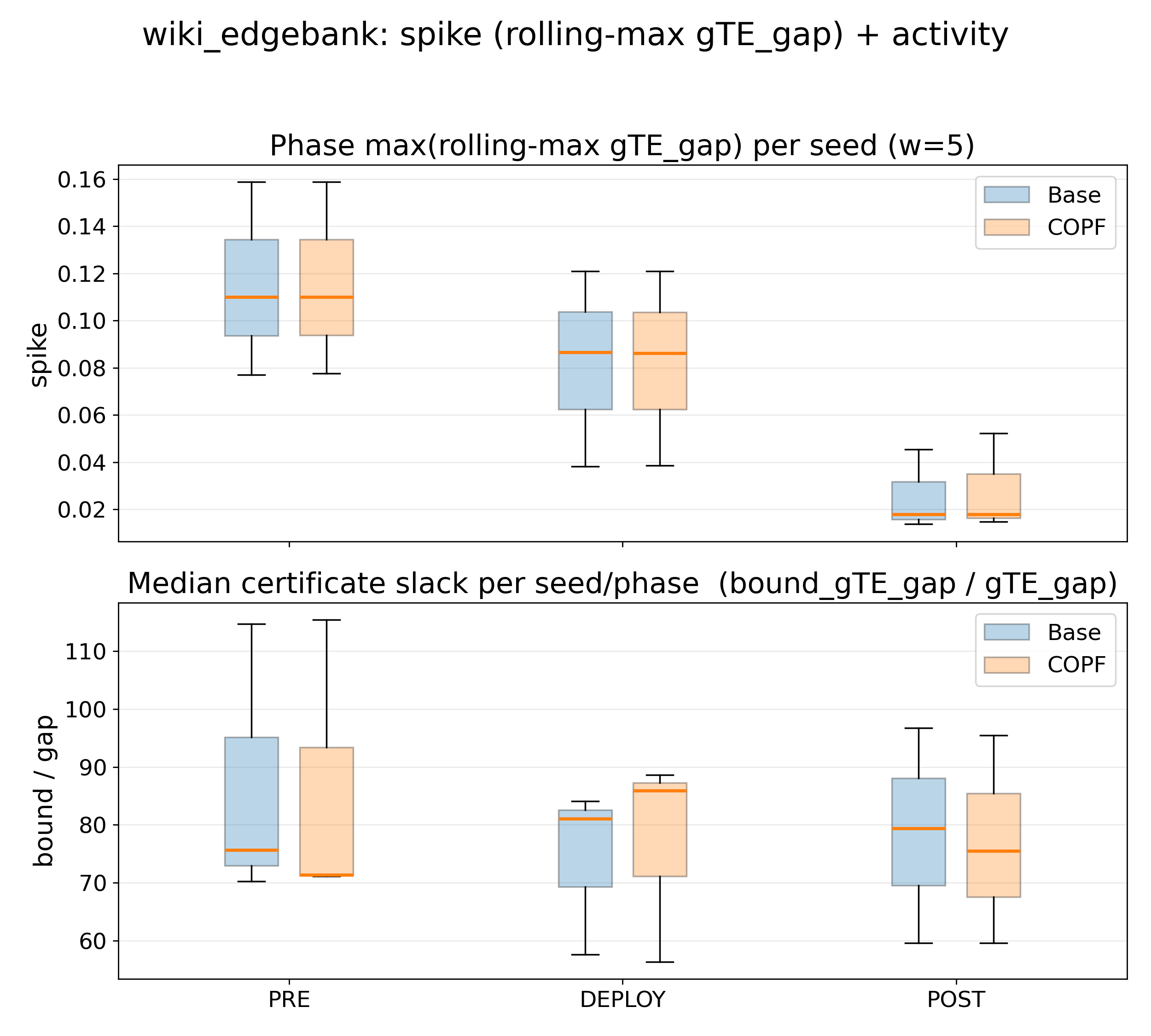}
  \subcaption{\textbf{TGBL-Wiki (EdgeBank).} Base and Base{+}COPF are nearly identical, reflecting a non-binding controller.}
  \label{fig:app_spike_wiki_edgebank}
\end{minipage}\hfill
\begin{minipage}[t]{0.32\textwidth}\vspace{0pt}
  \centering
  \includegraphics[width=\linewidth]{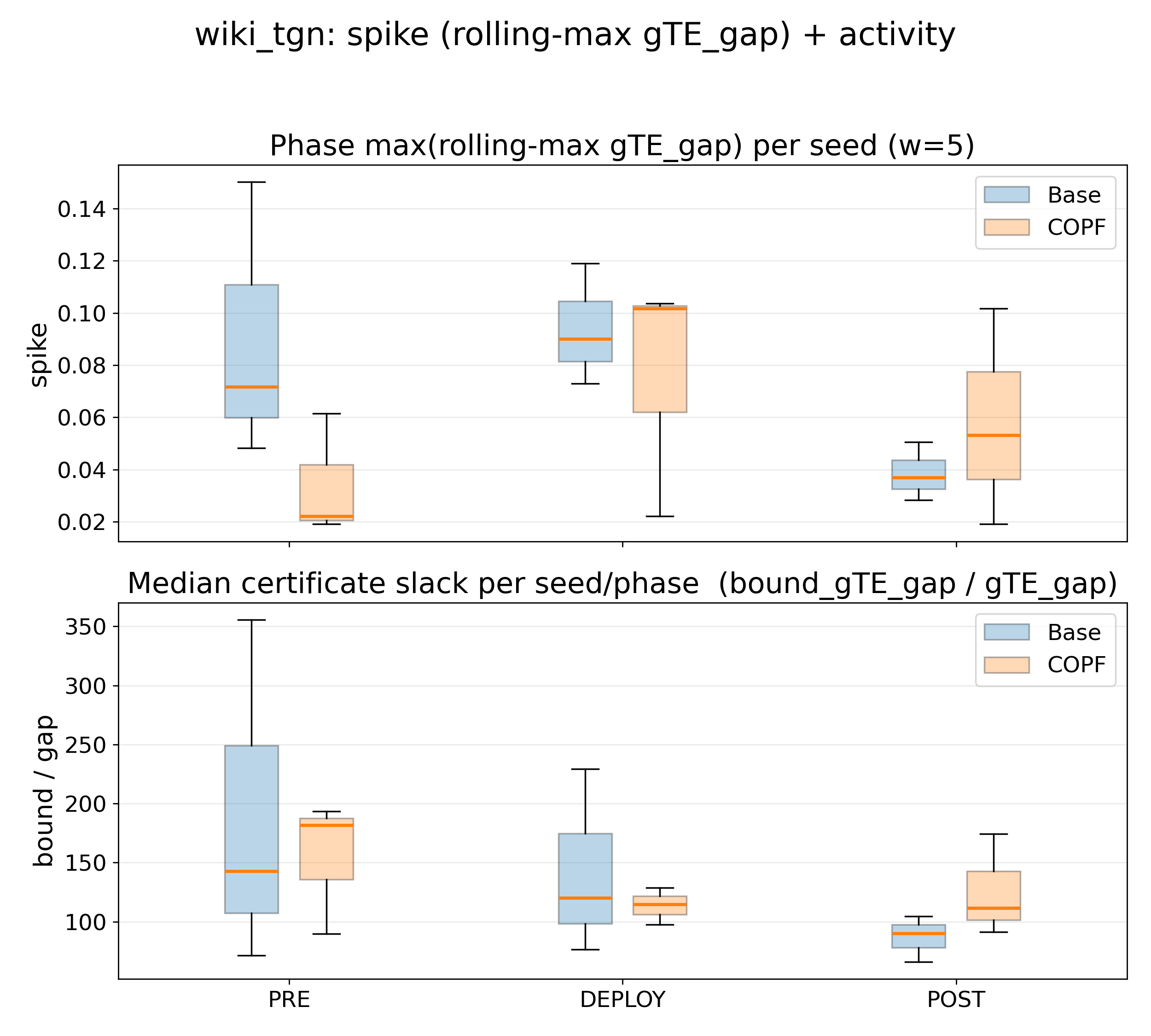}
  \subcaption{\textbf{TGBL-Wiki (TGN).} PRE spikes reduce; later phases are mixed; slack remains conservative.}
  \label{fig:app_spike_wiki_tgn}
\end{minipage}

\caption{\textbf{Spike statistics and certificate slack (Base vs.\ COPF).}
For each setting, the \emph{top panel} reports phase-wise spikes via the per-seed maximum of the rolling maximum of
\(\gTEgap\) over \(w=5\) logged checkpoints.
The \emph{bottom panel} reports median certificate slack,
\(\widehat B_{\mathrm{TE}}/(\gTEgap+\varepsilon_{\mathrm{num}})\),
per seed and phase, where \(\widehat B_{\mathrm{TE}}\) is the logged Residual-OI bound and
\(\varepsilon_{\mathrm{num}}=10^{-12}\) is used only for numerical stability.}
\label{fig:app_spike_slack_triptych}
\end{figure}

% Appendix insertion, after the current Appendix H.
\section{Propensity Logging Sensitivity}
\label{app:propensity}

We conduct propensity-estimation stress tests on Synthetic+GraphMixer and TGBL-Wiki+TGN. The stress test varies the logged slate-entry propensity used by the counterfactual estimator while keeping the online prequential protocol, candidate stream, exposure policy, and evaluation windows fixed. We test whether moderate propensity perturbations induce abrupt failure or instead yield the continuous degradation predicted by the nuisance term $\varepsilon_e$ in Lemma~\ref{lem:dr} and Theorem~\ref{thm:transfer}.

Figure~\ref{fig:propensity-sensitivity} reports two complementary sensitivity analyses. The IPS analysis isolates the effect of propensity perturbations under a fixed estimator family, while the GA-DR analysis evaluates the same pattern under the estimator used by COPF.

The fixed-IPS results show that moderate propensity perturbations do not qualitatively reverse the fairness or utility conclusions. Synthetic+GraphMixer remains highly stable across perturbations: Deploy and Post NDCG@10 are nearly invariant, while mean $\gTEgap$ changes only slightly (Deploy: $0.001657$--$0.001876$; Post: $0.001415$--$0.001656$). TGBL-Wiki+TGN exhibits larger but still smooth variation rather than abrupt failure: Deploy NDCG@10 ranges from $0.213194$ to $0.225652$ and mean $\gTEgap$ ranges from $0.001608$ to $0.002166$.

The GA-DR results show the same qualitative behavior across oracle/Monte Carlo and clipping variants. Synthetic+GraphMixer again varies smoothly, with Deploy mean $\gTEgap$ ranging from $0.005486$ to $0.009226$, while TGBL-Wiki+TGN shows larger but still continuous degradation (Deploy mean $\gTEgap$: $0.010825$--$0.015635$). Overall, the results are consistent with continuous nuisance-error dependence rather than catastrophic sensitivity to moderate propensity misspecification.

\begin{figure*}[t]
    \centering
    \begin{minipage}{0.49\textwidth}
        \centering
        \includegraphics[width=\textwidth]{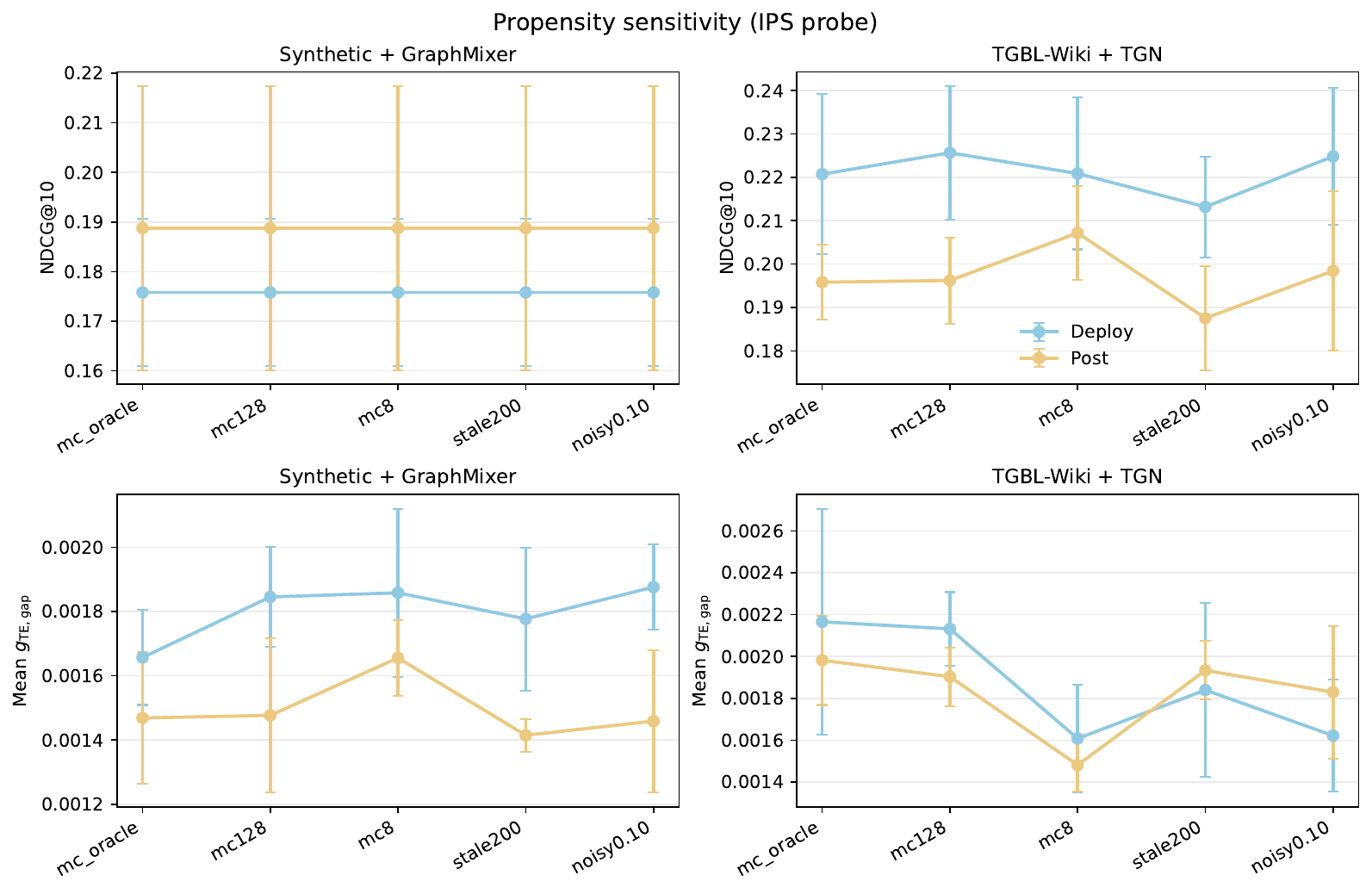}
        \vspace{-1mm}
        \centerline{\small (a) Fixed IPS probe.}
    \end{minipage}
    \hfill
    \begin{minipage}{0.49\textwidth}
        \centering
        \includegraphics[width=\textwidth]{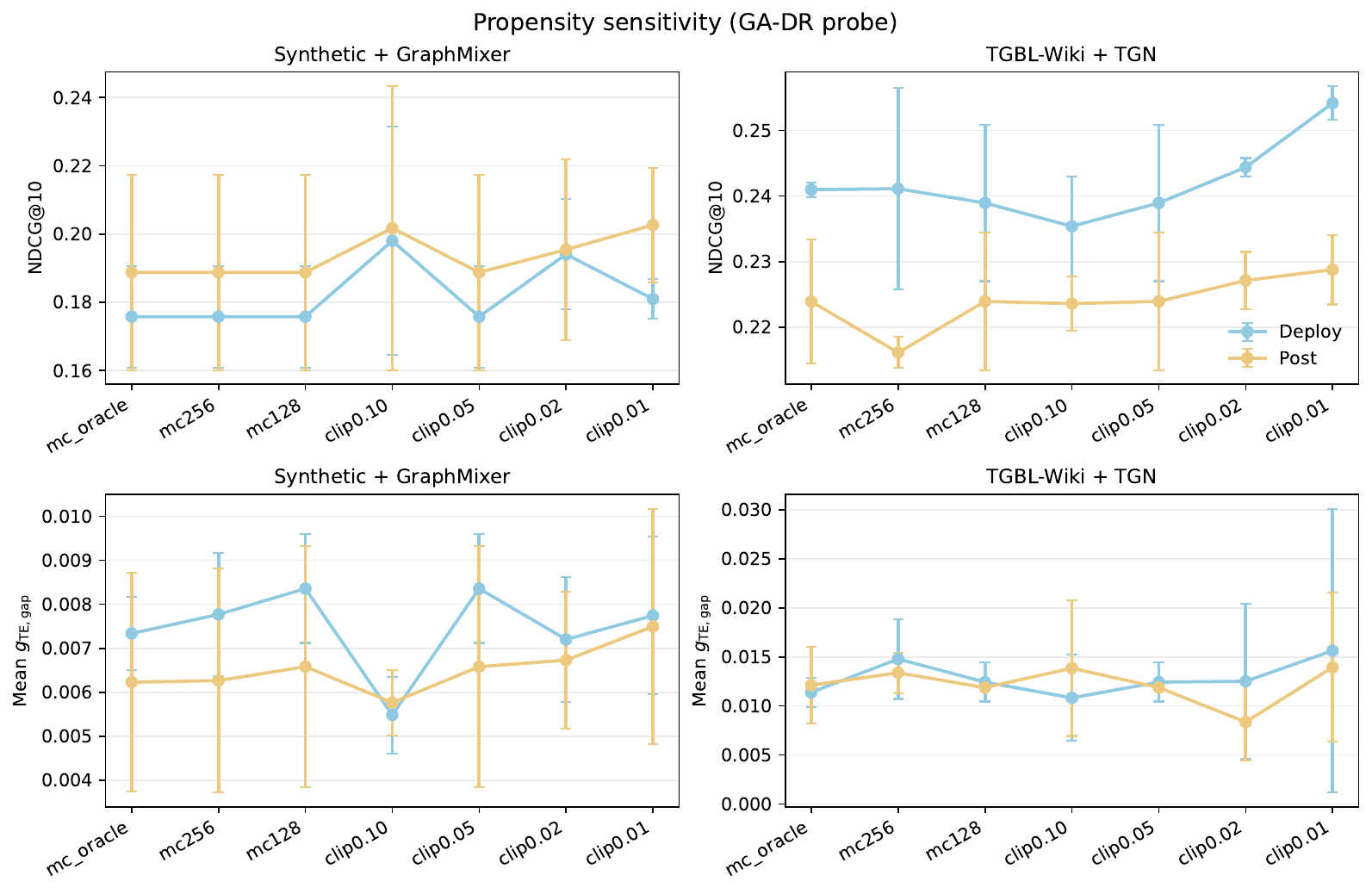}
        \vspace{-1mm}
        \centerline{\small (b) GA-DR probe over available matched modes.}
    \end{minipage}
    \caption{Sensitivity to propensity estimation under matched-run probes. Panel (a) fixes the estimator to IPS and varies oracle, lower-budget Monte Carlo, stale, and noisy logged propensities. Panel (b) reports the available GA-DR runs over oracle/Monte Carlo and propensity-clipping variants. Lines show Deploy and Post phase means; error bars show standard deviation across seeds when multiple seeds are available. The observed changes are continuous rather than abrupt, consistent with the $\varepsilon_e$-dependent nuisance terms in the GA-DR and Residual-OI transfer bounds.}
    \label{fig:propensity-sensitivity}
\end{figure*}

\section{Target-Pair Feasibility of COPF}
\label{app:target_pair_feasibility}

We include a feasibility diagnostic for the two active COPF constraints: the
treatment-effect parity constraint and the minimum-effect guardrail. For a
decision-layer configuration \(\theta\), let
\[
    \widehat{\tau}_s(\theta)
    =
    \widehat{\mathbb{E}}_{\mathrm{GA\text{-}DR}}
    \left[
        Y^{(1)} - Y^{(0)} \mid A=s
    \right]
\]
be the estimated group treatment effect. COPF controls the treatment-effect
gap
\[
    \ghatTEgap(\theta)
    =
    \max_{s,s'}
    \left|
        \widehat{\tau}_s(\theta)
        -
        \widehat{\tau}_{s'}(\theta)
    \right|,
\]
while also enforcing the minimum-effect guardrail
\[
    \ghatMin(\theta;\tau_{\min})
    =
    \max_s
    \left[
        \tau_{\min}
        -
        \widehat{\tau}_s(\theta)
    \right]_+ .
\]
The former controls between-group disparity in exposure-induced benefit,
whereas the latter prevents satisfying parity by suppressing benefit for all
groups. Thus, the two constraints are complementary; infeasibility arises when
the numerical target pair is outside the attainable frontier of the deployed
decision layer.

Formally, let \(\Theta\) denote the set of reachable decision-layer
configurations. For a target pair
\((\rho_{\mathrm{TE}},\tau_{\min})\), define the empirical feasible set
\[
    \widehat{\mathcal C}
    (\rho_{\mathrm{TE}},\tau_{\min})
    =
    \left\{
    \theta \in \Theta :
    \ghatTEgap(\theta)
    \le
    \rho_{\mathrm{TE}},
    \quad
    \min_s \widehat{\tau}_s(\theta)
    \ge
    \tau_{\min}
    \right\}.
\]
The target pair is feasible if
\[
    \widehat{\mathcal C}
    (\rho_{\mathrm{TE}},\tau_{\min})
    \neq
    \varnothing .
\]
Equivalently, feasibility can be viewed through the attainable frontier
\[
    \widehat \phi(\rho_{\mathrm{TE}})
    =
    \max_{\theta\in\Theta:
        \ghatTEgap(\theta)
        \le
        \rho_{\mathrm{TE}}}
    \min_s \widehat{\tau}_s(\theta).
\]
A pair is feasible when
\[
    \tau_{\min}
    \le
    \widehat \phi(\rho_{\mathrm{TE}}),
\]
up to the finite resolution of the sweep. This frontier view separates the
conceptual compatibility of the constraints from the practical problem of
choosing numerically attainable bounds.

For the empirical diagnostic, we evaluate feasibility on rolling Deploy/Post audit windows. Let
\(\mathcal{W}_{\mathrm{DP}}\) be the set of such windows. We report
\[
\widehat{R}_{\mathrm{feas}}(\rho_{\mathrm{TE}},\tau_{\min})
= \frac{1}{|\mathcal{W}_{\mathrm{DP}}|}
\sum_{w\in\mathcal{W}_{\mathrm{DP}}}
\mathbf{1}\!
\left\{
\ghatTEgapw\le \rho_{\mathrm{TE}},\quad
\ghatMin_w\le 0
\right\}.
\]
This is an empirical target-setting diagnostic, not an exhaustive hyperparameter search. We run the diagnostic on Synthetic-Bipartite with GraphMixer, set \(\tau_{\min}\) to selected pre-deployment treatment-effect quantiles, and sweep
\(\rho_{\mathrm{TE}}\in\{0.005,0.01,0.02\}\). Figure~\ref{fig:target_pair_feasibility} reports the completed scale-1.0 cells for the plotted target quantiles
\(\tau_{\min}\in\{q5,q0,q20,q40\}\), using three seeds per plotted cell. Cells not completed in the finite sweep are omitted rather than imputed.

\begin{figure}[t]
  \centering
  \includegraphics[width=\linewidth]{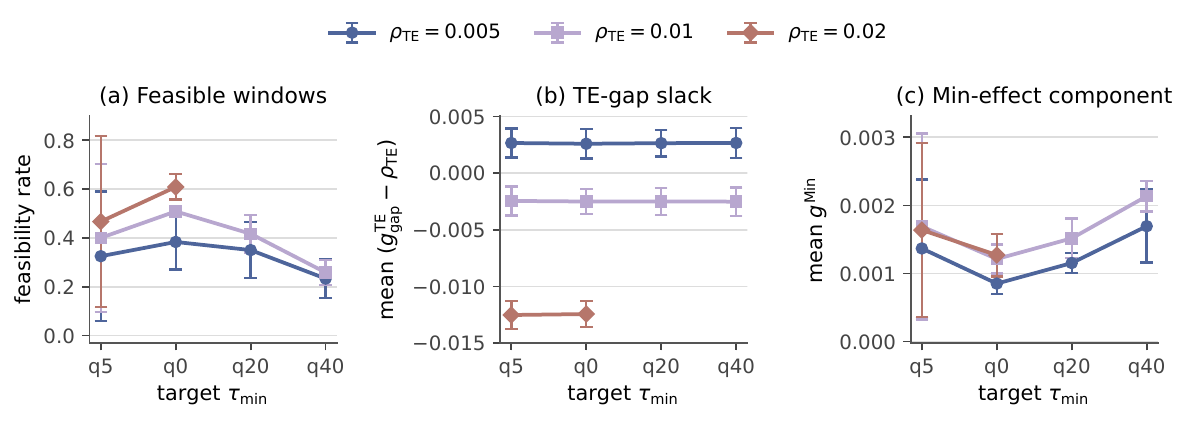}
  \caption{Target-pair feasibility diagnostic on Synthetic-Bipartite with GraphMixer at offset scale 1.0. Panel~(a) reports the Deploy/Post feasible-window rate. Panel~(b) reports the mean treatment-effect slack \(\gTEgap-\rho_{\mathrm{TE}}\); the dashed zero line denotes exact satisfaction of the TE-gap target, and negative values satisfy the target. Panel~(c) reports the mean minimum-effect violation \(\gMin\), with zero denoting no violation. Error bars denote standard deviation across seeds. We plot completed non-imputed cells with three seeds for \(\tau_{\min}\in\{q5,q0,q20,q40\}\) and \(\rho_{\mathrm{TE}}\in\{0.005,0.01,0.02\}\); unavailable cells, including the \(\rho_{\mathrm{TE}}=0.02\) cells for \(q20\) and \(q40\) at this scale, are omitted rather than filled in.}
  \label{fig:target_pair_feasibility}
\end{figure}

Figure~\ref{fig:target_pair_feasibility} shows the expected frontier behavior. Relaxing \(\rho_{\mathrm{TE}}\) moves the TE-gap slack below zero and increases the feasible-window rate. Conversely, increasing \(\tau_{\min}\) from the baseline targets toward stricter nonnegative targets reduces feasibility, mainly because the minimum-effect guardrail becomes harder to satisfy. This supports the frontier interpretation: infeasibility is caused by choosing a numerical target pair outside the attainable empirical frontier, rather than by an inherent contradiction between treatment-effect parity and the minimum-effect guardrail.

The same frontier view gives a simple repair rule for target selection. If
\(\tau_{\min}\) is fixed, the smallest TE-gap tolerance supported by the
decision layer is
\[
    \widehat \rho^{\star}_{\mathrm{TE}}(\tau_{\min})
    =
    \min_{\theta\in\Theta:
        \min_s \widehat{\tau}_s(\theta)
        \ge
        \tau_{\min}}
    \ghatTEgap(\theta).
\]
If \(\rho_{\mathrm{TE}}\) is fixed, the largest minimum-effect target supported
by the decision layer is
\[
    \widehat \tau^{\star}_{\min}(\rho_{\mathrm{TE}})
    =
    \max_{\theta\in\Theta:
        \ghatTEgap(\theta)
        \le
        \rho_{\mathrm{TE}}}
    \min_s \widehat{\tau}_s(\theta).
\]
In the finite sweep, these quantities are approximated over the completed
grid cells. Thus, when a proposed pair
\((\rho_{\mathrm{TE}},\tau_{\min})\) is too strict, COPF can report the
attainable frontier and relax one target while keeping the other fixed,
rather than dropping either constraint.

\end{document}